\newcommand{\argmin}{\arg\,\min}
\newcommand{\R}{\mathbb{R}}
\newcommand{\N}{\mathbb{N}}
\newcommand{\E}{\mathbb{E}}
\newcommand{\bb}[1]{\boldsymbol{#1}}
\newcommand{\var}{\text{Var}}\usepackage[utf8]{inputenc} % allow utf-8 input
\newcommand{\upd}[1]{\textcolor{black}{#1}}
\definecolor{babypink}{rgb}{1.0, 0.25, 0.5}
\newcommand{\ohl}[1]{\textcolor{babypink}{#1}} 
\newcommand{\ignore}[1]{}
\numberwithin{equation}{section}
\theoremstyle{plain}
\newtheorem{theorem}{Theorem}[section]
\newtheorem{proposition}{Proposition}
\newtheorem{lemma}[theorem]{Lemma}
\newtheorem{assumption}{Assumption}[section]
\newtheorem{definition}[theorem]{Definition}
\theoremstyle{remark}
\newtheorem*{example}{Example}
\newtheorem{remark}{Remark}[section]
\theoremstyle{remark}
\def\IE{\mathbb{E}}
\def\R{\mathbb{R}}
\def\N{\mathbb{N}}
\newcommand{\abs}[1]{\left\lvert #1 \right\rvert}
\newcommand{\dash}{^{\prime}}
\newcommand{\bbhat}[1]{\widehat{\boldsymbol{#1}}}
\newcommand{\IP}{\mathbb{P}}
\newcommand{\Var}{\mathrm{Var}}
\newcommand{\fancyname}{\texttt{WISER}}
\begin{document}

\def\spacingset#1{\renewcommand{\baselinestretch}%
{#1}\small\normalsize} \spacingset{1}

%%%%%%%%%%%%%%%%%%%%%%%%%%%%%%%%%%%%%%%%%%%%%%%%%%%%%%%%%%%%%%%%%%%%%%%%%%%%%%

\title{Fast segmentation of watermarked texts from large language models through an epidemic change-point framework}

\author{%
\texorpdfstring{%
\begin{tabular}{ccc}
Soham Bonnerjee & Subhrajyoty Roy & Sayar Karmakar \\[0.35em]
\small University of Chicago
&
\small Washington University in St. Louis
&
\small University of Florida
\end{tabular}%
}{Soham Bonnerjee, Subhrajyoty Roy, Sayar Karmakar}%
}

\date{}

\maketitle

%\date{}

\maketitle
\bigskip
\begin{abstract} 
With the growing use of large language models, concerns over content authenticity have spurred a variety of watermarking schemes. These schemes use secret keys to detect machine-generated text while remaining imperceptible to readers. Detection typically reduces to statistical hypothesis testing for the presence of watermarks, a topic that is now well studied. In contrast, the finer-grained task of localizing which segments of a text are watermarked is much less explored; existing approaches often lack scalability or guarantees robust to paraphrasing and post-editing. We bring a new perspective to this segmentation problem through the lens of \textit{epidemic change-points} and, by exploiting this connection, propose \texttt{WISER}, a novel and computationally efficient watermark segmentation algorithm. We establish finite-sample error bounds and consistency for detecting multiple watermarked segments in a single text. Complementing these theoretical results, our extensive numerical experiments show that \texttt{WISER} outperforms state-of-the-art baseline methods, both in terms of computational speed as well as accuracy, on various benchmark datasets embedded with diverse watermarking schemes. Together, these theoretical and empirical results position \texttt{WISER} as an effective tool for watermark localization and illustrate how classical statistical ideas can yield theoretically valid and computationally efficient solutions to a modern problem of immediate importance.
\end{abstract}
\noindent%
{\it Keywords:} Watermarking, Large language model, Change-point, Epidemic change-point
%\vfill

%\newpage
\spacingset{1.2} % DON'T change the spacing!

\section{Introduction} 
%\vspace{-0.06 in}
Recent years have seen widespread popularity and adoption of Large Language Models (LLM) in areas such as media, education, healthcare, and finance- domains where content creation, ownership, and automation~\citep{touvron2023llama, achiam2023gpt} occupy central importance. However, an unfortunate consequence of the exponential ascent of LLMs has been an increased propagation of synthetic texts across the internet. This has raised significant security and legal concerns regarding privacy, content authenticity, and copyright infringement over multiple domains \citep{w1,w2,w3,w4,w5,w6,w7,w8}. In particular, the ability of LLMs to generate a large volume of texts makes them vulnerable to intended or unintended misuse by entities, often in violation of the governing guidelines to achieve potential plagiarism or deceit \citep{ahmed2021detecting, lee2023language}. For example, recently, the use of LLM-generated text without proper attribution has evolved into a full-fledged quagmire in the lawsuit between The New York Times and OpenAI \citep{grynbaum2023times}. In the same mold, our colleagues in academia, and educators more generally, often face a perhaps legally less challenging but equally important issue: AI-assisted education. The use of AI may, prima facie, be encouraged in many low-stakes situations. However, an increased proliferation of LLM-generated texts in critical assessments not only constitutes a malpractice, but also deprives students of the potential to embark upon an important learning curve by themselves, while simultaneously propagating unfair advantages to more privileged students who have access to newer LLM models \citep{w5, wang2024large, darvishi2024impact}.

Such concerns were initially addressed by attempting to identify LLM-generated texts via specific patterns or properties of the said texts, such as cross-entropy or perplexity \citep{mitchell2023detectgpt, zerogpt2024, radvand2025zero}. However, the shortcomings of this approach have become increasingly evident as more and more language models gain the ability to mimic the quirks of a human-generated text. Open-access, publicly funded large language models have been conceptualized as another alternative, mitigating strategy \citep{akiki2022bigscience, workshop2022bloom, shrestha2023building, li2023starcoder,  ustun2024aya}. In a different direction, and probably most relevant with regards to fraud detection in education,  ``Watermarking methods''  have been proposed \citep{w1-1, w1-2}, and widely adopted \citep{w1-3, w1-4} as a detection mechanism. Watermarking schemes primarily exploit the tokenization structure of large language models. In principle, given a sequence of tokens $\omega_1 \ldots \omega_{t-1}$, the LLM generates $\omega_t$ from a multinomial distribution $\IP_t$ over the dictionary $\mathcal{W}$, where $\IP_t$, the \textit{Next Token Distribution} (NTP) is allowed to depend on previous tokens $\omega_1, \ldots, \omega_{t-1}$. Then, watermarking is used to embed statistical signals into LLM-generated tokens, which remain largely unnoticeable without additional information. The key insight behind watermark-based detection schemes is the use of the underlying randomness of LLM-generated outputs by incorporating pseudo-randomness into the text-generation process. When a third-party user publishes text potentially containing LLM-generated outputs with watermarks,  the coupling between the LLM-generated text and the pseudo-random numbers serves as a signal that can be used for detecting the watermark. Crucially, the properties of watermarks allow the user to detect machine-generated texts without requiring knowledge of any particular properties of the text or the LLM. For example, it is conceivable that the academic institution penalizing LLM-generated texts may gain access to the pseudo-random numbers from the particular LLM they deploy in their network system used by the students. We emphasize that the knowledge of these pseudo-random numbers is imperative for the detection mechanism to work, making the effect of watermarking untraceable to general users, who usually do not have access to such ``keys''. 

This usefulness has stimulated a plethora of research proposing myriad watermarking schemes \citep{w2-1, w2-3, w2-4, w2-5, w2-6, w2-7, w2-8, w2-9, w2-10}. Concurrently, much attention has landed on the pursuit of efficient, statistically valid detection schemes \citep{li2025statistical, w2-2, w3-1, w3-2, w3-0, cai2025statistical}, as well as on the more general problems of machine-generated text detection or model equality testing \citep{lavergne2008detecting, solaiman2019release, gehrmann2019gltr, su2023detectllm, mitchell2023detectgpt, w3-2,  vasilatos2023howkgpt, hans2024spotting, li2025statistical, w2-2, w3-1, gao2025model, song2025deep, radvand2025zero}. These detection schemes usually rely on the knowledge of the pseudo-random keys or deterministic hash functions to perform a composite-vs-composite test of hypotheses:
$H_0:$ the entire text $\omega_1\ldots \omega_n$ is unwatermarked (i.e. human generated), vs $H_1:$ the entire text is watermarked or $H_1':$ the text contains watermarked segments \citep{mitchell2023detectgpt, bao2024fastdetectgpt, li2025statistical, zhou2025adadetectgpt}. Usually, such tests depend on the \textit{pivot statistic} $Y_t$s, which are formed from the token $\omega_t$ and the watermarking keys $\zeta_t$. The virtues of the pivot statistics stem from their ancillarity with respect to the next token distributions $\{\IP_t\}$, allowing it to be used without requiring specific knowledge about the LLM architecture or its NTP distributions. Recent advances in this direction have started to shed light on detecting more sophisticated modifications of watermarking by allowing arbitrary data misappropriation \cite{cai2025statistical} and arbitrary modifications such as deletion and replacements \cite{w3-0, xie2025watermark}. However, somewhat curiously, the relatively harder and more fine-grained problem of precisely localizing the watermarked segments from an input text has received only sparse attention. Apart from WinMax \citep{w2-1}, which focuses only on Red-Green watermarking, to the best of our knowledge, the only algorithms tackling the segmentation problem in its generality are \cite{li2024segmenting, pan2025waterseeker} and \cite{zhao2024efficiently}. Most of these algorithms are prohibitively slow and thus unsuited for long texts. Moreover, to the best of our knowledge, no such algorithm designed to efficiently identify multiple watermarked segments has sufficient theoretical validity. This gap in the literature is also pointed out by \cite{li2025optimal}. 

In this paper, we propose \texttt{WISER}\ (\textbf{W}atermark \textbf{I}dentification via \textbf{S}egmenting \textbf{E}pidemic \textbf{R}egions): a \textit{first-of-its-kind} computationally efficient and provably consistent algorithm to locate multiple watermarked segments from mixed-source input texts. Our method is inspired from the classical notion of \textit{epidemic} change-points; this perspective is instrumental for both the theoretical validity and computational efficiency of our algorithm. We summarize our main contributions as follows.  

Firstly, in \S\ref{se:perspective}, we introduce a novel, \textit{epidemic change-point} perspective on the watermark segmentation problem by exploiting an inherent property of the watermarking schemes. In particular, research dealing with testing for the existence of watermarks essentially hinges upon a score function $h$ applied over the pivot statistics $Y_t$, which usually has the property that $\IE[h(Y_t)]$ is much larger for the watermarked tokens than for unwatermarked ones. This property can be visualized in Figure \ref{fig:comparison_plot}, and is elaborated on with examples in Section \ref{se:elevated}. While this \textit{elevated alternatives property} (Assumption \ref{ass:alt-mean}) is crucial in achieving significant power for the testing problems, it has not been formally described and analyzed in this context. However, for a localization problem, this property readily relates it to a separate classical problem of \textit{epidemic} change-point detection. Roughly speaking, an epidemic change-point refers to a situation where a stochastic process deviates in one of its features in an interval and returns to the baseline. In simple words, the changes in the related features occur in interval patches, and outside these patches the process behaves in an i.i.d. or stationary fashion. Since the \upd{score-transformed pivot-statistics $X_t = h(Y_t)$} exhibit very similar behavior in watermarked tokens, this interpretation of patches as epidemic change-point intervals enables us to re-purpose some of the classical insights of change-point literature into a state-of-the-art algorithm to provably locate them and thus solve a modern problem in the area of AI-moderation. Even though \cite{li2024segmenting, li2025adaptive} also relate the watermark segmentation problem to a change-point localization problem, their insights are rather limited, since they identify the end-points of watermarked patches as distinct change-points, which does not respect the nature of watermarked tokens appearing as intervals. 

\begin{figure}[htbp]
    \centering
    \begin{minipage}{0.48\linewidth}
        \centering
        \includegraphics[width=\linewidth]{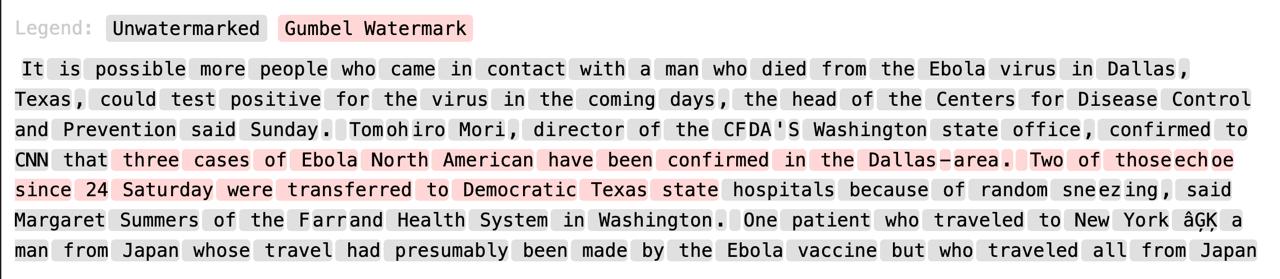}
        % \caption{Your caption for the first image} % optional
        % \label{fig:text_example} % optional
    \end{minipage}%
    \hfill
    \begin{minipage}{0.48\linewidth}
        \centering
        \includegraphics[width=\linewidth]{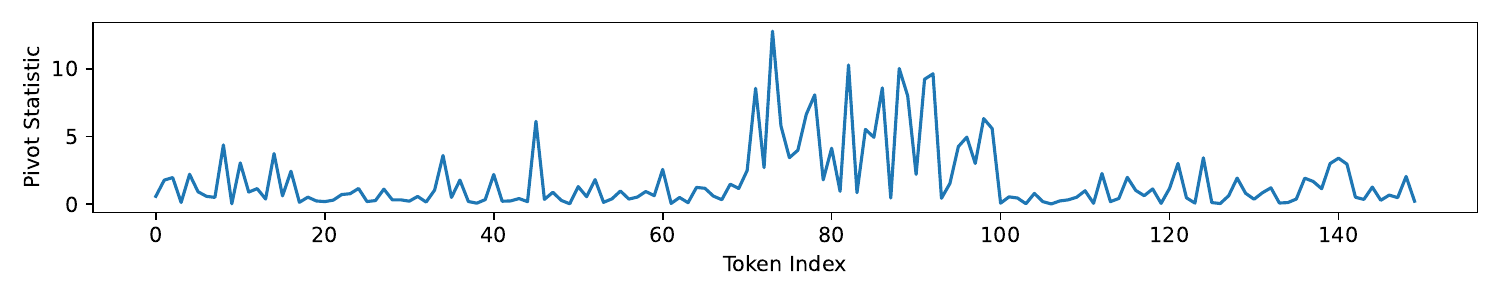}
        % \caption{Your caption for the second image} % optional
        % \label{fig:wm_diagram_pivot} % optional
    \end{minipage}
    \caption{(Left) A text with watermarked tokens $70$-$100$. (Right) The corresponding plot of pivot statistics vs. token.}
    \label{fig:comparison_plot}
\end{figure}
%\sk{In this caption, we call it mixed-source. Can be confusing?}
Secondly, in \S\ref{se:multiple-wm}, we transform the epidemic change-point insight into a tractable algorithm, tuned to the peculiarities of the watermarking framework. Specifically, in contrast to the usual setting of independent or stationary random variables in the epidemic change-point literature \citep{levin1985cusum, huskova1995testing, chen2016detecting}, we work with a highly non-stationary setting, devoid of any direct regularity assumption for watermarked-tokens. Motivated by several studies on irregular change-point analysis \citep{kley2024change}, we devise the \fancyname\ algorithm, which is valid irrespective of the LLMs or NTPs. The algorithm is illustrated schematically in Figure \ref{fig:algo}, and also discussed in the Appendix \S\ref{se:algo}. In principle, our algorithm is simple to describe. The \textit{epidemic} interpretation produces a natural estimate for the case of a single watermarked segment, and the general case of multiple watermarked segments can then be dealt with by appropriately restricting the search spaces for each of these segments. The number of such segments is estimated by a series of carefully orchestrated steps (such as block-based tests, and a threshold-based deletion of false-positive blocks), and we further restrict the search space to reduce the computational burden. To summarize, our algorithm simply works with the pivot statistics and the elevated alternatives property, and brings insights from the epidemic change-point theory to tackle the potentially arbitrary non-stationary dependence typically displayed by the pivot statistics corresponding to the watermarked tokens.

Thirdly, in \S\ref{se:multiple-wm} we rigorously establish the theoretical validity of our algorithm in very general scenarios. The theoretical validity of the \texttt{WISER}\ segmentation algorithm arises as an automatic consequence of our perspective. Additionally, we motivate the local estimate used in the last stage of \texttt{WISER}\ by proving in Theorem \ref{thm:single-watermark} that it is consistent in the single watermarked-segment case. To the best of our knowledge, \texttt{WISER}\ is the \textit{first watermark segmentation algorithm with complete theoretical guarantees in the most general case.} It is important to note that the regime of non-stationarity in one or more epidemic patches is different from the usual multiple change-point regime with an even number of breakpoints due to how we perceive signal and noise in a statistical detection problem. Moreover, as we already mentioned above, there is an inherent irregularity intrinsic to how watermarked texts are generated. Our theoretical results settle these issues in a comprehensive fashion. Part of our proof techniques are based on moment, and cumulant generating functions, as well as \cite{danskin1967}'s results, which are novel to both change-point or watermark literature to the best of our knowledge, and these tools can be of independent interest.

Finally, the ingenuity of our algorithm lies not only in its amalgamation of different ideas from statistics, but also in its practicality. In the numerical experiments \S\ref{se:simulation}-\ref{se:shortened-simu}, the theoretical guarantees are reflected in \texttt{WISER}'s superiority over other competitive methods across different watermarking schemes and different language models. In the Appendix \S\ref{se:simu}, we provide additional and extensive numerical experiments to further reinforce the effectiveness of our algorithm, as well as highlighting the novelty of our algorithm compared to the other algorithms in the literature. Another key aspect of its enhanced performance is its speed. \texttt{WISER}\ is specifically designed with many localized steps that reduce its run-time, thereby making it, to the best of our knowledge, the only $O(n)$ watermark segmentation algorithm with provable theoretical guarantees.  

\subsection{Notations}
%\vspace{-0.06 in}

We delineate some of the notations to be used throughout this paper. The set $\{1, \ldots, n\}$ is denoted by $[n]$. The $d$-dimensional Euclidean space is $\R^d$. For a vector $a \in \R^d$, $|a|$ denotes its Euclidean norm. For a random vector $X \in \R^d$, we denote $\|X\|:=\sqrt{\IE[|X|^2]}$. Throughout the paper, we use the usual Landau notation $O(\cdot), o(\cdot)$ for sequences of real numbers. The analogous stochastic versions, corresponding to stochastic boundedness and in-probability, convergence, are denoted by $O_\IP(\cdot)$ and $o_\IP(\cdot)$ respectively. We also write $a_n \lesssim b_n$ if $a_n \le C b_n$ for some constant $C > 0$, and $a_n \asymp b_n$ if $C_1 b_n \le a_n \le C_2 b_n$ for some constants $C_1, C_2 > 0$. Finally, $\mathcal{L}(X)$ denotes the law of $X$. 

\section{Watermark segmentation: epidemic change-point perspective}\label{se:perspective}

Before we introduce our novel perspective in the context of locating watermarked segments, it is instrumental to establish a consistent framework of watermarking in LLM-generated texts. Let $\mathcal{W}$ denote the dictionary, enumerated as $1,2, \ldots, |\mathcal{W}|$. Given a text input in a tokenized form $\omega_1\ldots \omega_{t-1}$, a watermarked LLM generates the next token $\omega_t$ in an autoregressive manner as $\omega_t = S(\IP_t, \zeta_t)$, where $\IP_t=(P_{t,w})_{w=1}^{|\mathcal{W}|}$ is the next token probability (NTP) distribution at step $t$; $S$ is a deterministic decoder function, and $\zeta_t$ is the pseudo-random variable at $t$. We grant Assumption \ref{ass:ind-of-pseudo} for the $\zeta_t$'s. 
\begin{assumption}\label{ass:ind-of-pseudo}
    For any text $\omega_{1:n}$, there exists corresponding \upd{i.i.d.}  pseudo-random variables $\zeta_{1:n}$ available to the verifier, such that if the token $\omega_t$ at step $t$ is unwatermarked, then $\omega_t$ and $\zeta_t$ are independent conditional on $\omega_{1:(t-1)}$.
\end{assumption}
\noindent It may seem that this assumption invalidates human edits after the LLM generates a text. However, in Appendix \S\ref{se:mixed-source}, we discuss how Assumption \ref{ass:ind-of-pseudo} applies to the mixed-source texts allowing for human edits.
\enlargethispage{2\baselineskip}
\subsection{Pivot statistics and elevated alternatives} \label{se:elevated}
Note that, a text $\omega_{1:n}$ with $K$ disjoint watermarked intervals $I_1, \ldots, I_K$, $I_j \subset [n]$ for $j\in [K]$, can be modeled as
\begin{equation}
    w_t \sim \begin{cases}
        \IP_t, & t \notin I_0:=\cup_{l=1}^K I_k,\\
        S(\IP_t, \zeta_t), & \text{ otherwise},
    \end{cases}
    \quad t = 1, 2, \dots, n.
\end{equation}
We are interested in the statistical problem of estimating the individual intervals $I_1, \ldots, I_K$ as well as $K$. Before proceeding further, it is appropriate to formally introduce the notion of pivot statistics. 
\begin{definition}\label{def:pivotal}
    $Y(\omega, \zeta)$ is called a pivot statistic if $\mathcal{L}(Y)$ is same for all $\omega \in \mathcal{W}$. 
\end{definition}
Pivot statistic has been extremely effective in providing statistically valid testing strategies for the existence of watermarks in mixed-source texts \citep{li2025statistical, w3-0, w3-1}, however, in what follows, we will demonstrate their effectiveness in aiding a localization algorithm. This effectiveness is a result of a simple property of the pivot statistics; they metamorphose the conditional independence of $\omega_t$ and $\zeta_t$ for unwatermarked tokens into $\IP_t$-independent distributions. Formally, this property is described in the following result.
\begin{lemma} \label{lemma:pivot-iid}
    If $S$ denotes the set of unwatermarked tokens, then $\{Y_t\}_{t \in S}$ are i.i.d.
\end{lemma}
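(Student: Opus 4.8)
The plan is to prove the lemma by a sequential (martingale-type) conditioning argument along the natural filtration of the autoregressive generation process. Write $S=\{t_1<t_2<\cdots<t_m\}$ for the un-watermarked indices and set $\mathcal{F}_t:=\sigma(\omega_{1:t},\zeta_{1:t})$. I would reduce the statement to two claims: (i) for every $t\in S$, the conditional law of $Y_t=Y(\omega_t,\zeta_t)$ given $\mathcal{F}_{t-1}$ equals the fixed pivotal law $\mathcal{L}(Y)$ (in particular $Y_t$ is independent of $\mathcal{F}_{t-1}$); and (ii) since each $Y_{t_j}$ with $j<k$ is $\mathcal{F}_{t_k-1}$-measurable, claim (i) propagates, via the tower property, into full mutual independence of $\{Y_t\}_{t\in S}$ with common marginal $\mathcal{L}(Y)$.

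The crux is (i). Fix $t\in S$. The NTP vector $P_t$ is a deterministic function of $\omega_{1:(t-1)}$, hence $\mathcal{F}_{t-1}$-measurable. By Assumption~\ref{ass:ind-of-pseudo}, $\omega_t$ and $\zeta_t$ are conditionally independent given $\omega_{1:(t-1)}$; combined with the pseudo-random variables being drawn afresh (so that further conditioning on $\zeta_{1:(t-1)}$ adds nothing relevant), the conditional law of the pair $(\omega_t,\zeta_t)$ given $\mathcal{F}_{t-1}$ is the product $P_t\otimes\mu_{\mathrm{ref}}$, where $\mu_{\mathrm{ref}}$ is the reference law of $\zeta$. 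Consequently the conditional law of $Y_t$ given $\mathcal{F}_{t-1}$ is the mixture $\int_{\mathcal{W}}\mathcal{L}\big(Y(w,\zeta)\big)\,P_t(dw)$ with $\zeta\sim\mu_{\mathrm{ref}}$. By Definition~\ref{def:pivotal}, $\mathcal{L}(Y(w,\zeta))=\mathcal{L}(Y)$ for every $w\in\mathcal{W}$, so the mixture collapses to $\mathcal{L}(Y)$ irrespective of $P_t$, giving $\mathcal{L}(Y_t\given\mathcal{F}_{t-1})=\mathcal{L}(Y)$. For (ii), take bounded measurable $g_1,\dots,g_m$, condition on $\mathcal{F}_{t_m-1}$, note $Y_{t_1},\dots,Y_{t_{m-1}}$ are $\mathcal{F}_{t_m-1}$-measurable while $\E[g_m(Y_{t_m})\given\mathcal{F}_{t_m-1}]=\int g_m\,d\mathcal{L}(Y)$ is a constant by (i), and peel off factors $k=m,m-1,\dots,1$ to get $\E\big[\prod_{j=1}^m g_j(Y_{t_j})\big]=\prod_{j=1}^m\int g_j\,d\mathcal{L}(Y)$, i.e. the $Y_t$'s are i.i.d. with law $\mathcal{L}(Y)$.

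The main obstacle is the bookkeeping around the interspersed watermarked tokens and the dependence structure of the $\zeta$'s: the filtration must be rich enough that every previously observed pivot $Y_{t_j}$ is measurable with respect to it (so the induction in (ii) closes), yet conditioning on it must not secretly couple $\omega_t$ with $\zeta_t$ (so (i) survives). The watermarked tokens $\omega_s$, $s<t$, lie in $\mathcal{F}_{t-1}$ and are themselves functions of the $\zeta_s$'s, so one must check that this extra information does not break the conditional independence invoked in (i); this is exactly why Assumption~\ref{ass:ind-of-pseudo} conditions on the \emph{entire} prefix $\omega_{1:(t-1)}$ rather than on the un-watermarked tokens only. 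A secondary, cosmetic point is how the pivotal property is instantiated: if one prefers to model $\zeta_t$ as a deterministic hash of the context, the argument should be run instead with $\mathcal{F}_{t-1}=\sigma(\omega_{1:(t-1)})$ together with the usual random-oracle convention for the hash, under which $\zeta_t$ again behaves like a fresh $\mu_{\mathrm{ref}}$-draw relative to $\omega_t$; the rest of the proof is unchanged.
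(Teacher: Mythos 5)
Your proposal is correct and follows essentially the same route as the paper's proof: Assumption~\ref{ass:ind-of-pseudo} plus the pivotal property (Definition~\ref{def:pivotal}) make the conditional law of $Y_t$ given the past equal to the fixed law $\mathcal{L}(Y)$, and the tower property then yields independence. If anything, your version is slightly more complete, since you carry out the full $m$-fold peeling for mutual independence and explicitly track the filtration containing the past $\zeta$'s, whereas the paper conditions only on the token prefix $\omega_{1:s}$ and writes out the factorization for a single pair $(Y_s, Y_t)$.
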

This ancillarity is heavily used in all the available statistical analysis of watermarked schemes; nevertheless, for the sake of completion we provide a proof in Appendix \S\ref{se:proof-add}. Lemma \ref{lemma:pivot-iid} enables us to use the notation $\mu_0:=\IE_0[Y(\omega, \zeta)]$ as the expectation of the pivot statistic $Y$ when the token $\omega \sim P$ is not watermarked; on the other hand, $\IE_{1,\IP}[Y(\omega, \zeta)]$ will denote expectation with respect to the randomness of $\zeta$ (i.e. conditional on $P$) when $\omega$ is watermarked according to $(S, \zeta)$-mechanism. Finally, we denote $Y_t:= Y_t(\omega_t, \zeta_t)$. Note that since $Y_t$ is a pivot statistic, so is $h(Y_t)$ for any \textit{score} function $h:\R\to \R$.  Usual tests for watermark detection look at $\sum_{t=1}^n h(Y_t)$ as a statistic for a one-sided test, and put considerable effort into constructing an effective score function $h$ \citep{w2-1, zhao2024efficiently, li2025statistical, cai2025statistical}. Intrinsic to this construction, even though never explicitly stated, is the assumption that $\IE_{1,\IP}[h(Y)]$ is usually larger than $\mu_0$ for any possible NTP distribution $P$. This hypothesis of ``elevated alternatives" can also be empirically viewed in Figure \ref{fig:comparison_plot}.

We formalize this observation with the following hypothesis.
\begin{assumption}[Elevated Alternatives Hypothesis] \label{ass:alt-mean}
     Assume that the next token distribution (NTP) $P$ belongs to a distribution class $\mathcal{P}$. Then, there exists $d>0$ such that $\inf_{\IP \in \mathcal{P}} \IE_{1,\IP}[h(Y)] \geq \mu_0+ d$, where $\IE_{1,\IP}(\cdot)=\IE_1[\cdot | P]$ denotes the unknown distribution of $h(Y)$ when watermarking is implemented on the NTP $P\in \mathcal{P}$ via $S(P, \cdot)$. 
\end{assumption}
This assumption entails that the pivot statistics are effective conditional on any possible NTP from the class $\mathcal{P}$, ruling out trivial cases such as  $Y(\omega, \zeta)\equiv \zeta$. \upd{In practice, $d$ would depend on pivot statistic function $h$, the distribution class $\mathcal{P}$ and watermarking scheme $S$, all of which are usually available to the verifier.} Most standard watermarking schemes satisfy Assumption \ref{ass:alt-mean}; see the following for some concrete examples.

\subsubsection{Examples to Assumption \ref{ass:alt-mean}}\label{se:eg}
In this section, we justify the elevated alternative hypothesis Assumption \ref{ass:alt-mean} by illustrating its occurrence through two popular watermarking schemes.

\begin{example}[Gumbel Watermark, \cite{w1-2}]\label{eg:gumbel}
Let $\zeta = (U_w)_{w \in \mathcal{W}}$ consist of $|\mathcal{W}|$ i.i.d.\ copies of $U(0,1)$. 
The Gumbel watermark is implemented as:
\begin{equation}
    S^{\text{gum}}(\zeta, P) := \arg\max_{w \in \mathcal{W}} \frac{\log U_w}{P_w},
    \label{eq:gumbel}
\end{equation}
The pivot statistic is taken as $Y_t=U_{t,\omega_t}$, $t\in [n]$. From Proposition \ref{lem:gumbel-example} in Appendix, when $\Delta=1/2$, $\inf_{\IP \in \mathcal{P}_{\Delta}}\IE_{1,\IP}[h(Y)] \geq \sum_{n=1}^{\infty}(\frac{1}{n} - \frac{1}{n+2})$, which, in light of $h(Y)\sim \operatorname{Exp}(1)$ entails that $d \geq 1/2$. 
\end{example}

\begin{example}[Inverse Transform Watermark, \cite{w2-2}]
 Consider an NTP distribution $P$ and a permutation $\pi: \mathcal{W} \mapsto S_{|\mathcal{W}|}$, where $S_{|\mathcal{W}|}$ is the group of permutations of $\{1,2,\ldots,|\mathcal{W}|\}$. Further consider 
the multinomial distribution $
\{P_{\pi^{-1}(w)}\}_{w=1}^{|\mathcal{W}|}$. The CDF of this distribution takes the form
\[
F(x;\pi) = \sum_{w' \in \mathcal{W}} P_{w'} \cdot \mathbf{1}_{\{\pi(w') \leq x\}}.
\]
Taking as input $U \sim U(0,1)$, the generalized inverse of this CDF is defined as
\[
F^{-1}(U;\pi) = \min\Bigl\{i : \sum_{w' \in \mathcal{W}} P_{w'} \cdot 
\mathbf{1}_{\{\pi(w') \leq i\}} \,\geq\, U\Bigr\},
\]
which, under the $H_0$ of no watermark, follows the multinomial distribution 
$P$ after applying the permutation $\pi$. The inverse transform watermark is defined as the decoder:
\[
\mathcal{S}^{\mathrm{inv}}(P,\zeta) := \pi^{-1}\!\bigl(F^{-1}(U;\pi)\bigr).
\]
Lemma 4.1 of \cite{li2025statistical} indicates that under the alternative, the distribution of $S^{\mathrm{inv}}$ is intricately interrelated with the NTP $P$. To make the verification of Assumption \ref{ass:alt-mean} tractable, we impose a few assumptions. Assume $|\mathcal{W}|\to \infty$, and with $P_{t, (i)}$ denoting the $i$-th largest co-ordinate of the probability vector $P_{t,(i)}$ for every token $t$ and $i\in [|\mathcal{W}|]$, we also assume 
\[\lim_{|\mathcal{W}|\to\infty} P_{t,(1)} = 1-\Delta 
\ \text{ and }\ 
\lim_{|\mathcal{W}|\to\infty} \log|\mathcal{W}| \cdot P_{t,(2)} = 0 .\]
Consider the pivot statistic
\[ Y_t = \bigl| U_t - \eta(\pi_t(w_t)) \bigr|, 
\ 
\eta(i) := \frac{i-1}{|\mathcal{W}|-1}.\]
Under Theorem 4.1 of \cite{li2025statistical}, $\IE_{1}[1-Y] = \frac{2+\Delta}{3}$, and $\IE_0[1-Y]=\frac{2}{3}$. Therefore, here $d=\frac{\Delta}{3}$. 
\end{example}

To summarize, the pivot statistics $Y_t$ has a mean level $\mu_0$ when the token $\omega_t$ is unwatermarked; on the other hand, we expect the pivot statistics to take comparatively larger values inside the watermarked segments. Interestingly, this observation establishes a ready-made connection to the notion of ``epidemic change-points'', sporadically explored in the classical time-series literature for the past few decades. We discuss this novel perspective in the following section.

%\vspace{-0.06 in}
\subsection{Watermarked interval in the context of epidemic change-point}%\vspace{-0.06 in}
We start with an epidemic change-point model with a single change. The simplest and yet the most popular formulation of a `mean-shift' epidemic model is as follows. Consider the time-series $X_i=\mu_i+Z_i,$ where $Z_i$ is mean-zero stationary process and 
\allowdisplaybreaks \begin{align}
  \mu_i=\mu \text{ if }i \in \{1,\cdots,p\} \cup \{q+1,\cdots,n\} \text{ and } \mu_i=\mu+d \text{ if } i \in \{p+1,\cdots,q\} \label{eq:usual-epidemic}
 \end{align}
With $K$ many true patches, this model reads as follows. For $1<p_1<q_1<p_2\cdots<q_k<n$, 
\begin{equation}\label{eq:multiple}
\mu_i =
\begin{cases}
\mu + d_k, & i \in \{p_k+1,\dots,q_k\} \text{ for some } k=1,\dots,K,\\[2pt]
\mu,       & \text{otherwise},
\end{cases}
\qquad i = 1,\dots,n.
\end{equation}
Epidemic change-point is not new by any means. This framework originated with \cite{levin1985cusum}, who studied the testing for the existence of such epidemic patches for epidemiology applications, with a more comprehensive discussion in \cite{yao1993tests, inclan1994use}. Later on, \cite{huvskova1995estimators, csorgo1997limit, chen2016detecting} have discussed consistency, asymptotic theory, as well as statistical powers of these epidemic estimators and accompanying tests. Other related papers discussing inference tailored to epidemic alternatives can be found in \cite{ravckauskas2004holder,ravckauskas2006testing, ning2012empirical}. Compared to the vast literature for usual change-point analysis, the epidemic change-point literature has been quite sparse, and even then, the focus has remained mostly on testing for the existence of such temporary departure rather than on locating these patches with provable statistical guarantees. In particular, the testing problem deals with the case $d_1=d_2=\cdots d_K=0$. On the other hand, our work concerns simultaneously estimating the number of true locations $K$ and the corresponding patches $(p_i, q_i)$. The literature on localizing multiple epidemic patches is even sparse \citep{zhao2021alternating, juodakis2023epidemic}, and seems to focus only on the much-restricted setting of independent Gaussian observations. Moreover, as discussed in the Introduction as well, due to the nature of pivot statistics, we suffer from a certain irregularity induced by the non-stationarity in the mean of the pivot-statistics for watermarked tokens. Therefore, any potential results or algorithms that might be obtained pertaining to model (\ref{eq:usual-epidemic}) or (\ref{eq:multiple}), are not directly applicable here. Instead, invoking Assumption \ref{ass:alt-mean}, we can only assume that the means of the pivot statistics are separated from the null by at least some margin. This puts us in a position to solve an epidemic mean-shift problem of a new kind, where we can solve the case of localizing multiple patches accounting for this non-stationary departure in the mean of the pivot statistic. 

Very recently \cite{kley2024change} proposed usual change-point detection under the presence of such irregular signals. Concretely, for noisy data of the form $X_t = \mu_t + Z_t, \ t = 1,\ldots,n$ where $\mu_t$ are means or signals and $(Z_t)_{t\in\mathbb{Z}}$ is a stationary mean-zero noise, they considered the following hypothesis testing problem with irregular `non-constant-mean' alternative:
\begin{equation*}
  H_0:\mu_1=\cdots=\mu_n \text{ vs. } H_1:\exists\, \tau \in \{2,\ldots,n\},\ d>0:\ 
  \mu_1=\cdots=\mu_{\tau-1},\quad 
  \mu_\tau,\ldots,\mu_n \ge \mu_1 + d .
\end{equation*}
They also proposed an estimation procedure for the location parameter $\tau$. In this work, we extend their estimators to the epidemic alternative with properties dictated by Assumption \ref{ass:alt-mean}, and provide guarantees of accurate localization. The analysis in \cite{kley2024change} is restricted to a single change-point, whereas the scenario of multiple patches with irregular signals comes naturally in our context. Moreover, the intrinsic dependence introduced by the context of how an LLM token sequence is generated also makes our premise for the error specification quite novel and thus brings out significant technical challenges. To address these challenges, we begin with a simpler problem segmenting of only one watermarked patch in \S\ref{se:theory}.

%\vspace{-0.06 in}
% \sk{Delete 2.3.3}
% \subsubsection{ A subtle difference with change-point problem}\label{se:subtle-diff}%\vspace{-0.06 in} 
% Although watermark segmentation closely resembles epidemic change-point detection, a crucial difference arises in algorithm evaluation. Standard change-point problems are symmetric; under model (\ref{eq:usual-epidemic}), the edge cases ${p=1,q=n}$ and ${p=q}$ are equivalent. On the other hand, watermarking problems exhibit asymmetry; the edge cases (i) ``the entire sequence is unwatermarked'' and (ii) ``the entire sequence is watermarked'',  differ due to irregular means of the pivot statistics under watermarking. In fact, the widely popular Rand Index (RI) - being borrowed from clustering literature, and used in watermark segmentation \citep{li2024segmenting, pan2025waterseeker} - fails to capture this distinction. For the interested readers, we address this by introducing a \textbf{M}odified \textbf{R}and \textbf{I}ndex (MRI), and demonstrate its advantages over RI in Appendix \S\ref{se:performance-metric}.

%\vspace{-0.06 in}
\section{Single watermarked patch}\label{se:theory}
In this section, we underlay the development of our algorithm by starting with the simpler case of localizing a single watermarked patch. In particular, we propose an estimator to localize a single watermarked segment inside a text, and establish its theoretical consistency with finite sample results. Building on this estimate, in \S\ref{se:multiple-wm} we will formally propose the \texttt{WISER}\ algorithm to detect multiple patches. 
% Subsequently, we theoretically establish its consistency in segmenting multiple watermarked patches, while also discussing its linear-time computational complexity. 
%\vspace{-0.06 in}
%\subsection{Segmenting single watermarked patch}\label{se:single-wm}%\vspace{-0.06 in}

We work with the pivot statistics \(X_t=h(Y_t)\). Recall Lemma \ref{lemma:pivot-iid}, the notation $\mu_0 =\IE_0 X_t$, and Assumption \ref{ass:alt-mean}. The pivot statistics are constructed so that under unwatermarked tokens, they behave like i.i.d. observations with a stable null mean \(\mu_0\). Under watermarking, however, the mean of \(X_t\) inside the true interval \(I_0\) is not assumed constant: token-by-token perturbations can make it vary arbitrarily, and all we rely on is an elevated–alternatives condition, as Assumption \ref{ass:alt-mean} describes, $\inf_{\IP \in \mathcal{P}} \IE_{1,\IP}[h(Y)] \geq \mu_0+ d$. Because of this irregularity, classical epidemic/CUSUM-type scans that presume a constant shift on the affected block are not directly applicable. Instead, we flip the viewpoint and search for an interval whose removal makes the remaining data look as close as possible to the null; this leads us to an initial interval estimator defined by minimizing a biased outside-of-interval surplus. 

In this spirit, let $\Tilde{d}$ be such that there exists $\rho \in (0,1)$ satisfying $d > 2\rho\Tilde{d}$. Based on our discussion above, we adapt the estimator from \cite{kley2024change} for our particular `epidemic' setting.
\allowdisplaybreaks \begin{align}\label{eq:single-estimate}
    \hat{I}= \argmin_{s,t \in [n]} \sum_{k \notin [s,t]} (X_k - \mu_0 - \rho \Tilde{d}).
\end{align}

The role of the bias term \(\rho\tilde d\) is crucial. By subtracting a positive buffer, we make each null token outside any candidate interval contribute a negative expected amount \(-\rho\tilde d\), while any missed watermarked token left outside contributes at least \(d-\rho\tilde d\), which is positive when the signal dominates the buffer. Consequently, underestimating the interval leaves elevated points outside and increases the objective, whereas overestimating it removes extra null points and loses many negative contributions, also increasing the objective; the minimizer is therefore driven toward the smallest interval that excises all elevated tokens. Since the true elevation \(d\) is unknown, we use a proxy \(\tilde d\) together with a tuning factor \(\rho\in(0,1)\) to remain conservative: \(\tilde d\) provides a scale for the elevation we expect, and \(\rho\) controls the tradeoff between being too permissive (small \(\rho\), risking overestimation from null fluctuations) and too strict (large \(\rho\), risking loss of separation if \(d\) is not sufficiently bigger than \(\rho\tilde d\)).

The following theorem analyzes its convergence properties for the case of a single, uninterrupted watermarked region. Subsequently, we discuss some of its connotations in successive remarks.

\begin{theorem}\label{thm:single-watermark}
    Let $\{X_t\}_{t=1}^n:= \{h(Y_t)\}_{t=1}^n$ be the pivot statistics based on the given input text, and assume that $I_0 \subset \{1, \ldots, n\}$ is the only watermarked interval. Grant Assumption \ref{ass:alt-mean}. Denote 
\[ \varepsilon_t=\begin{cases}
    &X_t-\mu_0, t\notin I_0, \\
    & X_t - \mu_t, \ \mu_t:=\IE_{1, \IP_t}[X_t], t\in I_0.
\end{cases}\]
    Suppose the class of distributions $\mathcal{P}$ is closed and compact, and there exists $\eta>0$ such that $\sup_{\IP\in \mathcal{P}}\IE_{1,\IP}[\exp(\eta|\varepsilon|)]<\infty$. Moreover, assume that $\min\{\Var_0(\varepsilon), \sup_{\IP}\Var_{1,P}(\varepsilon)\}>0$. 
    Consider the estimate \eqref{eq:single-estimate} with $\rho$ and $\tilde{d}$ satisfying $d > 2 \rho \tilde{d}$. If there exists a constant $c>0$ such that ${d}\geq c$, then $|\hat{I} \Delta I_0|= O_{\IP}((\rho \tilde{d})^{-1}\big).$
   Here $\Delta$ is the symmetric difference operator and $O_{\IP}$ hides constants independent of $n, \Tilde{d}, \rho,$ and $\mu_0$.
\end{theorem}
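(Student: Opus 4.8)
The plan is to analyze the minimizer $\hat I = [\hat s, \hat t]$ of the objective $\mathcal{O}(s,t) = \sum_{k \notin [s,t]} (X_k - \mu_0 - \rho\tilde d)$ by comparing its value at $\hat I$ against its value at the true interval $I_0 = [s_0, t_0]$. Writing $X_k - \mu_0 - \rho\tilde d = \varepsilon_k - \rho\tilde d$ on the unwatermarked portion and $X_k - \mu_0 - \rho\tilde d = \varepsilon_k + (\mu_k - \mu_0) - \rho\tilde d$ on the watermarked portion, the optimality $\mathcal{O}(\hat s, \hat t) \le \mathcal{O}(s_0, t_0)$ rearranges into a statement of the form: the ``signal discrepancy'' contributed by the symmetric difference $\hat I \Delta I_0$ is dominated by a sum of $\varepsilon_k$'s over the same index set. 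Concretely, indices in $I_0 \setminus \hat I$ (watermarked points that $\hat I$ wrongly excludes) contribute at least $d - \rho\tilde d \ge \rho\tilde d$ each to the signal by Assumption \ref{ass:alt-mean} and the choice $d > 2\rho\tilde d$; indices in $\hat I \setminus I_0$ (unwatermarked points wrongly included) contribute at least $\rho\tilde d$ each. So the deterministic part is bounded below by $\rho\tilde d \cdot |\hat I \Delta I_0|$, and we obtain the key inequality $\rho\tilde d \, |\hat I \Delta I_0| \lesssim |\sum_{k \in A_1}\varepsilon_k| + |\sum_{k \in A_2}\varepsilon_k| + \cdots$ for the relevant sub-blocks $A_i$ of the symmetric difference.

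The next step is to control the stochastic fluctuations uniformly over all candidate intervals. The symmetric difference $\hat I \Delta I_0$ decomposes into at most four contiguous blocks (two on each side, since both $\hat I$ and $I_0$ are intervals), so it suffices to bound $\max_{a \le b} |\sum_{k=a}^{b} \varepsilon_k|$ over blocks adjacent to the endpoints $s_0, t_0$ of $I_0$, as a function of the block length $m = b - a + 1$. The $\varepsilon_k$ are mean-zero with uniformly sub-exponential tails by the hypothesis $\sup_{P \in \mathcal{P}} \IE_{1,P}[\exp(\eta|\varepsilon|)] < \infty$ (and the analogous null moment bound from Lemma \ref{lemma:pivot-iid}). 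The unwatermarked $\varepsilon_k$ are i.i.d.\ by Lemma \ref{lemma:pivot-iid}; the watermarked ones are \emph{not} independent — this is the crux of the difficulty flagged in the introduction. I would handle this via a moment/cumulant generating function argument: bound $\IE[\exp(\lambda \sum_{k \in A} \varepsilon_k)]$ by iteratively conditioning on the past (using that each watermarked $\varepsilon_k$ has a uniformly controlled conditional MGF given $\omega_{1:(k-1)}$), yielding a Bernstein-type bound $\IP(|\sum_{k=a}^{b}\varepsilon_k| > x) \le 2\exp(-c\min(x^2/m, x))$ for each fixed block. A union bound over the $O(n^2)$ choices of $(a,b)$ then gives $\max |\sum \varepsilon_k|$ of order $\sqrt{m \log n} + \log n$ on a high-probability event — but this introduces spurious $\log n$ factors that the stated rate $O_{\IP}(\tilde d^{-1})$ does not permit.

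To remove the $\log n$ and close the argument, I would instead run a peeling/slicing argument localized around $I_0$: partition the possible values of $|\hat I \Delta I_0|$ into dyadic ranges $[2^j, 2^{j+1})$, and on each range use that if $|\hat I \Delta I_0| \in [2^j, 2^{j+1})$ then the key inequality forces $\max_{\text{blocks of length} \le 2^{j+1}} |\sum \varepsilon_k| \gtrsim \rho\tilde d \cdot 2^j$; since for a block of length $\ell$ the fluctuation is $O_{\IP}(\sqrt\ell)$ (with the sub-exponential correction negligible once $\ell \gtrsim \log n$, and handled separately for short blocks), this probability is summable in $j$ once $\rho\tilde d \cdot 2^j \gg \sqrt{2^{j+1}}$, i.e.\ once $2^j \gg \tilde d^{-2}$. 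This pins down $|\hat I \Delta I_0| = O_{\IP}(\tilde d^{-2})$ from the square-root fluctuations — but to reach the sharper $O_{\IP}(\tilde d^{-1})$ one exploits that the per-index signal surplus on $I_0 \setminus \hat I$ is the \emph{full} gap $d - \rho\tilde d$, not just $\rho\tilde d$, against which even the worst-case linear-in-$\ell$ part of the sub-exponential deviation (scale $\ell$ regime) is beaten when $\ell \gtrsim \tilde d^{-1}$; combined with $\tilde d \ge c$ bounded below, the short-block sub-exponential regime dominates and delivers the rate. I also need Danskin's theorem (\cite{danskin1967}) at the point where the objective's dependence on the nuisance means $\mu_t$ must be differentiated/bounded — specifically to argue the worst-case-over-$\mathcal{P}$ signal bound is attained and varies nicely, using compactness of $\mathcal{P}$. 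The main obstacle is unquestionably the dependence among watermarked $\varepsilon_k$: every concentration step must be re-derived through conditional MGF control rather than off-the-shelf i.i.d.\ or mixing inequalities, and one must be careful that the uniform-over-$\mathcal{P}$ sub-exponential constant does not degrade through the iterated conditioning.
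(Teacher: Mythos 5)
Your proposal follows essentially the same route as the paper: the same decomposition of the excess objective into a signal term lower-bounded by $\rho\tilde d\,|\hat I\Delta I_0|$ (via Assumption \ref{ass:alt-mean} and $d>2\rho\tilde d$) plus noise sums over the blocks of the symmetric difference, the same conditional-MGF/martingale treatment of the dependent watermarked errors (the paper implements your "iterated conditioning" via Doob's maximal inequality for the conditionally sub-martingale exponentiated partial sums), a shell-wise summation over the size of the symmetric difference (the paper sums a geometric series $\kappa^j$ over $j=|I\Delta I_0|>M$ rather than dyadic peeling, which avoids the $\log n$ detour you take and then repair), and Danskin's theorem plus compactness of $\mathcal{P}$ to control the sup-over-$\mathcal{P}$ cumulant near $\theta=0$. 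Your closing observation that the linear (sub-exponential) regime of the deviation bound, combined with $\tilde d\ge c$, is what delivers $O_{\IP}(\tilde d^{-1})$ matches the paper's Proposition \ref{prop:corollary-to-Theorem-3} (and the $O(\tilde d^{-2})$ claim indeed requires the extra local sub-Gaussianity of Proposition \ref{prop:subG}, not available under the theorem's hypotheses alone).
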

The $O(\tilde(\rho \tilde{d})^{-1})$ rate can further be sharpened to $O(\tilde(\rho \tilde{d})^{-2})$ under a local sub-Gaussianity condition (see Proposition \ref{prop:subG} in the Appendix \S\ref{se:proof} ). In fact, under very mild conditions, Theorem \ref{thm:single-watermark} already tackles a more general scenario compared to the only other theoretical result available in a similar context \citep{li2024segmenting}. In contrast to a general watermarked patch, \cite{li2024segmenting} considered a specialized scenario, where only the first half of the text till an arbitrary point is watermarked, reducing the problem to a classical change-point setting. 

The parameter $\tilde{d}$ serves as the \textit{signal strength} in the convergence diagnostics of $\hat{I}$. It allows $\hat{I}$ to look for intervals such that the $\tilde{d}$-biased mean outside that interval is minimized. However, due to the restriction $d>2\rho \tilde{d}$, since the minimum separation $d$ in Assumption \ref{ass:min-sep} is typically unknown, it cannot be used directly. In most cases (see examples in ~\S\ref{se:eg}), a distribution-dependent lower bound $d_{\mathrm{L}} \leq d$ may be available, but relying on $\tilde{d}=d_{\mathrm{L}}$ often sacrifices power, as $\inf_{t\in[n]} \IE_{1,\IP_t}[X_t-\mu_0]$ is usually much larger. Thus, a key step in practice is a data-driven yet valid choice of $\tilde{d}$, which we discuss in \S\ref{se:multiple-wm}. The tuning parameter $\rho$ adjusts the impact of $\tilde{d}$ and mitigates small errors in its selection. Choosing $\rho \approx 0$ is undesirable, as it causes $\hat{I}$ to overestimate $I$ due to fluctuations above $\mu_0$ under the null. Conversely, setting $\rho \approx 1$ can violate the requirement $d>2\rho\tilde{d}$ when $\tilde{d}$ is large. Empirically, $\rho \in [0.1,0.5]$ provides robust performance, and we revisit these choices in our discussion of \texttt{WISER}\ as well as the ablation studies in Appendix~\S\ref{se:ablation-study}.

\begin{remark}[Connection with other performance metric]
Even though Theorem \ref{thm:single-watermark} controls the estimation error in terms of symmetric difference between estimated and true watermarked patches $\hat{I}$ and $I$ respectively, it is straightforward to transform this result in terms of the more familiar Intersection-Over-Union metric $\operatorname{IOU}(I, \hat{I})= \vert I \cap \hat{I}\vert / \vert I \cup \hat{I}\vert$ as $1- \operatorname{IOU}(I, \hat{I}) = \frac{|I \Delta \hat{I}|}{|I \cup \hat{I}|} = O_{\IP}\Big(\frac{1}{|I|{\rho\tilde{d}}}\Big).$
As the text size increases ($n\to \infty$), if $\abs{I}=O(1)$, then the number of unwatermarked tokens is too large, overpowering the signal from the watermarked tokens. Under this ``heavy-edit" regime, no non-trivial test statistic can differentiate between $H_0:$ the entire text $\omega_{1:n}$ is unwatermarked (i.e., human-generated) and $H_1:$ the entire text $\omega_{1:n}$ is watermarked, with reasonable power~\citep{li2025optimal}. The estimation being a harder problem than testing, it is therefore reasonable to assume $|I|\to \infty$ as $n\to \infty$. Therefore, Theorem \ref{thm:single-watermark} essentially entails that $\operatorname{IOU}(I, \hat{I})\to 1$ as $n\to \infty$.
\end{remark}

\upd{The estimator $\hat{I}$ is not only theoretically attractive, but also can be implemented using \textit{Kadane's algorithm} \citep{bentley1984programming, kadane2023two} with a linear computational complexity of $O(n)$. However, there are a couple of practical roadblocks to deploying $\hat{I}$. Firstly, the choice of $\tilde{d}$ is yet unclear. Secondly, it is not straightforward as to how $\hat{I}$ can be generalized to localize multiple watermarked segments. We answer these questions in the next section.}
%\vspace{-0.06 in}
%\vspace{-0.06 in}
%\vspace*{-0.06 in}
\section{\fancyname\ : segmenting multiple watermarked patches}\label{se:multiple-wm}%\vspace{-0.06 in}

\upd{In the idealistic scenario in which the number of watermarked patches, $K$, is known, it is not hard to generalize the ideas of segmenting a single watermarked patch. In particular, the estimator \eqref{eq:single-estimate} can be directly generalized to the following estimator.}
\begin{align}
        \upd{(\hat{I}_1, \ldots, \hat{I}_K)= \argmin_{\{J_1, \ldots, J_K\} \in \mathcal{I}} \sum_{k \notin \cup_{i=1}^K J_i}(X_k - \mu_0 - \rho \tilde{d}), \label{eq:multiple-oracle-est}}
    \end{align}
\upd{where, for some $K\geq 1$, 
$\mathcal{I}:= \{ \{I_1, I_2, \ldots, I_K\}: I_j \subset \{ 1, \ldots, n\}, I_1 < I_2 < \ldots < I_K\}$ denotes the collection of all ordered sets of $K$ intervals. Here, given a set of disjoint intervals $I_1, \ldots, I_K\subset [n]$, we assume that they are ordered left to right, i.e. $I_{k, R} +1 < I_{k+1, L}$ for all $k$, and write $I_1 < I_2 < \ldots < I_K$ to express this. In Theorem \ref{thm:multiple-oracle}, presented in the Appendix, we show that the estimate \eqref{eq:multiple-oracle-est} leads to consistent segmentation with the same rate of $O_{\IP}(1/(\rho \tilde{d}))$ under fairly mild conditions, as in Theorem \ref{thm:single-watermark}. Theorem \ref{thm:multiple-oracle} may be of independent interest, establishing \textit{first-of-its-kind} theoretical consistency of oracle estimators in multiple epidemic change-point settings. Moreover, the estimator \eqref{eq:multiple-oracle-est} remains amenable to a dynamic implementation of Kadane's algorithm (presented in Algorithm \ref{algo:kadane}) with a run-time of $O(n\log n)$. Since \eqref{eq:multiple-oracle-est} requires the knowledge of $K$, we call this the \textit{oracle} estimate. Despite optimal theoretical performance, however, based on the simulation exercises in \S\ref{se:kadane}, the picture is wildly different in the more practical scenarios, when the knowledge of $K$ is not explicitly known, and when simply an upper-bound of $K$ is available. In these cases, Kadane's algorithm performs poorly, and rather erratically. It may output some spurious intervals, which then have to be detected and removed from the estimates - a process suffering from selective inference. Developing a valid methodology in this case constitutes a non-trivial research direction. Moreover, sometimes the Kadane algorithm, when faced with an incorrect specification of $K$, may club two distinct watermarked patches together, thereby incorrectly classifying the intermediate unwatermarked tokens as watermarked. This may have serious practical implications in many situations, such as academic fraud detection.}

\upd{In cognizance of these issues, we propose \fancyname, which does not require or use any previous knowledge of the number of watermarked patches, $K$.} The main motivation behind our proposed algorithm \fancyname\ is to use the estimator $\hat{I}$ on localized disjoint intervals that are more-or-less guaranteed to contain the true watermarked segments. Such intervals with guarantees are usually recovered as a consequence of some first-stage screening. For the convenience of readers, a schematic diagram of \fancyname\, containing the key steps, is illustrated in Figure~\ref{fig:algo}. The detailed algorithm can be found in Appendix \S\ref{se:algo}. 

\begin{figure}[htbp]
    \centering   
    \includegraphics[width=0.8\linewidth]{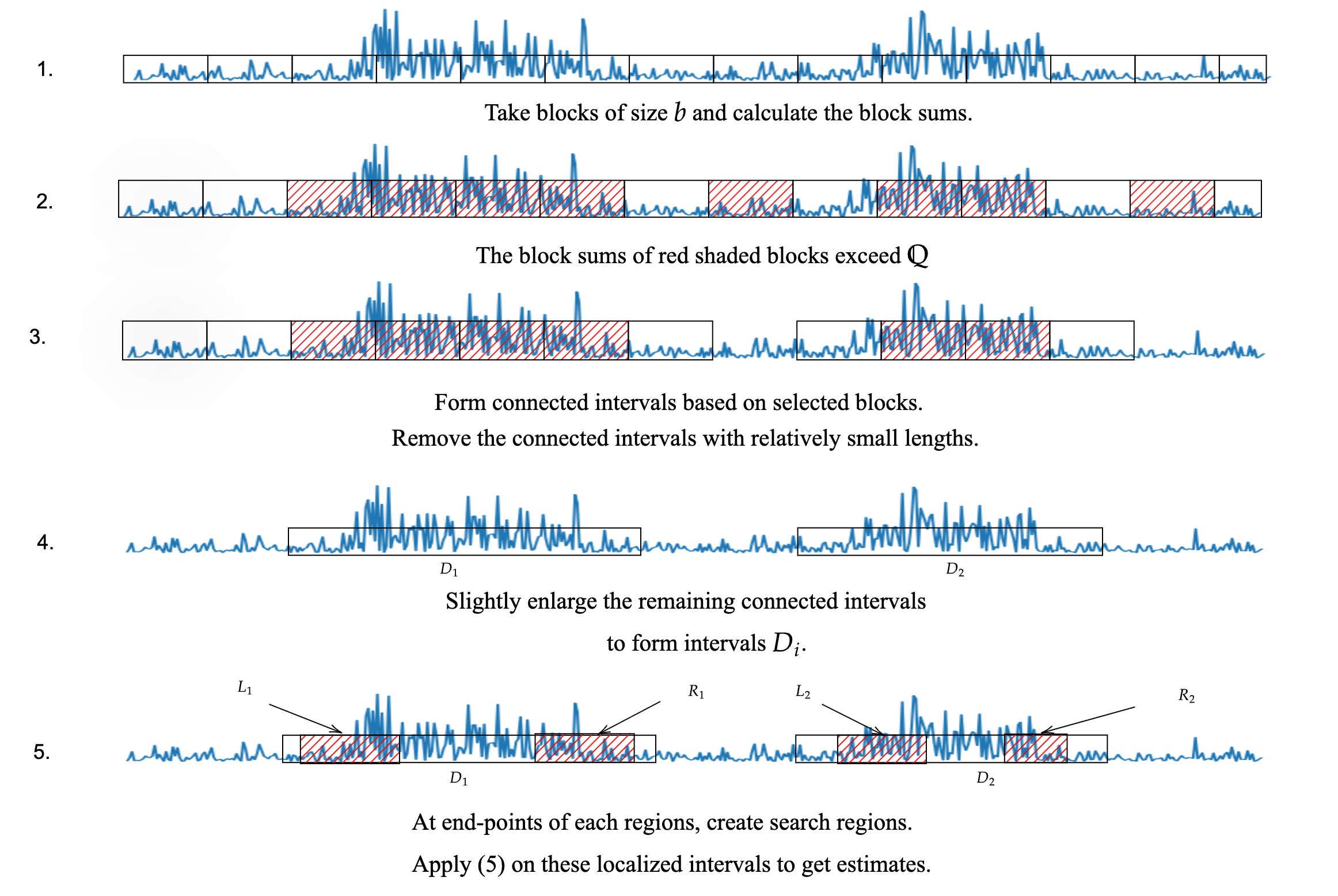}
    \caption{\fancyname\ in action with key steps.}
    \label{fig:algo}
\end{figure}

\upd{Subsequently, we impose a minimum-length and minimum-separation condition on the true
watermarked segments. This condition should be viewed as a sufficient asymptotic resolution
requirement for the first-stage screening step of \fancyname; its practical interpretation and
dependence on the tail behavior of the null pivot statistic are discussed in Remark~\ref{remark:ass-min-sep-effect}.} Formally, for two disjoint intervals $I_1 = (I_{1,L}, I_{1,R})$ and $I_2 = (I_{2,L}, I_{2,R})$, let $d(I_1, I_2):=\min\{ |I_{1, L} - I_{2, R}|, |I_{1, R} - I_{2,L}| \}$.

\begin{assumption}[Minimum separation] \label{ass:min-sep}
    Let $K$ be the number of true watermarked segments, with the segments themselves denoted by $I_j, j\in [K]$. Then there exists a constant $C_0>0$, such that $I^\star:=(\min_{k \in [K]} |I_k|) \wedge (\min_{j\in [K-1]} d(I_k, I_{k+1})) \geq C_0 n^{\upsilon}\log n$ for some $\upsilon>0$.  
\end{assumption}
\begin{remark}

In most practical scenarios, where a test for the existence of the watermark has sufficient power, the size of the watermarked patches will be significant, or should have high entropy. In fact, most of the theoretical literature in LLM watermarking~\citep{li2025statistical, cai2025statistical, w1-1, li2024segmenting, w3-0} assumes that either the entire text, or at least a constant proportion of the text, is watermarked. 
\end{remark}

\upd{Assumption \ref{ass:min-sep} allows for vanishing watermarked patches in the $[0,1]$ scale. Mathematically, we only require the minimum size of the watermarked patches (as well as the minimum separation) to grow only polynomially with the number of tokens $n$.  The oracle estimator in \eqref{eq:multiple-oracle-est} also requires $I^\star \gg \log n$ for its consistency (see Theorem \ref{thm:multiple-oracle}); our Assumption \ref{ass:min-sep} is only mildly stronger. These assumptions are ubiquitous in the analysis of multiple change-point; see Assumption 3.3 of \cite{wbs}, Assumption (B2) of \cite{choejs}, Assumption (H1$\dash$) of \cite{baijasa}, along with \citep{safikhanijasa, frick2014}, as well as in the relatively much sparser literature of analysis of multiple epidemic patches \citep{zhao2021alternating, juodakis2023epidemic}. However, we remark that the aforementioned separation conditions from change-point literature are often proposed under Gaussianity, or under specific dependency structures, none of which hold true for the watermarked interval in our setup.}

In what follows, we explain the step-by-step rationale behind the algorithm. For clarity, we ignore the niceties of $\lfloor \cdot \rfloor$'s and $\lceil \cdot \rceil$'s. Suppose, for convenience, that $\upsilon=1/2$.  

\begin{itemize}[noitemsep,topsep=0pt,leftmargin=*]
    \item \textbf{Blocking stage.} 
    For convenience let $b=\sqrt{n}$. In the first stage, we partition the data into $\sqrt{n}$ consecutive blocks of size $\sqrt{n}$. Let the threshold $\mathcal{Q}$ be given as some quantile of the distribution of the maximum of the block-sums of the pivot statistics under the null of no watermarking. Then, among the blocks, we retain only those blocks for which the corresponding realized sum of pivot statistics exceeds $\mathcal{Q}$. Typically, to avoid multiple testing issues, $\mathcal{Q}$ is chosen as the $(1-\alpha)$-quantile of the \textit{null} (i.e., when there is no watermarking in the entire text) distribution of the maximum block sum over all $n/b$ blocks. 
    % Let $b=\sqrt{n}$ and the threshold $\mathcal{Q}$ be given. In the first stage, we partition the data into $\sqrt{n}$ consecutive blocks, each of size $\sqrt{n}$. Among these, we retain only those blocks for which the corresponding sum of pivot statistics exceeds $\mathcal{Q}$. Typically, to avoid multiple testing issues, $\mathcal{Q}$ is chosen as the $(1-\alpha)$-quantile of the \textit{null} (i.e. when there is no watermarking in the entire text) distribution of the maximum block sum over all $\sqrt{n}$ blocks. 
    
    \item \textbf{Discarding stage.} 
    If $\alpha$ is too small, we risk selecting many spurious blocks; if $\alpha$ is too large, we lose out on power in the first stage itself, failing to accurately identify even the number of watermarked segments. As a calibration step, we form connected components based on selected blocks, and then remove any of the intervals having length smaller than $c \sqrt{\log n}$, $c>0$. The intuition is as follows: the blocks corresponding to the unwatermarked region between them should not be selected; else we lose the localization we are aiming for before implementing $\hat{I}$ piece-meal. Moreover, under Assumption \ref{ass:min-sep}, by definition of $\mathcal{Q}$, $\sqrt{\log n}$ successive unwatermarked blocks will have sums exceeding $\mathcal{Q}$ \textit{only} with vanishing probability. Therefore, any connected interval of selected blocks from the first stage, with length at most $ c \sqrt{\log n}$, must necessarily be spurious. 

    % Under Assumption \ref{ass:min-sep}, by definition of $\mathcal{Q}$, $O(\sqrt{\log n})$ successive unwatermarked blocks will have sum exceeding $\mathcal{Q}$ \textit{only} with vanishing probability. Therefore, any connected interval of selected blocks from the first stage, with length at most $ c\sqrt {n\log n}$, must necessarily be spurious. Hence, at this stage, we join consecutive selected blocks, and discard any connected intervals smaller than $c\sqrt{n\log n}$.

   \item \textbf{Enlargement stage.} The above two steps ensure $\hat{K}=K$ with probability approaching 1. Due to Assumption \ref{ass:min-sep}, each of the watermarked segments must correspond to exactly one of the remaining connected regions. Moreover, these intervals are almost accurate estimates of the true segments, but for some additional watermarked regions that might have had a non-null intersection with the discarded blocks. However, from the particular discarding procedure, we know that these additional regions must account for a size at most of the order of $\sqrt{n}$. Therefore, it makes sense to enlarge the connected intervals by $c\sqrt{n}$ for some constant $c>0$, so that now it covers the corresponding true watermarked segments with high probability. These enlarged intervals $D_j$'s remain disjoint with high probability due to Assumption \ref{ass:min-sep}, and are therefore each amenable to (\ref{eq:single-estimate}) to yield $\hat{I}_j$'s.

    % The above two steps ensure $\hat{K}=K$ with probability approaching $1$. Also, the intervals from the previous stage are almost accurate estimates of the true segments, except for some additional watermarked regions that were part of discarded blocks. Because of Assumption~\ref{ass:min-sep} and the size of the discarded blocks, such regions have size at most $O(\sqrt{n})$. Therefore, we enlarge each interval by $\asymp n^{1/2}$ for a small $0<\gamma\ll 1/2$. These enlarged intervals $D_j$'s remain disjoint with high probability due to Assumption \ref{ass:min-sep}, and are therefore each amenable to (\ref{eq:single-estimate}) to yield $\hat{I}_j$'s. 
   
   \item \textbf{Estimating $\tilde{d}$.}  The crucial component behind $\hat{I}_j$ is $\tilde{d}$, which we estimate now. In fact, we plug in the sample mean of the pivot statistics over $\cup_{j=1}^{\hat{K}} D_j$ as $\tilde{d}$. Since $|D_j \Delta I_j| \ll |I_j|$ with high probability, hence $\tilde{d}$ is essentially equal to $(\sum_j |I_j|)^{-1}\sum_{j\in [K], t\in I_j}(X_t - \mu_0)$, which estimates $d$ with some positive bias. The $\rho$ parameter can be used to calibrate it so that $d> 2\rho \tilde{d}$. Typically we choose $\rho \in (0.1, 0.5)$. A smaller value of $\rho$ maintains validity of the procedure but sacrifices the detection accuracy. In Appendix \S\ref{se:ablation-study} we provide an ablation study to discuss the choices of both the parameters $b$ and $\rho$.
   % To estimate $\tilde{d}$, we take the sample mean of $(X_t - \mu_0)$ over $\cup_{j=1}^{\hat{K}} D_j$. This serves as a proxy for the oracle average of $(X_t - \mu_0)$ over $\cup_{j \in [K]} I_j$, which may overestimate $d$. We choose $\rho$ to calibrate it so that $d> 2\rho \tilde{d}$.

   \item \textbf{Reducing computational cost. } We alleviate the increased computational aspect of a naive implementation of (\ref{eq:single-estimate}) by leveraging additional information from the screening stage to reduce the search space. Indeed, due to our blocking and discarding steps, it can be guaranteed with high probability that, for each $j\in [K]$, $D_{j,L}$ is at most $\asymp \sqrt{n}$ distance apart from $I_{j,L}$; similarly $D_{j,R}$ is also at most $\asymp \sqrt{n}$ distance apart from $I_{j,R}$. Therefore, from $D_j$ we can produce search intervals $L_j$, $R_j$ of lengths $\asymp n^{1/2}$ such that $I_{j,L}\in L_j$ and $I_{j,R} \in R_j$ with high probability, and restrict the search to $s\in L_j, t\in R_j$. Consequently, now each implementation of this modified (\ref{eq:single-estimate}) (see Figure \ref{fig:algo}) takes $O((n^{1/2})^2)=O(n)$ amount of computational time, leading to a speed-up while maintaining theoretical validity. 
\end{itemize}

% \begin{figure}[htbp]
%     \centering    \includegraphics[width=\linewidth]{algo_new.png}
%     \caption{(Left): The Algorithm \texttt{WISER}; (Right) \texttt{WISER}\ in action with key steps.}
%     \label{fig:algo}
% \end{figure}

% \begin{remark}\label{remark:estimating-Q}
%     \textcolor{red}{We empirically derive the threshold $Q$ by a standard Monte-Carlo procedure. Under the null hypothesis, the distribution of the pivot statistic $Y_t$ or $X_t = h(Y_t)$ is fully specified; for instance, in the Gumbel watermarking scheme, $X_t$ follows a standard exponential distribution. Even when a closed-form expression is unavailable, samples can be generated by choosing an arbitrary NTP (e.g., the uniform distribution over the vocabulary) independent of $\zeta_t$, and applying the watermarking algorithm to obtain $Y_t$. Moreover, utilizing Lemma~\ref{lemma:pivot-iid}, the sequence $\{ Y_t\}_{t=1}^n$ can be sampled independently across time. Finally, we generate $m = 1000$ such sequences, compute the block sums $\{ S_k\}_{k=1}^{[n/b]}$ for a given block size $b$ and record their maxima. The threshold $Q$ is then estimated as the empirical upper $\alpha$-quantile of these maxima.}
% \end{remark}

The following result summarizes \upd{various algorithmic} insights into a formal consistency guarantee.

% \vspace*{1cm}

\begin{theorem}\label{thm:multiple-watermark}
Assume that the null distribution of the pivot statistic is absolutely continuous with respect to the Lebesgue measure. Let the number of watermarked intervals $K$ be bounded, and Assumption \ref{ass:min-sep} be granted for the watermarked intervals $I_k, k\in [K]$.
Fix $\alpha \in (0,1)$, and recall the quantities defined in \texttt{WISER}\ described in Figure \ref{fig:algo}. Suppose that $\IE_0[|X-\mu_0|^{p}]<\infty$ for some $p \geq 2$, and let the block length $b=b_n$ satisfy $b_n = O(n^\upsilon)$, and $b_n / n^{1/p} \to \infty$, where $\upsilon>1/p$ is same as in Assumption \ref{ass:min-sep}. Moreover, suppose the threshold $\mathcal{Q}=\mathcal{Q}_n$ is selected so that $\IP_0(\max_{1\leq k \leq \lceil n/b\rceil} S_k> \mathcal{Q})= \alpha$. Finally, assume $d\geq c$ for some constant $c>0$, and $\sup_{\IP \in \mathcal{P}} \IE_{1,\IP}[X] < \infty$.
% \vspace*{-0.5cm} 
% \begin{align*}
%     \sup_{\IP \in \mathcal{P}} \IE_{1,\IP}[X] < \infty,
% \end{align*}
 Suppose $\varepsilon>0$. Then, under the assumptions of Theorem \ref{thm:single-watermark}, there exist $M_{\varepsilon} \in \R_+$, independent of $n, K,$ and $d$, and $\rho>0$, such that \fancyname\ applied with hyper-parameters $b$ and $\rho$ satisfies
\allowdisplaybreaks \begin{equation}\label{eq:multiple-consistency}
    \liminf_{n\to \infty}\IP\big(\hat{K}=K, \ \max_{k\in [K]}|\hat{I}_k \Delta I_k|< M_{\varepsilon} {d}^{-1} \big) \geq 1-\varepsilon.
\end{equation}
\end{theorem} 
% \begin{remark}[Effect of Assumption \ref{ass:min-sep}]\label{remark:ass-min-sep-effect}
%     The key assumption behind the validity of \fancyname\ is that if the pivot distribution under null has at least $p$ moments, then the minimum length of the watermarked intervals, as well as the minimum separation between them, should be at least $n^{1/p}$; in that case, \fancyname\ guarantees consistent segmentation as long as the block-length is small enough. In fact, for most watermarking schemes in use, the pivot distribution under null will have infinitely many moments, enabling us to take as big a $p$ as possible. Thus, our Assumption \ref{ass:min-sep} can be understood to be quite mild, much like the logarithmic separation conditions in multiple change-point detection literature (e.g., Assumption 3.3 in \cite{wbs}, Assumption (B2) in \cite{choejs}, Assumption ($\text{H1}\dash$) in \cite{baijasa}, etc). 
% \end{remark}

\begin{remark}[\upd{Effect of Assumption~\ref{ass:min-sep}}] \label{remark:ass-min-sep-effect}
We interpret \upd{Assumption~\ref{ass:min-sep} as a sufficient separation condition for the first-stage block screening step of \fancyname, rather than as a sharp finite-sample boundary. The weak tail condition imposed in Theorem~\ref{thm:multiple-watermark} entails its dependence on the scale $n^{1/p}$. In particular, when only $\IE_0|X-\mu_0|^p<\infty$ is assumed, one can look at the details of the screening proof, that it controls the maximum of $O(n/b_n)$ null block sums. A union bound followed by the Fuk--Nagaev inequality yields, for fixed $\varepsilon>0$,}
\begin{align*}
    \upd{\IP_0\left(\max_{1\leq k\leq \lceil n/b_n\rceil}b_n^{-1}(S_k-b_n\mu_0)
    > \varepsilon\right)}
    & \upd{\lesssim \frac{n}{b_n^p} + \frac{n}{b_n}\exp(-c b_n).}
\end{align*}
\noindent \upd{Therefore, under only a finite $p$-th moment assumption, the polynomial term is small by requiring $n/b_n^p\to0$, or equivalently $b_n/n^{1/p}\to\infty$. Since Theorem~\ref{thm:multiple-watermark} also requires $b_n=O(n^\upsilon)$, this leads to the condition $\upsilon>1/p$ in Assumption~\ref{ass:min-sep}. In this sense, the $n^{1/p}$ scale can be seen as the cost of avoiding stronger tail assumptions such as Gaussianity, sub-Gaussianity, boundedness, or the existence of all moments. In fact, for most watermarking schemes in use, the pivot distribution under null will have infinitely many moments, enabling us to take as large a $p$ as possible.}

\upd{Under stronger null-tail assumptions, the same first-stage screening argument can be relaxed. For example, if the null pivot statistic is bounded or sub-Gaussian, the polynomial term above can be replaced by an exponential concentration term, and even logarithmic block lengths will be sufficient for controlling the maximum over the $O(n/b_n)$ null blocks. We state Theorem~\ref{thm:multiple-watermark} under the weaker finite-moment condition because this formulation is relatively distribution-free and does not require model-specific assumptions on the pivot statistics.}\\
\upd{\hspace{0.25cm} In finite samples, $b_n$ should be best viewed as the screening resolution of \fancyname. The watermarked stretches and the gaps between adjacent watermarked stretches should be large enough to be distinguishable at this block level. When an effective watermarked stretch or an inter-segment gap is shorter than this resolution, the first screening stage may merge neighboring watermarked regions or miss very short regions. This limitation is standard for a block-screening method and should not be interpreted as an empirical impossibility threshold. Our additional experiments in Appendix~\S\ref{se:simu-1}, including the tightly spaced setting~\ref{sim:setup3} with $n=1000$, segment length $60$, gap $30$, and $b \approx [\sqrt n]$, are designed to probe such finite-sample behavior.}
\end{remark}

We reiterate that with any choice of $b_n=O(\sqrt{n})$, \fancyname\ has a run-time only of approximately $O(n)$, ignoring log factors. This, to the best of our knowledge, is among the \textit{least computationally expensive} algorithms available in the literature. In view of its theoretical validity under very general conditions, this makes it a useful tool for practical applications. In general, for consistent segmentation, the block lengths only need to satisfy $n^{1/p}\ll b_n \ll  n^{\nu}$, where $\nu$ is as in Assumption \ref{ass:min-sep}, and the pivot statistics has at least $p$ finite moments. In Appendix \S \ref{se:ablation-study}, we undertake a detailed ablation study that deals with the practical choice of $b_n$. 
\vspace{-0.4 cm}
\section{Simulation Studies}\label{se:simulation}

Building on the theoretical results developed in the preceding sections, we now present a series of simulation experiments designed to stress-test the performance guarantee of the proposed \fancyname\ method, under deviations from the idealized assumptions. To this end, we consider three distinct simulation scenarios as follows. In \S\ref{se:s1}, we aim to uncover the effect of temporal dependence on the performance of the \fancyname\ algorithm. Moving on, in \S\ref{se:s3}, we evaluate \fancyname\ on its ability to detect multiple watermarked segments based on simulated text, which then serves as the closest approximation to the additional experiments on real-world scenarios discussed later in \S\ref{se:shortened-simu}. Finally, \S\ref{se:s2} examines the role of Assumption~\ref{ass:alt-mean} on the performance of the algorithm. For each of these experiments, we keep the vocabulary size fixed at $\vert \mathcal{W}\vert = 1000$ unless otherwise specified and perform $5000$ replications. 

\subsection{Effect of temporal dependence on \fancyname}\label{se:s1}

We begin by analyzing how temporal dependence in the underlying next-token prediction (NTP) distributions affects detection accuracy. For each of the $5000$ replications, we generate a sequence of NTP distributions $\{\IP_t\}_{t=1}^T$ according to 
\begin{equation*}
    \IP_t(w) = \dfrac{e^{z_t(w)}}{\sum_{w \in \mathcal{W}} e^{z_t(w)}}, \ 
    z_t(w) = \sqrt{\phi}z_{t-1}(w) + \sqrt{1 - \phi}\sigma_t(w), \ t = 1, 2, \dots, \ w \in \mathcal{W},
\end{equation*}
\noindent where $\phi$ is the auto-correlation coefficient ranging from $0$ to $1$, and $\sigma_t(w)$'s are independent, identically distributed logits generated from a spiked-probability distribution described by
\begin{equation*}
    \IP_t(w) = (1 - \Delta_t) \mathbf{1}_{\{w = w_t^\ast\}} + \dfrac{\Delta_t}{\vert \mathcal{W}\vert - 1} \mathbf{1}_{\{w \neq w_t^\ast\}}
\end{equation*}
% \begin{equation*}
%     \IP_t(w) = \begin{cases}
%         1 - \Delta_t, & \text{ if } w = w_t^\ast\\
%         \dfrac{\Delta_t}{\vert \mathcal{W}\vert - 1}, & \text{ if } w \neq w_t^\ast
%     \end{cases},
% \end{equation*}
\noindent where $w_t^\ast \sim \text{Uniform}(\mathcal{W})$ and $\Delta_t \sim U(10^{-3}, 0.5)$. Note that $\phi=0$ recovers the case where each NTP is generated as an independent spiked distribution, which is equivalent to the scenario considered by~\cite{li2025statistical}. 

For each replication, we generate a text of length $n = 500$, watermark the interval $(50, 300)$ under various schemes, and compute the resulting IOU of the intervals detected by \fancyname. The results are summarized in Figure~\ref{fig:simulation-s1}. Overall, \fancyname\ is remarkably stable across a broad range of temporal dependencies, except in two cases: (i) $\phi=1$, and (ii) red-green watermark scheme with smaller values of $\phi$. 

\begin{figure}[htbp]
    \centering
    \includegraphics[width=0.48\linewidth]{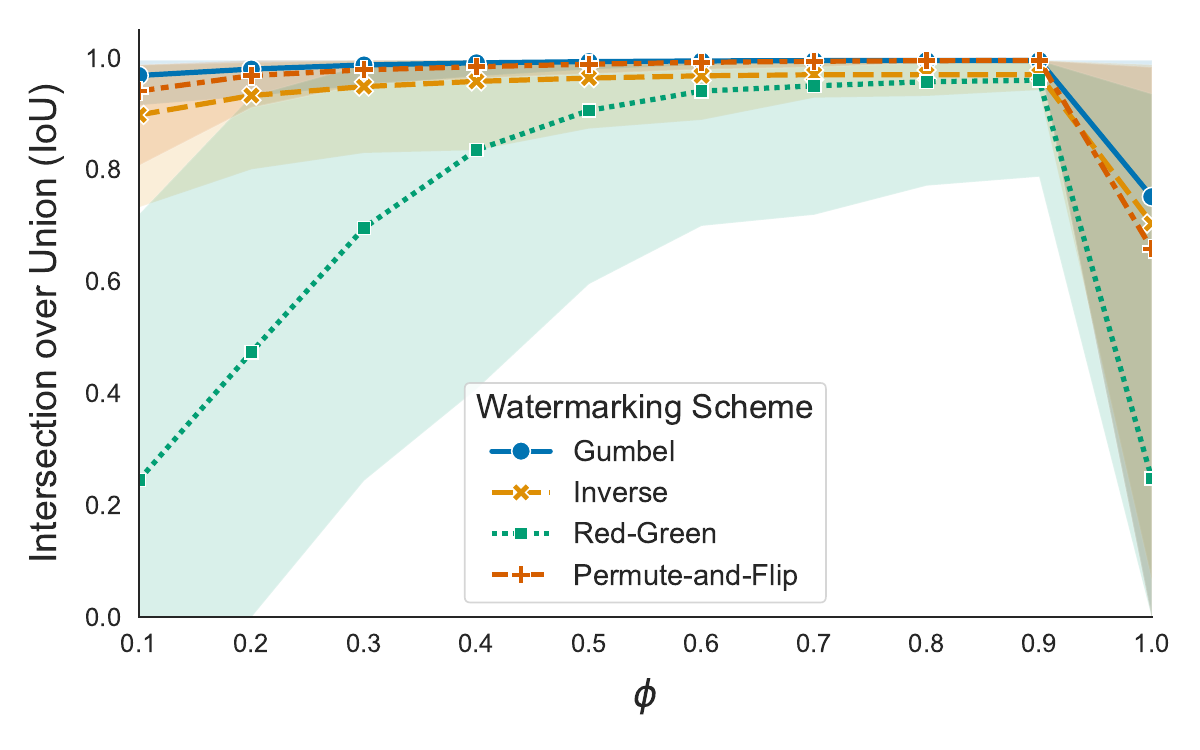}
    \includegraphics[width=0.48\linewidth]{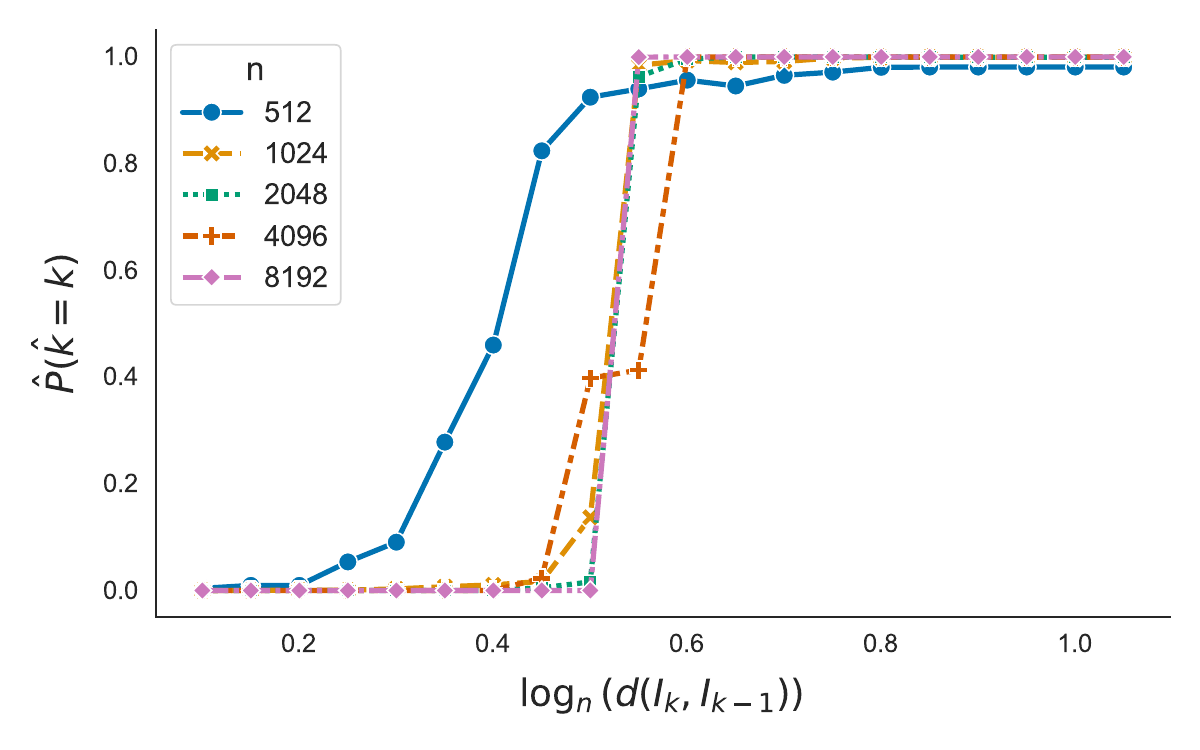}
    \caption{(Left) IOU of \texttt{WISER} algorithm for different levels of correlation in the token generation process given in simulation scenario of \S\ref{se:s1}. (Right) Empirical proportion of repetitions where \texttt{WISER} algorithm determines correct number of intervals in simulation scenario \S\ref{se:s3} as a function of the logarithm of gap $\log_n(d(I_k, I_{k-1}))$.}
    \label{fig:simulation-s1}
\end{figure}

In the extreme case $\phi = 1$, all NTP distributions $\{\IP_t\}$ collapse to the initial spiked distribution $P_0$. Because of the spiked-nature of $P_0$, a decoding (or sampling) from this model produces nearly deterministic text consisting almost entirely of the token $w_0^\ast$. Therefore, watermarking has virtually no opportunity to influence the output, because deviations from $w_0^\ast$ occur only through extremely low-probability events. This makes it extremely difficult for any detection algorithm to distinguish between watermarked and unwatermarked cases. Also, in this degenerate case, because of the deterministic nature of the generated text, the variance $\min\{ \var_0(\epsilon), \sup_{P} \var_{1,P}(\epsilon) \}$ reduces to almost zero, violating the assumptions of Theorem~\ref{thm:single-watermark}. 

On the other extreme, when $\phi = 0$, each of the spiked distributions is independently generated, producing the maximum probability at different tokens. The red-green scheme perturbs logits by adding a bias only on the tokens from the green list, but this influence is mitigated by the large and rapidly changing spikes. Thus, watermarked and unwatermarked texts become nearly indistinguishable.

\subsection{Experiments on multiple watermarked segment detection} \label{se:s3}

We next evaluate \fancyname\ in settings where multiple watermarked intervals appear within the text. Consider text of length $n$ containing three Gumbel-watermarked intervals: $(0.35n - g, 0.45n - g), (0.45n, 0.55n)$ and $(0.55n + g, 0.65n + g)$. Here $g$ denotes the gap between two successive intervals, and is selected as $g= (\tilde{g} \vee 2) \wedge 0.3n$, where $\tilde{g}\in \{n^{0.1}, n^{0.15}, \dots, n^{0.9}, n^{0.95}\}$.
% \begin{equation*}
%     g= (\tilde{g} \vee 2) \wedge 0.3n, \ \tilde{g}\in \{n^{0.1}, n^{0.15}, \dots, n^{0.9}, n^{0.95}\}.
% \end{equation*}

Theorem~\ref{thm:multiple-watermark} assures successful detection of the intervals by \fancyname\ algorithm whenever the gaps satisfy $d(I_k, I_{k-1}) \asymp \sqrt{n}$. Figure~\ref{fig:simulation-s1} (right panel) displays, as a function of $\log_n(d(I_k, I_{k-1}))$ for different text lengths $n$, the empirical probability that \fancyname\ correctly detects three watermarked intervals. As expected, for each of the $n$ considered here, the empirical detection probability rises sharply from $0$ to $1$ around $\log_n(d(I_k, I_{k-1})) \approx 0.5$, consistent with the theoretical results.

% \begin{figure}[ht]
%     \centering
%     \includegraphics[width=\linewidth]{pdf_figures/simulation3_gap.pdf}
%     \caption{Empirical proportion of repetitions where \texttt{WISER} algorithm determines correct number of intervals in simulation scenario \S\ref{se:s3} as a function of the logarithm of gap $\log_n(d(I_k, I_{k-1}))$.}
%     \label{fig:simulation-s3}
% \end{figure}

\vspace{-0.05cm}
\subsection{Effect of Assumption \ref{ass:alt-mean} on \fancyname}\label{se:s2}

In order to properly characterize the importance of the elevated mean of pivot statistics for watermarked texts, we consider two different simulation experiments. 
% In the first experiment, we watermark a single interval $(0.3n, 0.7n)$ within a text of length $n$ using the red-green scheme. 

In the first \upd{experiment}, we apply the red-green watermark on a single interval $(0.3n, 0.7n)$ for an $n$-length text, \upd{using the red-green scheme.} We vary the watermark strength by setting $\delta \in \{1.5, 2.0, \ldots, 3.5\}$, where $\delta$ is the bias added to the logit of the tokens from the green list. The corresponding IOU values for \fancyname\ are shown on the \upd{left} panel of Figure~\ref{fig:simulation-s2}. 

\begin{figure}[ht]
    \centering
    \includegraphics[width=0.48\linewidth]{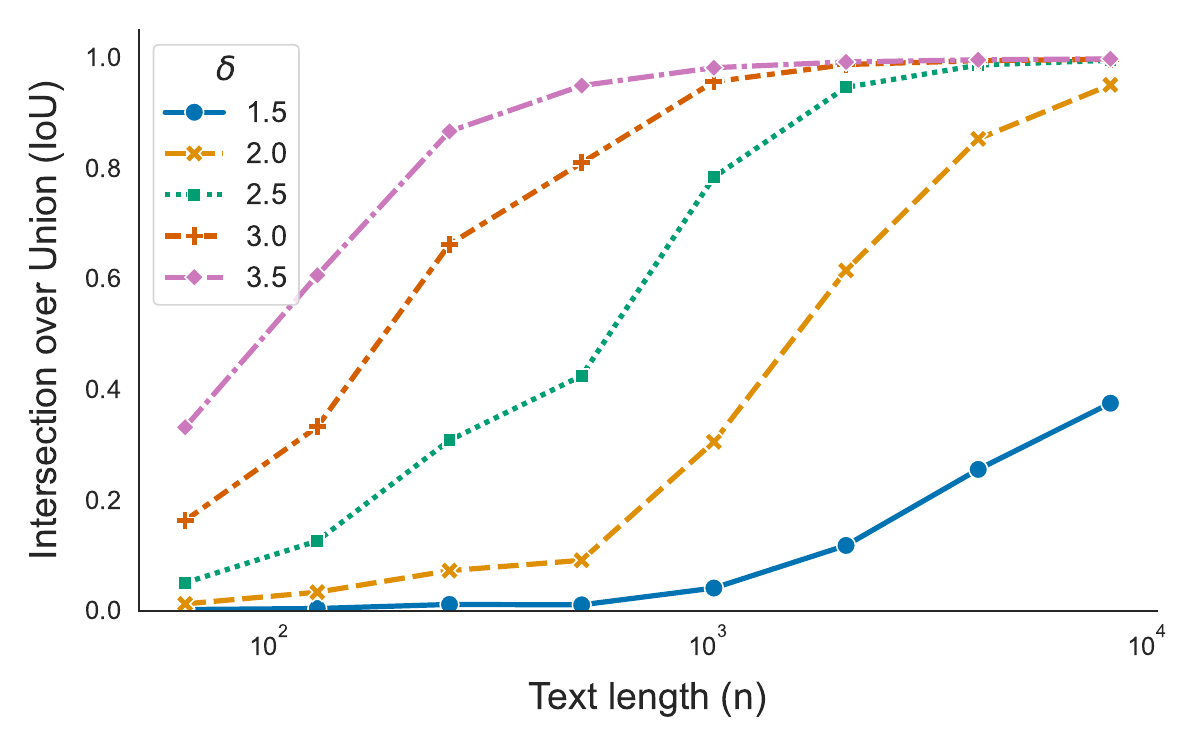}
    \includegraphics[width=0.48\linewidth]{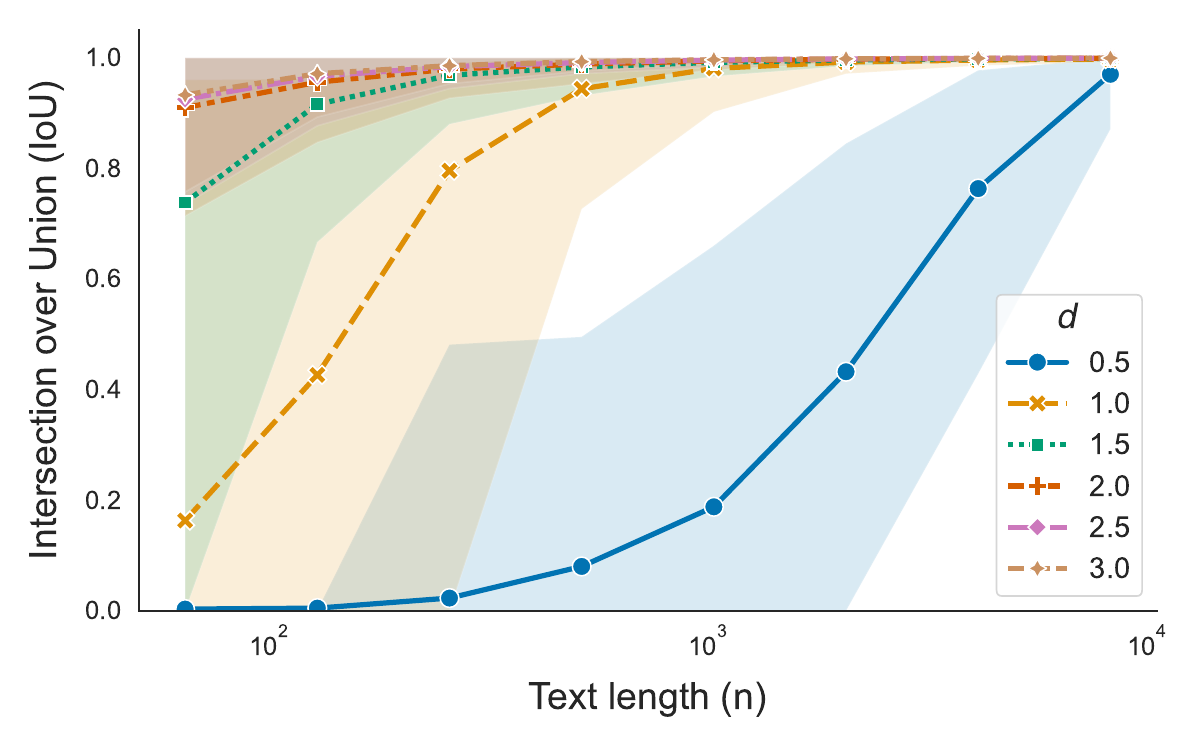}
    \caption{IOU of \texttt{WISER} algorithm across different length texts. (Left) for different levels of $\delta$ in Red-green watermarking; (Right) for different levels of $d$ as in Assumption~\ref{ass:alt-mean} for Gumbel watermarking scheme. X-axis is on a log scale for both plots.}
    \label{fig:simulation-s2}
\end{figure}

On the other hand, such control of the strength of the watermarking is not possible for the Gumbel watermarking scheme. Therefore, as an illustration, we perform another simulation study by directly generating the pivot statistic without any explicit choice of the underlying watermarking scheme. We choose the pivot statistic at the unwatermarked regions as i.i.d. exponentially distributed random variables with mean $1$ (note that, this is the distribution of the pivot statistic $Y_t$ corresponding to the Gumbel watermarking scheme for the unwatermarked tokens), and choose the watermarked regions as i.i.d. normal random variables with mean $(1+d)$ and variance $1$. The results for this simulation are illustrated on the \upd{right} panel of Figure~\ref{fig:simulation-s2}. Both the plots present in Figure~\ref{fig:simulation-s2} convey the asymptotic consistency of the algorithm, as well as establish an empirical validity of the error bound given in Theorem~\ref{thm:single-watermark}. \upd{Evidently from the left subplot, increasing $\delta$ increases the gap between null and the alternative, and IOU converges relatively faster to $1$, even with moderately large $n$. For the second experiment,} as $d$ decreases, watermark detection becomes more difficult, and a larger text length $n$ is required to achieve the same level of IOU.

\section{Numerical Experiments}\label{se:shortened-simu}%\vspace{-0.06 in}

While the previous section corroborates the theoretical results with numerical simulations, this section aims to demonstrate the superiority of the proposed \texttt{WISER}\ method over existing state-of-the-art (SOTA) algorithms, when applied in a real-world scenario. In \S\ref{se:shortened-simu-1}, we compare its accuracy against competitive methods on a benchmark dataset across multiple watermarking schemes, and in \S\ref{se:shortened-simu-2}, we assess its computational efficiency. Due to space constraints, we provide additional numerical experiments in Appendix \S\ref{se:simu}. We encourage the readers to check it out for more practical insights, including, \textbf{(i)} a detailed explanation of the benefits of \texttt{WISER}\ over other SOTA algorithms (\S\ref{se:wiser-best-why}), \textbf{(ii)} experiments quantifying the effect of watermark intensity and length across different algorithms (\S\ref{se:effect-wm-intensity}), and \textbf{(iii)} an ablation study (\S\ref{se:ablation-study}) highlighting the stability of our method across tuning parameter choices. The datasets and the large language models were acquired from the open-source \href{https://huggingface.co/models}{Huggingface} library. All the relevant reproducible codes and figures can be found in the anonymous \href{https://github.com/tukeystats/WISER}{GitHub repository}.
%\vspace{-0.06 in}
\subsection{Comparative performance of \texttt{WISER}}\label{se:shortened-simu-1}%\vspace{-0.06 in}

Within the relatively limited body of literature on the identification of watermarked segments from mixed-source texts, \texttt{Aligator}~\citep{zhao2024efficiently}, \texttt{SeedBS-NOT}~\citep{li2024segmenting}, and \texttt{Waterseeker}~\citep{pan2025waterseeker} algorithms have emerged as the leading methods, producing the most accurate results so far. For an extensive comparison, our experimental setup involves completion of randomly selected $200$ prompts from the Google C4 news dataset\footnote{\url{https://www.tensorflow.org/datasets/catalog/c4}}. We include language models spanning a wide range of scales: parameter sizes varying from $125$ million to $8$ billion, and vocabulary sizes ranging in $32$-$262$ thousands; for watermarking schemes, we consider Gumbel-max trick~\citep{w1-2}, Inverse transform~\citep{w2-2}, Red-green watermark~\citep{kirchenbauer2023watermark}, and Permute-and-Flip watermark~\citep{w2-7}. In each scenario, the first $50$ tokens of a news article have been provided as inputs to the language models, and \upd{$n = 2500$ output tokens are recorded. Among these output tokens, there are five watermarked segments with successively increasing lengths and increasing gaps; see Appendix~\S\ref{appendix:sim-settings} for further details.} The specific tuning parameter choices for \texttt{WISER}\ are provided in \S\ref{se:simu}. Table \ref{tab:results-gumbel-n2500} showcases the results for the Gumbel watermarking scheme. It is evident that \texttt{WISER}\ outperforms all the other algorithms across all the metrics for each model, \upd{achieving near optimal performance close to the oracle estimator}. The detailed discussions, including the specific metrics used and additional results and insights, are provided in Appendix~\S\ref{se:simu-1}.

% ==========================
% Table: Gumbel, K = 5, n = 2500
% ==========================

\begin{table}[ht]
\centering
\resizebox{\textwidth}{!}{
\begin{tabular}{llrrrrrr}
\toprule
\textbf{Model Name} & \textbf{Method} & \textbf{IOU} & \textbf{Precision} & \textbf{Recall} & \textbf{F1} & \textbf{RI} & \textbf{MRI} \\
\midrule
\multirow{5}{*}{
        \makecell[l]{facebook \\ opt-1.3b \\ (50272)}
    } & \ohl{\texttt{Oracle}} & \ohl{0.965 (0.87-0.99)} & \ohl{0.997} & \ohl{0.997} & \ohl{0.997} & \ohl{0.995 (0.98-1.00)} & \ohl{0.995 (0.98-1.00)} \\
 & \texttt{WISER} & \textbf{0.966 (0.84-0.99)} & \textbf{0.992} & \textbf{0.994} & \textbf{0.993} & \textbf{0.991 (0.98-1.00)} & \textbf{0.991 (0.98-1.00)} \\
 % & \texttt{Kadane} & 0.826 (0.60-0.98) & 0.524 & 0.943 & 0.674 & 0.950 (0.86-0.99) & 0.945 (0.84-0.99) \\
 & \texttt{Aligator} & 0.680 (0.25-0.92) & 0.168 & 0.958 & 0.285 & 0.957 (0.88-0.99) & 0.951 (0.86-0.99) \\
 & \texttt{SeedBS} & 0.435 (0.06-0.91) & 0.840 & 0.892 & 0.865 & 0.961 (0.91-0.99) & 0.933 (0.87-0.99) \\
 & \texttt{WaterSeeker} & 0.508 (0.31-0.64) & 0.812 & 0.763 & 0.787 & 0.906 (0.79-0.96) & 0.895 (0.77-0.95) \\
\midrule
\multirow{5}{*}{
        \makecell[l]{facebook \\ opt-125m \\ (50272)}
    } & \ohl{\texttt{Oracle}} & \ohl{0.953 (0.68-0.99)} & \ohl{0.996} & \ohl{0.996} & \ohl{0.996} & \ohl{0.992 (0.92-1.00)} & \ohl{0.991 (0.91-1.00)} \\
 & \texttt{WISER} & \textbf{0.950 (0.66-0.99)} & \textbf{0.996} & \textbf{0.988} & \textbf{0.992} & \textbf{0.988 (0.91-1.00)} & \textbf{0.986 (0.90-1.00)} \\
 % & \texttt{Kadane} & 0.825 (0.53-0.98) & 0.528 & 0.950 & 0.679 & 0.953 (0.86-0.99) & 0.948 (0.85-0.99) \\
 & \texttt{Aligator} & 0.827 (0.24-0.96) & 0.343 & 0.953 & 0.504 & 0.957 (0.67-1.00) & 0.953 (0.64-1.00) \\
 & \texttt{SeedBS} & 0.457 (0.05-0.93) & 0.840 & 0.926 & 0.881 & 0.964 (0.85-0.99) & 0.937 (0.80-0.99) \\
 & \texttt{WaterSeeker} & 0.502 (0.24-0.64) & 0.820 & 0.760 & 0.789 & 0.888 (0.47-0.95) & 0.876 (0.44-0.95) \\
\midrule
\multirow{5}{*}{
        \makecell[l]{google \\ gemma-3-270m \\ (262144)}
    } & \ohl{\texttt{Oracle}} & \ohl{0.943 (0.50-0.99)} & \ohl{0.994} & \ohl{0.994} & \ohl{0.994} & \ohl{0.991 (0.94-1.00)} & \ohl{0.990 (0.93-1.00)} \\
 & \texttt{WISER} & \textbf{0.938 (0.40-0.99)} & \textbf{0.995} & \textbf{0.981} & \textbf{0.988} & \textbf{0.988 (0.88-1.00)} & \textbf{0.987 (0.87-1.00)} \\
 % & \texttt{Kadane} & 0.789 (0.38-0.98) & 0.517 & 0.931 & 0.665 & 0.946 (0.86-0.99) & 0.939 (0.84-0.99) \\
 & \texttt{Aligator} & 0.649 (0.00-0.91) & 0.152 & 0.945 & 0.262 & 0.931 (0.11-0.99) & 0.923 (0.08-0.99) \\
 & \texttt{SeedBS} & 0.408 (0.00-0.91) & 0.826 & 0.859 & 0.842 & 0.938 (0.51-0.99) & 0.909 (0.48-0.99) \\
 & \texttt{WaterSeeker} & 0.515 (0.35-0.66) & 0.820 & 0.794 & 0.807 & 0.906 (0.76-0.96) & 0.895 (0.74-0.95) \\
\midrule
\multirow{5}{*}{
        \makecell[l]{meta-llama \\ Meta-Llama-3-8B \\ (128256)}
    } & \ohl{\texttt{Oracle}} & \ohl{0.977 (0.92-1.00)} & \ohl{1.000} & \ohl{1.000} & \ohl{1.000} & \ohl{0.998 (0.99-1.00)} & \ohl{0.998 (0.99-1.00)} \\
 & \texttt{WISER} & \textbf{0.971 (0.90-1.00)} & \textbf{0.998} & \textbf{1.000} & \textbf{0.999} & \textbf{0.997 (0.99-1.00)} & \textbf{0.997 (0.99-1.00)} \\
 % & \texttt{Kadane} & 0.711 (0.57-0.98) & 0.471 & 0.847 & 0.605 & 0.899 (0.86-1.00) & 0.888 (0.85-1.00) \\
 & \texttt{Aligator} & 0.874 (0.62-0.96) & 0.536 & 0.976 & 0.692 & 0.988 (0.97-0.99) & 0.987 (0.96-0.99) \\
 & \texttt{SeedBS} & 0.450 (0.03-0.97) & 0.809 & 0.899 & 0.851 & 0.981 (0.95-1.00) & 0.953 (0.89-1.00) \\
 & \texttt{WaterSeeker} & 0.469 (0.15-0.71) & 0.846 & 0.564 & 0.677 & 0.854 (0.59-0.94) & 0.841 (0.57-0.93) \\
\midrule
\multirow{5}{*}{
        \makecell[l]{mistralai \\ Mistral-7B-v0.1 \\ (32000)}
    } & \ohl{\texttt{Oracle}} & \ohl{0.949 (0.71-0.99)} & \ohl{0.997} & \ohl{0.997} & \ohl{0.997} & \ohl{0.993 (0.96-1.00)} & \ohl{0.992 (0.95-1.00)} \\
 & \texttt{WISER} & \textbf{0.947 (0.73-0.99)} & \textbf{0.990} & \textbf{0.996} & \textbf{0.993} & \textbf{0.993 (0.96-1.00)} & \textbf{0.992 (0.95-1.00)} \\
 % & \texttt{Kadane} & 0.754 (0.52-0.97) & 0.513 & 0.924 & 0.660 & 0.936 (0.86-0.99) & 0.929 (0.84-0.99) \\
 & \texttt{Aligator} & 0.649 (0.19-0.94) & 0.211 & 0.911 & 0.343 & 0.962 (0.86-0.99) & 0.955 (0.84-0.99) \\
 & \texttt{SeedBS} & 0.378 (0.06-0.87) & 0.829 & 0.846 & 0.838 & 0.955 (0.84-0.99) & 0.923 (0.81-0.98) \\
 & \texttt{WaterSeeker} & 0.469 (0.24-0.63) & 0.854 & 0.579 & 0.690 & 0.856 (0.59-0.95) & 0.844 (0.57-0.95) \\
\midrule
\multirow{5}{*}{
        \makecell[l]{princeton-nlp \\ Sheared-LLaMA-1.3B \\ (32000)}
    } & \ohl{\texttt{Oracle}} & \ohl{0.960 (0.78-0.99)} & \ohl{0.996} & \ohl{0.996} & \ohl{0.996} & \ohl{0.994 (0.97-1.00)} & \ohl{0.994 (0.96-1.00)} \\
 & \texttt{WISER} & \textbf{0.960 (0.79-0.99)} & \textbf{0.996} & \textbf{0.988} & \textbf{0.992} & \textbf{0.993 (0.95-1.00)} & \textbf{0.993 (0.95-1.00)} \\
 % & \texttt{Kadane} & 0.809 (0.56-0.97) & 0.516 & 0.929 & 0.664 & 0.948 (0.86-0.99) & 0.942 (0.85-0.99) \\
 & \texttt{Aligator} & 0.725 (0.38-0.93) & 0.197 & 0.960 & 0.327 & 0.971 (0.93-0.99) & 0.966 (0.92-0.99) \\
 & \texttt{SeedBS} & 0.412 (0.08-0.89) & 0.848 & 0.893 & 0.870 & 0.968 (0.92-0.99) & 0.937 (0.88-0.99) \\
 & \texttt{WaterSeeker} & 0.501 (0.34-0.63) & 0.845 & 0.698 & 0.764 & 0.896 (0.75-0.95) & 0.885 (0.74-0.95) \\
\midrule
\bottomrule
\end{tabular}}
\caption{\upd{Results for Gumbel Watermarking in experimental setting with $n = 2500$ and $K = 5$.}}
\label{tab:results-gumbel-n2500}
\end{table}

%\vspace{-0.06 in}
\subsection{Time Comparison} \label{se:shortened-simu-2}
%\vspace{-0.06 in}

% \begin{wrapfigure}{r}{0.5\linewidth} 
\begin{figure}[htbp]
\centering
\includegraphics[width=\linewidth]{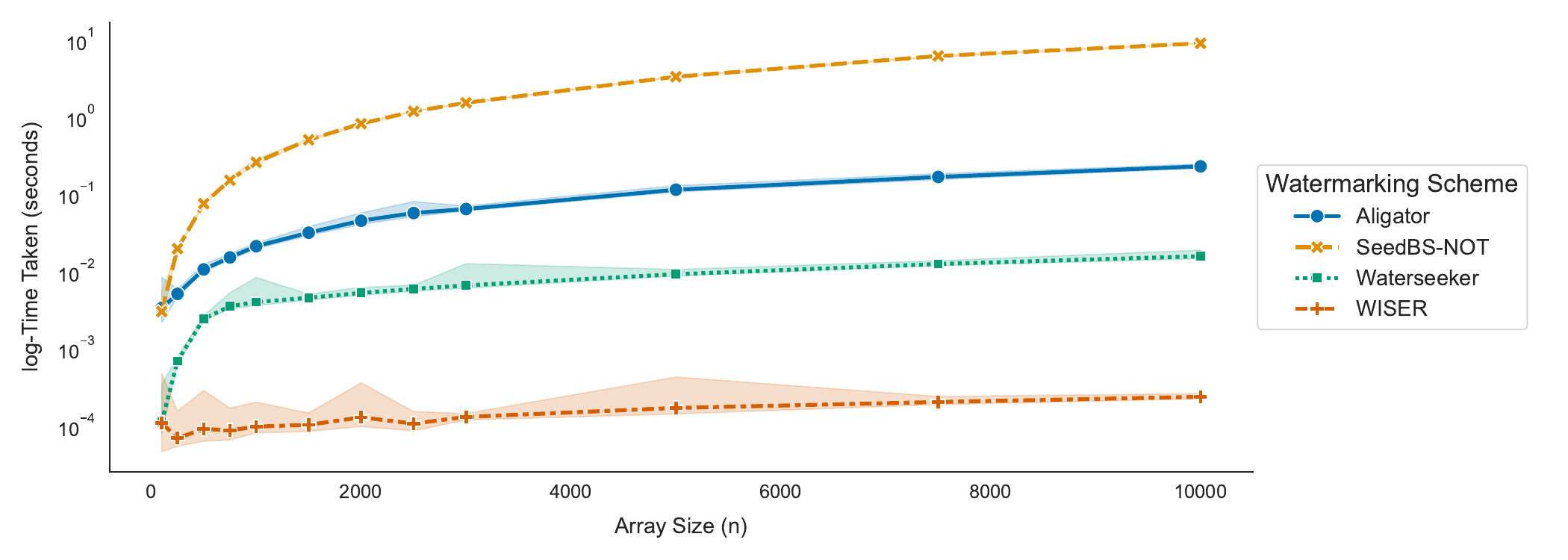} 
\caption{\small Time complexity (seconds) for various algorithms as a function of completion lengths ($n$). Y-axis is in log-scale, with $95\%$ confidence interval shown in shades.}
\label{fig:time-complexity} 
\end{figure}
% \end{wrapfigure}

As established in \S\ref{se:multiple-wm}, the proposed \texttt{WISER}\ algorithm achieves a computational complexity of $\approx O(n)$. Figure~\ref{fig:time-complexity} provides empirical evidence supporting this theoretical claim and, in addition, compares the runtime behavior of \texttt{WISER}\ with other state-of-the-art methods. For this experiment, we randomly create an $n/6$-length watermarked segment using the Gumbel-max trick with NTP generated by Google's Gemma-3 model; block size was taken as $\lceil \sqrt{n}\rceil$ and $\rho = 0.1$. The results clearly indicate that \texttt{WISER}\ consistently outperforms competing approaches in terms of computational efficiency, emerging as the fastest among all methods considered in this study.

% \section{Limitations and conclusion}
\section{Conclusion}
% \vspace{-0.06 in}
In this paper, we introduced \texttt{WISER}, a first-of-its-kind algorithm for efficient and theoretically valid segmentation of watermarked intervals in mixed-source texts. By framing watermark localization as an epidemic change-point problem, we bridged a novel connection between classical statistical theory and a modern challenge in generative AI, and also designed a linear time algorithm with provable consistency guarantees, which were further confirmed by our extensive numerical experiments. Beyond the findings of this paper, it is also crucial to theoretically investigate the robustness of the proposed algorithm under human edits~\citep{w3-0}; as a roadmap, we have already included some relevant discussion in Appendix \S\ref{se:mixed-source}. Its applicability to multimodal (e.g., audio, image, video) settings \citep{qiu2024multimodalwatermark} also presents opportunities for future research.

\section{Data availability satement}\label{data-availability-statement}

The prompts used in this paper are publicly available as \href{https://www.tensorflow.org/datasets/catalog/c4}{C4 News dataset} with the ``tensorflow'' package. All the large language models used in Section~\ref{se:shortened-simu} are available in \href{https://huggingface.co/models}{HuggingFace} repository.

%% please remove if you don't like it.
\vspace*{0.5cm}
\noindent
\textbf{AI disclosure statement}\footnote{\footnotesize \href{https://leidendeclaration.ai/}{\texttt{https://leidendeclaration.ai/}}}: ChatGPT 5 was used to simplify some proofs and correct grammatical errors; however, the authors take responsibility for the correctness of all claims in the paper.

%\begingroup
%\setstretch{}
% \bibliographystyle{plain}
%\bibliography{references}
%\endgroup

\bibliographystyle{chicago}
\bibliography{references}

\newpage
\appendix  
\begin{center}
    \Large Appendix
\end{center}
This appendix is devoted to detailed proofs of our theoretical statements, and additional experimental evidence justifying \texttt{WISER}. \S\ref{se:algo} formally describes the \fancyname\ algorithm. In \S\ref{se:mixed-source}, we discuss how Assumption \ref{ass:ind-of-pseudo} can be implemented even in presence of human-edits. \upd{Appendix \S\ref{se:kadane} includes a detailed discussion on the oracle algorithm for multiple watermarked patch segmentation, along with its connection to Kadane's algorithm.} \S\ref{se:simu} complements the short experimental section in \S\ref{se:shortened-simu} by providing extensive numerical studies concerning the empirical behavior of \texttt{WISER}. Finally, in \S\ref{se:proof}, we provide the detailed mathematical arguments behind \texttt{WISER}.

\section{\fancyname\ algorithm}\label{se:algo}

\begin{algorithm}
    \caption{Subroutine \texttt{Block\_Thresholding} of \fancyname}\label{alg:watermark-block-thres}
    \KwInput{$(X_i)_{i \in [n]}$, block size $b$, threshold $\mathcal{Q}$}
    \For{$k \gets 1$ \textbf{to} $\lceil n/b \rceil$}{
      $B_k \;\gets\; [(k-1) b + 1,\;
             \min (kb,\;n\bigr)\bigr].$ \;
      $S_k \leftarrow \sum_{l \in B_k} X_l.$ \;
    }
    $\widetilde{\mathcal{M}} \gets \{ \}.$ \;
    \For{\upd{$k \gets 1$ \textbf{to} $\lceil n/b \rceil$}}{
      \If{$S_k> \mathcal{Q}$}{
        append $k$ to $\widetilde{\mathcal{M}}$ \;
      }
      % $k \leftarrow k+1.$ \;
    }
    \KwRet{$\widetilde{\mathcal{M}}$}\;
\end{algorithm}

\begin{algorithm}
    \caption{\upd{Subroutine \texttt{Merging} of \fancyname}}\label{alg:watermark-merging}
    \KwInput{$\widetilde{\mathcal{M}} = \{ k_1, \dots, k_m\}$, tuning parameter $c$}
    $I \gets \{\}$, $s \gets \text{NULL}, e \gets \text{NULL}$\;
    \For{$j \gets 1$ to $m$}{
        \If{$k_j - 1 \notin \mathcal{M}$ \textbf{and} $k_j + 1 \in \mathcal{M}$}{
            $s \gets j$\;
        }
        \ElseIf{$k_j - 1 \in \mathcal{M}$ \textbf{ and } $k_j + 1 \in \mathcal{M}$}{
            $e \gets j$\;
        }
        \If{$s \neq \mathrm{NULL}$ \textbf{and} $e \neq \mathrm{NULL}$ \textbf{and} $k_e - k_s \geq \lceil c\sqrt{\log n}\rceil$}{
            append $s$ and $e$ to $I$\;
            $s \gets \text{NULL}, e \gets \text{NULL}$\;
        }
    }
    \KwRet{$I$}\;
\end{algorithm}

% \begin{algorithm}
%     \caption{Subroutine \texttt{Merging} of \fancyname}\label{alg:watermark-merging}
%     \KwInput{$\widetilde{\mathcal{M}} = \{ k_1, \dots, k_m\}$, tuning parameter $c$}
%     $I \gets \{\}$ \;
%     \For{$j \gets 1$ to $m$}{
%       \If{$j$ odd \textbf{and} $k_j - 1 \notin K$ \textbf{and} $k_j + 1 \in K$}{
%         \If{$I = \{\}$ \textbf{or} $k_j - k_{I[\text{end}]} > \lceil c\sqrt{\log n \rceil}$}{
%           append $j$ to $I$ \;
%         }
%       }
%       \ElseIf{$j$ even \textbf{and} $k_j - 1 \in K$ \textbf{and} $k_j + 1 \notin K$}{
%         \If{$I = \{\}$ \textbf{or} $k_j - k_{I[\text{end}]} > \lceil c\sqrt{\log n \rceil}$}{
%           append $j$ to $I$ \;
%         }
%       }
%     }
%     \KwRet{$I$}\;
% \end{algorithm}

\begin{algorithm}[ht]
    \caption{Subroutine \texttt{Refined\_Local\_Search} of \fancyname}\label{alg:watermark-search}
    \KwInput{$\widetilde{\mathcal{M}} = \{ k_1, \dots, k_m\}$, $I = \{ i_1, \dots, i_{2\hat{K}} \}$, tuning parameters $\rho, C$}
    $M \gets \{ k_j : j \in I \}$\;
    Enumerate $M = \{ s_1, \dots, s_{2\hat{K}}\}$\;
    \For{$j \gets 1$ to $\hat{K}$}{
  ${D_j} \gets [\lfloor(s_{2j-1} - C \log n) b\rfloor \vee 1, \lfloor (s_{2j}  + C \log n)b \rfloor \wedge n ] $ . \;
}
$\Tilde{d} \gets \big(\sum_{j=1}^{\hat{K}}|D_j|\big)^{-1} \sum_{j=1}^{\hat{K}}\sum_{s \in D_j} (X_s - \mu_0).$ \;
\For{$j \gets 1$ to $\hat{K}$}{
  $L_j \gets [\lfloor(s_{2j-1} - C \log n) b \rfloor \vee 1, \lfloor (s_{2j-1} +C \log n)b\rfloor]$ \;
  $R_j \gets ( \lfloor(s_{2j}-C \log n)b\rfloor , \lfloor (s_{2j}  + C \log n)b \rfloor \wedge n].$ \;
  $\hat{I}_j \gets \argmin_{s \in L_j, t \in R_j} \sum_{k\in {D}_j \setminus [s, t] } (X_k -\mu_0 - \rho \tilde{d}).$ \;
}
\KwRet{\text{List of estimated watermarked intervals } $\hat{I}_j, j \in [\hat{K}]$.}
\end{algorithm}

\begin{algorithm}[ht]
    \caption{\fancyname}\label{alg:watermark}
    \KwInput{$(X_i)_{i \in [n]}$, block size $b$, threshold $\mathcal{Q}$, tuning parameters $c, C$ and $\rho$}
    $\widetilde{\mathcal{M}} \gets \texttt{Block\_Thresholding}((X_i)_{i \in [n]}, b, Q)$\;
    $I \gets \texttt{Merging}(\widetilde{\mathcal{M}}, c)$\;
    $L \gets \texttt{Refined\_Local\_Search}(\widetilde{\mathcal{M}}, I, \rho, C)$\;
    \KwRet{\text{List of estimated watermarked intervals } $L$}\;
\end{algorithm}

\section{Dealing with mixed-source texts}\label{se:mixed-source}

The assumption of knowledge of $\zeta_t$ can be too restrictive in most realistic scenarios where human edits are possible. In such cases, one assumes that the pseudo-random numbers $\zeta_t$ can also be reconstructed based on the available text and a \texttt{Key} with the help of a \textit{Hash function} $\mathcal{A}$:
\allowdisplaybreaks \begin{align}
    \zeta_t = \mathcal{A}(\omega_{(t-m):(t-1)}, \texttt{Key}). \label{eq:hash}
\end{align}
Suppose $\tilde{\omega}_1 \ldots \tilde{\omega}_n$ be a mixed-source text, with segments of uninterrupted watermarked texts punctuated by human-generated texts through substitution, insertion or deletion of LLM-generated texts. As a reference, we refer the readers to Procedure 1 of human edits in \cite{w3-0}. Note that it is impossible for any verifier to retrieve the exact pseudo-random numbers corresponding to each token in a mixed-source text. Nevertheless, with the knowledge of the hash function and \texttt{Key}, one can construct $\tilde{\zeta}_t= \mathcal{A}(\omega_{(t-m):(t-1)}, \texttt{Key})$. Once there is a stretch of uninterrupted watermarked interval with length at least $m\geq 1$, the pseudo-random numbers $\zeta_{t+m}, \zeta_{t+m+1}, \ldots$ can be reliably re-constructed through (\ref{eq:hash}) as the corresponding $\tilde{\zeta}$'s. On the other hand, if $\zeta_t$ is not the correct pseudo-random variable associated with $\omega_t$, then either 
\begin{enumerate}
    \item  $\tilde{\omega}_t$ is human-generated, in which case Working Hypothesis 2.2 of \cite{li2025statistical} applies to yield $\omega_t$ and $\tilde{\zeta}_t$ are independent conditional on $\IP_t$;
    \item $\tilde{\omega}_t$ is watermarked, which must mean if $\tilde{\omega}_t=\omega_{\tilde{t}}$ in the original watermarked text, then $\omega_{\tilde{t}} = S(P_{\tilde{t}}, \zeta_{\tilde{t}})$ for some true, unknown, pseudo-random number $\zeta_{\tilde{t}}$. In this case, we invoke the sensitive nature of the hash function to conclude that $\omega_t$ and $\tilde{\zeta}_t$ are independent.
\end{enumerate}
This argument appears in more detail in Section A.1 of \cite{w3-0}. In conclusion, the verifier can always obtain access to a sequence ${\zeta}_{m:n}$ corresponding to a given text $\omega_{1:n}$ such that 
(i) if $\omega_{(t-m):t}$ is NOT watermarked then $\omega_t$ and $\zeta_t$ are independent conditional on $\IP_t$; (ii), otherwise, $\zeta_t$ and $\omega_t$ may be intricately dependent on each other. This latter observation is crucial to our subsequent analysis and proposals, for it allows us to construct valid pivotal statistics. In light of the above discussion, we can be excused in making the Assumption \ref{ass:ind-of-pseudo}. 

Assumption \ref{ass:ind-of-pseudo} can be seen through the lens of constructing the $\tilde{\zeta}_t$ with $m=1$. We make this slightly simplistic assumption to avoid the unnecessary measure-theoretical niceties that might potentially cloud the novelty of our approach. Even with this assumption, proposing a computationally efficient algorithm and establishing its theoretical validity in a setting with multiple watermarked intervals, is an arguably non-trivial task in itself, and to the best of our knowledge, our paper is the first one to deal with this problem with full mathematical rigor.  Finally, for a general mixed-source text, we remark that the \texttt{WISER}\ algorithm can be trivially extended to the setting with general $m\in \N$ by padding an interval of length $m$ to the left of the watermarked segments located by \texttt{WISER}. In the following, we include a further discussion on mixed-source texts along with data misappropriation.

\subsection{Further discussion on Assumption \ref{ass:alt-mean}}\label{se:app-mixed}

A general way to deal with mixed-source texts has been proposed in \cite{cai2025statistical}. Therein, the authors allow for modifications to the watermarked tokens by allowing that $d_{\operatorname{TV}}(\omega_t, S(\IP_t, \zeta_t))>0$, as long as the distance is not too large. In light of this, Assumption \ref{ass:alt-mean} can be further appended by the following:

\textbf{(B)}: Let $\omega$ be the token generated by modifying $\omega':=S(P, \zeta)$, and let $h(Y), h(Y')$ be accordingly defined via $\omega, \omega'$. Then it holds that $$\inf_{\IP \in \mathcal{P}} |\IE_{1,\IP}[h(Y) - h(Y')]|\leq d \tau, \ \tau\in (0,1),$$ 
where $\mathcal{P}$ and $d$ are same as in Assumption \ref{ass:alt-mean}. 

Assumption \textbf{(B)} along with Assumption \ref{ass:alt-mean} ensure that even under potential modification, some degree of separation (characterized by $d(1-\tau)$) \upd{remains} between the means of the distribution of pivot statistics under null and under watermarking. It is conceivable that our algorithm is consistent even for this case, and the theoretical guarantees should follow more or less similarly; however, to keep the discussion focused, and to convey the key takeaways unhindered, we restrict ourselves to Assumption \ref{ass:alt-mean}.

\subsection{\upd{Performance results for mixed-source texts}}\label{se:mixed-source-results}

\upd{To assess the performance of \texttt{WISER} for robust watermark detection under human-edits, we consider the following setting. We consider a setting similar to~\ref{sim:setup1} for the Gumbel-max watermarking scheme, but with an additional postprocessing step that randomly selects tokens with probability $p$ and performs substitution with synonyms, or adds new tokens in those positions, or deletes the existing tokens. The human edit proportion $p$ is chosen at two different levels, \textit{Low} ($10\%$) and \textit{High} ($25\%$).  For a sequence of generated tokens $\omega_{1:t}$, we calculate the pseudo-random sequences as the skipgram hash function from the last $5$ tokens, i.e., $\zeta_t = \text{Hash}(w_{(t-5):t})$. The performance metrics for different detection methods under these human-edited texts are detailed in Table~\ref{tab:results-mixed}. For the different large language models, the methods and the error metric considered in Table \ref{tab:results-mixed}, we refer the readers to \S\ref{se:simu-1}. Increasing the edit proportion results in a drop in the performance metrics for both the models and all detection methods. The proposed WISER retains the best IOU metrics at all combinations, with the Waterseeker algorithm being a close second. However, the performance of WISER remains close to the Oracle estimator (see Algorithm~\ref{algo:kadane}), showcasing its resilience against edit attacks.}

\begin{table}[ht]
    \centering
    \resizebox{\textwidth}{!}{
        \begin{tabular}{lllrrrrrr}
            \toprule
            \textbf{Model Name} & \textbf{Regime} & \textbf{Method} & \textbf{IOU} & \textbf{Precision} & \textbf{Recall} & \textbf{F1} & \textbf{RI} & \textbf{MRI} \\
            \midrule
            
            \multirow{10}{*}{\makecell[l]{facebook-opt-125m \\ (50272)}} 
            & \multirow{5}{*}{Low}  
            & \ohl{\texttt{Oracle}}      & \ohl{0.773 (0.17-0.94)} & \ohl{0.985}          & \ohl{0.394} & \ohl{0.563} & \ohl{0.663 (0.42-0.83)}          & \ohl{0.635 (0.39-0.78)}          \\ 
            & & \texttt{WISER}       & \textbf{0.649 (0.00-0.99)}          & 0.955          & \textbf{0.376}          & \textbf{0.540}          & 0.719 (0.20-0.88)          & \textbf{0.661 (0.00-0.87)}          \\ 
            & & \texttt{Aligator}    & 0.000 (0.00-0.00)          & 0.038          & 0.011          & 0.017          & 0.283 (0.20-0.65)          & 0.080 (0.00-0.45)          \\ 
            & & \texttt{SeedBS}      & 0.017 (0.00-0.34)          & 0.043          & 0.010          & 0.016          & 0.288 (0.20-0.70)          & 0.089 (0.00-0.59)          \\ 
            & & \texttt{WaterSeeker} & 0.276 (0.14-0.45)          & \textbf{0.999} & 0.360          & 0.529          & \textbf{0.745} (0.47-0.89)          & 0.619 (0.30-0.79)          \\ 
            \cmidrule{2-9}
            & \multirow{5}{*}{High} 
            & \ohl{\texttt{Oracle}}      & \ohl{0.359 (0.16-0.59)}          & \ohl{0.990}          & \ohl{0.396}          & \ohl{0.566}          & \ohl{0.918 (0.72-0.98)}          & \ohl{0.794 (0.54-0.85)}          \\ 
            & & \texttt{WISER}       & \textbf{0.352 (0.00-0.61)}          & 0.955          & \textbf{0.355}          & \textbf{0.518}          & \textbf{0.870 (0.20-0.98)} & \textbf{0.745 (0.00-0.84)} \\ 
            & & \texttt{Aligator}    & 0.010 (0.00-0.08)          & 0.198          & 0.133          & 0.159          & 0.596 (0.20-0.88)          & 0.396 (0.00-0.69)          \\ 
            & & \texttt{SeedBS}      & 0.144 (0.00-0.73)          & 0.344          & 0.080          & 0.130          & 0.538 (0.20-0.86)          & 0.369 (0.00-0.74)          \\ 
            & & \texttt{WaterSeeker} & 0.247 (0.15-0.35)          & \textbf{0.980}          & 0.283          & 0.439          & 0.796 (0.57-0.97)          & 0.651 (0.42-0.83)          \\ 

            \midrule
            
            \multirow{10}{*}{\makecell[l]{princeton-nlp-\\Sheared-LLaMA-\\1-3B (32000)}} 
            & \multirow{5}{*}{Low} 
            & \ohl{\texttt{Oracle}}      & \ohl{0.894 (0.73-0.96)}          & \ohl{0.978}          & \ohl{0.391}          & \ohl{0.559}          & \ohl{0.581 (0.31-0.72)}          & \ohl{0.572 (0.31-0.71)}          \\ 
            & & \texttt{WISER}       & \textbf{0.825 (0.59-1.00)} & 0.995          & 0.298          & 0.459          & 0.453 (0.21-0.86)          & 0.444 (0.21-0.84)          \\ 
            & & \texttt{Aligator}    & 0.009 (0.00-0.05)          & 0.136          & 0.132          & 0.134          & 0.562 (0.20-0.86)          & 0.362 (0.00-0.67)          \\ 
            & & \texttt{SeedBS}      & 0.029 (0.00-0.41)          & 0.090          & 0.019          & 0.031          & 0.383 (0.20-0.76)          & 0.187 (0.00-0.65)          \\ 
            & & \texttt{WaterSeeker} & 0.282 (0.16-0.43)          & \textbf{1.000} & \textbf{0.357}          & \textbf{0.526}          & \textbf{0.748 (0.46-0.91)} & \textbf{0.623 (0.30-0.81)}          \\ 
            \cmidrule{2-9}
            
            & \multirow{5}{*}{High}  
            & \ohl{\texttt{Oracle}}      & \ohl{0.741 (0.37-0.93)}          & \ohl{0.993}          & \ohl{0.397} & \ohl{0.567} & \ohl{0.690 (0.45-0.85)}          & \ohl{0.657 (0.40-0.78)} \\ 
            & & \texttt{WISER}       & \textbf{0.494 (0.00-0.97)}         & 0.915          & 0.297          & 0.448          & 0.694 (0.20-0.87)          & \textbf{0.606 (0.00-0.85)}          \\ 
            & & \texttt{Aligator}    & 0.000 (0.00-0.00)          & 0.012          & 0.006          & 0.008          & 0.265 (0.20-0.65)          & 0.062 (0.00-0.45)          \\ 
            & & \texttt{SeedBS}      & 0.003 (0.00-0.00)          & 0.005          & 0.002          & 0.003          & 0.255 (0.20-0.66)          & 0.053 (0.00-0.45)          \\ 
            & & \texttt{WaterSeeker} & 0.267 (0.16-0.43)          & \textbf{0.992}          & \textbf{0.334}          & \textbf{0.500}          & \textbf{0.733 (0.50-0.90)}          & 0.603 (0.34-0.79)          \\ 
            \bottomrule
        \end{tabular}
    }
    \caption{\upd{Performance metrics for various watermarking detection algorithms under varying proportions of human edits for mixed source texts. The oracle refers to the algorithm solving~\eqref{eq:multiple-oracle-est} using Kadane's algorithm and the knowledge of the true number of watermarked patches; see Algorithm~\ref{algo:kadane}.}}
    \label{tab:results-mixed}
\end{table}

\section{\upd{Discussion on segmentation of multiple watermarked patches}}\label{se:kadane}

\upd{In the following, we first prove a general result about the consistency of an oracle estimator for watermarked patches. Given a set of disjoint intervals $I_1, \ldots, I_K\subset \{ 1, \ldots, n\}$, Recall $\mathcal{I}$:}
\begin{equation}
    \upd{
    \mathcal{I}:= \{ \{I_1, I_2, \ldots, I_K\}: I_j \subset \{ 1, \ldots, n\}, I_1 < I_2 < \ldots < I_K\}},    
    \label{eq:collection-of-intervals}
\end{equation}
\upd{where we assume that they are ordered left to right, i.e. $I_{k, R} +1 < I_{k+1, L}$ for all $k$, and write $I_1 < I_2 < \ldots < I_K$ to express this.}

\begin{theorem} \label{thm:multiple-oracle}
     \upd{Let $\{X_t\}_{t=1}^n:= \{h(Y_t)\}_{t=1}^n$ be the pivot statistics based on the given input text, and assume that $\{I_1, I_2, \ldots, I_K\} \in \mathcal{I}_K$ are the $K$ disjoint watermarked intervals with $K=O(1)$. Grant Assumption \ref{ass:alt-mean}. Denote}
    \begin{equation*}
        \upd{\varepsilon_t
        =} \begin{cases}
        \upd{X_t-\mu_0,} & \upd{t \notin I_0:= \cup_{j=1}^K I_j,}  \\
        \upd{X_t - \mu_t,} & \upd{\mu_t:=\IE_{1, \IP_t}[X_t], t\in I_0.}
    \end{cases}
    \end{equation*}
 \upd{Recall $I^\star$ from Assumption \ref{ass:min-sep}, and assume that $I^\star \gg \log n$ as $n\to \infty$. Suppose the class of distributions $\mathcal{P}$ is closed and compact, and there exists $\eta>0$ such that $\sup_{\IP\in \mathcal{P}}\IE_{1,\IP}[\exp(\eta|\varepsilon|)]<\infty$. Moreover, assume that $\min\{\Var_0(\varepsilon), \sup_{\IP}\Var_{1,P}(\varepsilon)\}>0$. Consider the oracle estimate in \eqref{eq:multiple-oracle-est} employing the knowledge of $K$,with $\rho$ and $\tilde{d}$ satisfying $d > 2 \rho \tilde{d}$. If there exists a constant $c>0$ such that ${d}\geq c$, then $\sum_{j\in [K]}|\hat{I}_j \Delta I_j|= O_{\IP}((\rho \tilde{d})^{-1}\big)$.}
\end{theorem}

\upd{Supplementing the theoretical validity of the oracle estimator in Theorem~\ref{thm:multiple-oracle}, we also indicate that the optimization problem of~\eqref{eq:multiple-oracle-est} can be efficiently solved by a dynamic programming approach motivated by Kadane's algorithm~\citep{kadane2023two}. The detailed steps of this algorithm are illustrated in Algorithm~\ref{algo:kadane}. We note that this oracle algorithm requires the knowledge of $K$, the true number of watermarked patches, which remains unidentified in practice, thereby limiting its applicability.}

% \color{black}
% Subhrajyoty write the algo, tables and discussion here. Label the algo by \texttt{algo:kadane}.

\begin{algorithm}[htbp]
\caption{\upd{Oracle Estimator using Kadane's Algorithm}}\label{algo:kadane}
\KwIn{Pivot statistics $(X_i)_{i \in [n]}$, integer $K \ge 1$}
% \KwOut{List of Detected intervals}
\lIf{$n = 0 \textbf{ or } K \le 0$}{\Return $0, \emptyset$}
$S \gets \text{cumulative\_sum}((X_i)_{i\in[n]})$\;
$P \gets $ matrix of size $(K+1) \times (n+1)$ initialized to $(-\infty, \emptyset)$\;
$P[0,0] \gets (0, \emptyset)$\;
\For{$i \gets 1$ \KwTo $n$}{
    $P[0,i] \gets (0, \emptyset)$\;
}
\For{$t \gets 1$ \KwTo $K$}{
    $P[t, 0] \gets (0, \emptyset)$\;
    $M \gets -\infty$, $L_{M} \gets \emptyset$,  $j^* \gets 1$\;
    \For{$i \gets 1$ \KwTo $n$}{
        $(S_{prev}, L_{prev}) \gets P[t-1, i-1]$\;
        $\omega \gets S_{prev} - S[i-1]$\;
        \If{$\omega > M$}{
            $M \gets \omega$, 
            $L_{M} \gets L_{prev}$,
            $j^* \gets i$\;
        }
        \eIf{$M = -\infty$}{
            $C_{extend} \gets (-\infty, \emptyset)$\;
        }{
            $S_{curr} \gets M + S[i]$\;
            $C_{extend} \gets (S_{curr}, L_{M} \cup \{(j^*, i)\})$\;
        }
        $C_{omit} \gets P[t, i-1]$\;
        \eIf{$C_{extend}.S \geq C_{omit}.S$}{
            $P[t,i] \gets C_{extend}$\; 
        }{
            $P[t,i] \gets C_{omit}$\;
        }
    }
    \Return{$P[K,n].L$}\;
}
\end{algorithm}

\upd{A conventional approach would be to execute Algorithm~\ref{algo:kadane} with a maximum possible value of $K$ (i.e., $K_{\mathrm{max}}$) and subsequently perform a selection procedure to prune some of the chosen intervals. This can be regarded as a search-then-filter approach in contrast to the filter-then-search approach adapted by \texttt{WISER}, allowing \texttt{WISER} to efficiently constrain the search space a priori to reduce false positive detections. Unfortunately, applying standard multiple testing corrections in the search-then-filter approach is statistically invalid due to inherent post-selection bias. Because the candidate intervals emitted by Kadane's algorithm are optimized to yield highly positive summands, they violate classical inference assumptions and are susceptible to not being rejected by the selection procedure thereafter. A snapshot of this phenomenon is illustrated in Tables~\ref{tab:results-gumbel-kadane}-\ref{tab:results-inverse-kadane} for Gumbel-max and inverse watermarking, respectively. In this exercise, we consider the scenario~\ref{sim:setup2} with the Gumbel-max watermarking scheme, and perform the selective inference by applying the e-BH procedure~\citep{wang2022eBH} for varying choices of $K_{\mathrm{max}}$. These heuristic configurations uniformly underperform relative to \texttt{WISER}. Choosing a higher value of $K_{\max}$ typically results in the selection of numerous spurious intervals alongside the genuinely watermarked regions. Because Kadane's algorithm selects only the intervals with higher sums, the e-BH procedure erroneously rejects the null hypotheses for these regions, failing to discard many of these false positives. While the substantial overlap between the true intervals and these detected intervals maintains high recall and IOU, this over-selection severely reduces the precision and F1-scores. Figure~\ref{fig:kadane-detections} visually shows an evidence for the setting~\ref{sim:setup1} under the Gumbel-max watermarking scheme. As $K_{\mathrm{max}}$ is continuously increased, the performances seem to plateau at a suboptimal level.}

\begin{figure}
    \centering
    \includegraphics[width=\linewidth]{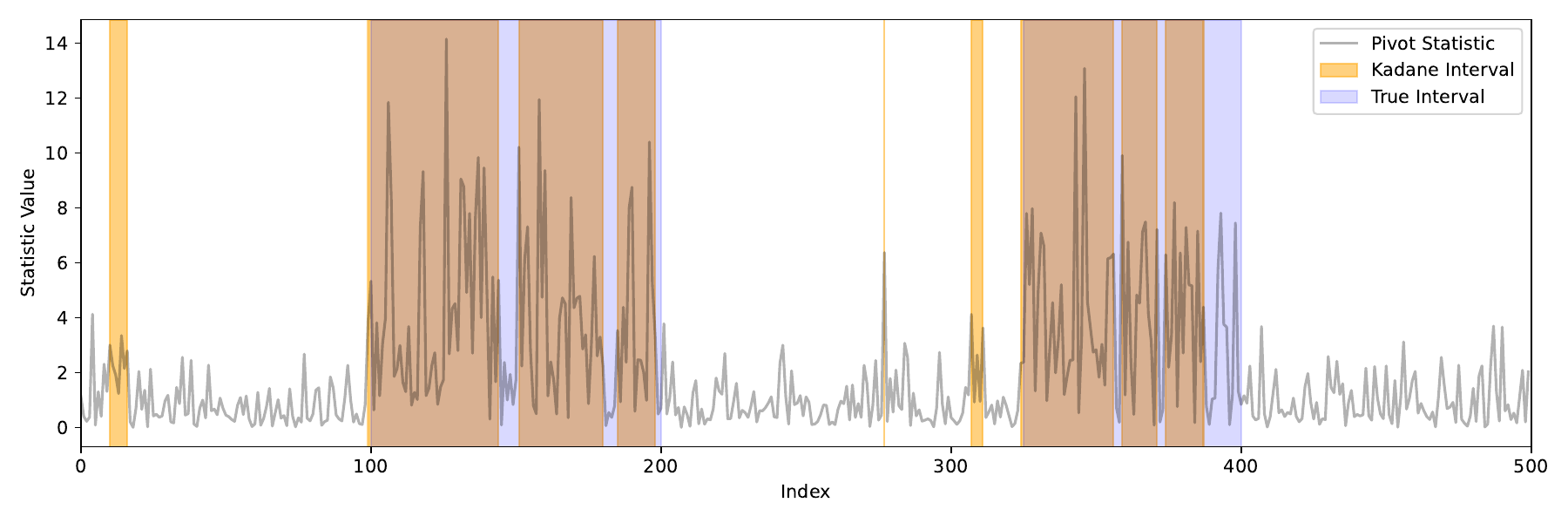}
    \caption{\upd{Detected intervals using the combination of Kadane's algorithm and e-BH thresholding procedure under setting~\ref{sim:setup1} with Gumbel-max watermarking scheme.}}
    \label{fig:kadane-detections}
\end{figure}

% The same empirical results hold across different watermarking schemes and experimental settings; we only present the result for a single setting to keep the discussions on point.

\begin{table}[ht]
\centering
\resizebox{\textwidth}{!}{
\begin{tabular}{llrrrrrr}
\toprule
\textbf{Model Name} & \textbf{Method} & \textbf{IOU} & \textbf{Precision} & \textbf{Recall} & \textbf{F1} & \textbf{RI} & \textbf{MRI} \\
\midrule
\multirow{6}{*}{
        \makecell[l]{facebook \\ opt-1.3b \\ (50272)}
    } & \texttt{WISER} & \textbf{0.951 (0.83-0.99)} & \textbf{0.897} & \textbf{0.992} & \textbf{0.942} & \textbf{0.988 (0.97-1.00)} & \textbf{0.988 (0.97-1.00)} \\
 & \texttt{Kadane + e-BH} ($K_{\mathrm{max}} = 6$) & 0.730 (0.58-0.99) & 0.852 & 0.852 & 0.852 & 0.906 (0.87-1.00) & 0.894 (0.85-1.00) \\
 & \texttt{Kadane + e-BH} ($K_{\mathrm{max}} = 8$) & 0.803 (0.61-0.98) & 0.659 & 0.922 & 0.768 & 0.943 (0.86-1.00) & 0.936 (0.85-1.00) \\
 & \texttt{Kadane + e-BH} ($K_{\mathrm{max}} = 10$) & 0.826 (0.60-0.98) & 0.524 & 0.943 & 0.674 & 0.950 (0.86-0.99) & 0.945 (0.84-0.99) \\
 & \texttt{Kadane + e-BH} ($K_{\mathrm{max}} = 15$) & 0.850 (0.62-0.96) & 0.346 & 0.970 & 0.511 & 0.952 (0.88-0.98) & 0.949 (0.87-0.98) \\
 & \texttt{Kadane + e-BH} ($K_{\mathrm{max}} = 20$) & 0.850 (0.65-0.95) & 0.256 & 0.972 & 0.405 & 0.946 (0.92-0.97) & 0.944 (0.92-0.97) \\
\midrule
\multirow{6}{*}{
        \makecell[l]{facebook \\ opt-125m \\ (50272)}
    } & \texttt{WISER} & \textbf{0.931 (0.54-0.99)} & \textbf{0.929} & \textbf{0.984} & \textbf{0.956} & \textbf{0.984 (0.86-1.00)} & \textbf{0.982 (0.85-1.00)} \\
 & \texttt{Kadane + e-BH} ($K_{\mathrm{max}} = 6$) & 0.743 (0.52-0.99) & 0.875 & 0.875 & 0.875 & 0.913 (0.86-1.00) & 0.903 (0.85-1.00) \\
 & \texttt{Kadane + e-BH} ($K_{\mathrm{max}} = 8$) & 0.802 (0.54-0.99) & 0.663 & 0.928 & 0.773 & 0.944 (0.86-1.00) & 0.937 (0.85-1.00) \\
 & \texttt{Kadane + e-BH} ($K_{\mathrm{max}} = 10$) & 0.825 (0.53-0.98) & 0.528 & 0.950 & 0.679 & 0.953 (0.86-0.99) & 0.948 (0.85-0.99) \\
 & \texttt{Kadane + e-BH} ($K_{\mathrm{max}} = 15$) & 0.842 (0.60-0.97) & 0.349 & 0.977 & 0.514 & 0.953 (0.87-0.98) & 0.949 (0.86-0.98) \\
 & \texttt{Kadane + e-BH} ($K_{\mathrm{max}} = 20$) & 0.842 (0.59-0.95) & 0.255 & 0.970 & 0.404 & 0.947 (0.91-0.97) & 0.944 (0.90-0.97) \\
\midrule
\multirow{6}{*}{
        \makecell[l]{google \\ gemma-3-270m \\ (262144)}
    } & \texttt{WISER} & \textbf{0.910 (0.00-0.99)} & \textbf{0.858} & 0.960 & \textbf{0.906} & \textbf{0.958 (0.11-1.00)} & \textbf{0.956 (0.08-1.00)} \\
 & \texttt{\texttt{Kadane + e-BH} ($K_{\mathrm{max}} = 6$)} & 0.718 (0.32-0.99) & 0.852 & 0.852 & 0.852 & 0.909 (0.85-1.00) & 0.898 (0.84-1.00) \\
 & \texttt{\texttt{Kadane + e-BH} ($K_{\mathrm{max}} = 8$)} & 0.775 (0.36-0.99) & 0.655 & 0.917 & 0.764 & 0.940 (0.86-0.99) & 0.932 (0.85-0.99) \\
 & \texttt{\texttt{Kadane + e-BH} ($K_{\mathrm{max}} = 10$)} & 0.789 (0.38-0.98) & 0.517 & 0.931 & 0.665 & 0.946 (0.86-0.99) & 0.939 (0.84-0.99) \\
 & \texttt{\texttt{Kadane + e-BH} ($K_{\mathrm{max}} = 15$)} & 0.809 (0.36-0.96) & 0.345 & 0.967 & 0.509 & 0.948 (0.88-0.98) & 0.943 (0.87-0.98) \\
 & \texttt{\texttt{Kadane + e-BH} ($K_{\mathrm{max}} = 20$)} & 0.831 (0.44-0.94) & 0.258 & \textbf{0.980} & 0.408 & 0.946 (0.92-0.96) & 0.944 (0.91-0.96) \\
\midrule
\multirow{6}{*}{
        \makecell[l]{meta-llama \\ Meta-Llama-3-8B \\ (128256)}
    } & \texttt{WISER} & \textbf{0.961 (0.88-0.99)} & \textbf{0.983} & \textbf{1.000} & \textbf{0.991} & \textbf{0.997 (0.99-1.00)} & \textbf{0.996 (0.99-1.00)} \\
 & \texttt{\texttt{Kadane + e-BH} ($K_{\mathrm{max}} = 6$)} & 0.683 (0.58-0.99) & 0.815 & 0.815 & 0.815 & 0.882 (0.87-1.00) & 0.868 (0.85-1.00) \\
 & \texttt{\texttt{Kadane + e-BH} ($K_{\mathrm{max}} = 8$)} & 0.698 (0.57-0.98) & 0.594 & 0.832 & 0.693 & 0.891 (0.86-1.00) & 0.878 (0.85-1.00) \\
 & \texttt{\texttt{Kadane + e-BH} ($K_{\mathrm{max}} = 10$)} & 0.711 (0.57-0.98) & 0.471 & 0.847 & 0.605 & 0.899 (0.86-1.00) & 0.888 (0.85-1.00) \\
 & \texttt{\texttt{Kadane + e-BH} ($K_{\mathrm{max}} = 15$)} & 0.731 (0.57-0.98) & 0.310 & 0.868 & 0.457 & 0.914 (0.86-0.99) & 0.904 (0.84-0.99) \\
 & \texttt{\texttt{Kadane + e-BH} ($K_{\mathrm{max}} = 20$)} & 0.860 (0.58-0.97) & 0.255 & 0.969 & 0.404 & 0.952 (0.86-0.98) & 0.949 (0.85-0.98) \\
\midrule
\multirow{6}{*}{
        \makecell[l]{mistralai \\ Mistral-7B-v0.1 \\ (32000)}
    } &  \texttt{WISER} & \textbf{0.913 (0.63-0.99)} & 0.836 & \textbf{0.994} & \textbf{0.908} & \textbf{0.988 (0.95-1.00)} & \textbf{0.987 (0.94-1.00)} \\
 & \texttt{\texttt{Kadane + e-BH} ($K_{\mathrm{max}} = 6$)} & 0.698 (0.51-0.98) & \textbf{0.856} & 0.856 & 0.856 & 0.899 (0.86-1.00) & 0.887 (0.85-1.00) \\
 & \texttt{\texttt{Kadane + e-BH} ($K_{\mathrm{max}} = 8$)} & 0.725 (0.52-0.98) & 0.639 & 0.895 & 0.746 & 0.921 (0.86-0.99) & 0.912 (0.84-0.99) \\
 & \texttt{\texttt{Kadane + e-BH} ($K_{\mathrm{max}} = 10$)} & 0.754 (0.52-0.97) & 0.513 & 0.924 & 0.660 & 0.936 (0.86-0.99) & 0.929 (0.84-0.99) \\
 & \texttt{\texttt{Kadane + e-BH} ($K_{\mathrm{max}} = 15$)} & 0.792 (0.51-0.96) & 0.341 & 0.954 & 0.502 & 0.949 (0.86-0.98) & 0.944 (0.84-0.98) \\
 & \texttt{\texttt{Kadane + e-BH} ($K_{\mathrm{max}} = 20$)} & 0.815 (0.55-0.94) & 0.258 & 0.979 & 0.408 & 0.951 (0.93-0.97) & 0.948 (0.92-0.97) \\
\midrule
\multirow{6}{*}{
        \makecell[l]{princeton-nlp \\ Sheared-LLaMA-1.3B \\ (32000)}
    } & \texttt{WISER} & \textbf{0.947 (0.77-0.99)} & \textbf{0.894} & \textbf{0.988} & \textbf{0.939} & \textbf{0.991 (0.95-1.00)} & \textbf{0.990 (0.95-1.00)} \\
 & \texttt{\texttt{Kadane + e-BH} ($K_{\mathrm{max}} = 6$)} & 0.721 (0.49-0.99) & 0.846 & 0.846 & 0.846 & 0.904 (0.86-1.00) & 0.892 (0.84-1.00) \\
 & \texttt{\texttt{Kadane + e-BH} ($K_{\mathrm{max}} = 8$)} & 0.787 (0.59-0.98) & 0.646 & 0.904 & 0.753 & 0.938 (0.86-1.00) & 0.930 (0.85-1.00) \\
 & \texttt{\texttt{Kadane + e-BH} ($K_{\mathrm{max}} = 10$)} & 0.809 (0.56-0.97) & 0.516 & 0.929 & 0.664 & 0.948 (0.86-0.99) & 0.942 (0.85-0.99) \\
 & \texttt{\texttt{Kadane + e-BH} ($K_{\mathrm{max}} = 15$)} & 0.826 (0.61-0.96) & 0.344 & 0.962 & 0.506 & 0.953 (0.90-0.98) & 0.948 (0.88-0.98) \\
 & \texttt{\texttt{Kadane + e-BH} ($K_{\mathrm{max}} = 20$)} & 0.841 (0.61-0.94) & 0.256 & 0.973 & 0.405 & 0.947 (0.92-0.96) & 0.944 (0.91-0.96) \\
\midrule
\bottomrule
\end{tabular}}
\caption{\upd{Comparison of performances between \texttt{WISER} and Kadane's algorithm followed by an e-BH procedure with different heuristic choices of maximum number of intervals $K_{\mathrm{max}}$, for the scenario~\ref{sim:setup2} under Gumbel-max watermarking scheme. Note that, in this table, true $K=5$. Performance of the oracle is provided in Table~\ref{tab:results-gumbel-n2500}.}}
\label{tab:results-gumbel-kadane}
\end{table}

\begin{table}[ht]
\centering
\resizebox{\textwidth}{!}{
\begin{tabular}{llrrrrrr}
\toprule
\textbf{Model Name} & \textbf{Method} & \textbf{IOU} & \textbf{Precision} & \textbf{Recall} & \textbf{F1} & \textbf{RI} & \textbf{MRI} \\
\midrule
\multirow{6}{*}{
        \makecell[l]{facebook \\ opt-1.3b \\ (50272)}
    } & \texttt{WISER} & \textbf{0.882 (0.38-0.97)} & \textbf{0.625} & \textbf{0.970} & \textbf{0.760} & \textbf{0.963 (0.80-0.99)} & \textbf{0.961 (0.79-0.99)} \\
 & \texttt{\texttt{Kadane + e-BH} ($K_{\mathrm{max}} = 6$)} & 0.331 (0.16-0.46) & 0.523 & 0.523 & 0.523 & 0.789 (0.59-0.87) & 0.732 (0.51-0.82) \\
 & \texttt{\texttt{Kadane + e-BH} ($K_{\mathrm{max}} = 8$)} & 0.352 (0.23-0.48) & 0.458 & 0.641 & 0.534 & 0.841 (0.73-0.91) & 0.792 (0.66-0.86) \\
 & \texttt{\texttt{Kadane + e-BH} ($K_{\mathrm{max}} = 10$)} & 0.359 (0.25-0.49) & 0.394 & 0.709 & 0.506 & 0.866 (0.78-0.91) & 0.824 (0.74-0.87) \\
 & \texttt{\texttt{Kadane + e-BH} ($K_{\mathrm{max}} = 15$)} & 0.370 (0.31-0.51) & 0.287 & 0.803 & 0.423 & 0.895 (0.84-0.92) & 0.865 (0.80-0.90) \\
 & \texttt{\texttt{Kadane + e-BH} ($K_{\mathrm{max}} = 20$)} & 0.403 (0.35-0.50) & 0.212 & 0.804 & 0.335 & 0.907 (0.89-0.92) & 0.886 (0.86-0.91) \\
\midrule
\multirow{6}{*}{
        \makecell[l]{facebook \\ opt-125m \\ (50272)}
    } & \texttt{WISER} & \textbf{0.878 (0.34-0.97)} & \textbf{0.726} & \textbf{0.965} & \textbf{0.828} & \textbf{0.960 (0.67-0.99)} & \textbf{0.958 (0.65-0.99)} \\
 & \texttt{\texttt{Kadane + e-BH} ($K_{\mathrm{max}} = 6$)} & 0.339 (0.16-0.47) & 0.529 & 0.529 & 0.529 & 0.798 (0.66-0.88) & 0.740 (0.59-0.82) \\
 & \texttt{\texttt{Kadane + e-BH} ($K_{\mathrm{max}} = 8$)} & 0.353 (0.24-0.48) & 0.452 & 0.633 & 0.527 & 0.841 (0.72-0.90) & 0.793 (0.68-0.85) \\
 & \texttt{\texttt{Kadane + e-BH} ($K_{\mathrm{max}} = 10$)} & 0.361 (0.25-0.49) & 0.395 & 0.711 & 0.508 & 0.869 (0.80-0.91) & 0.828 (0.74-0.87) \\
 & \texttt{\texttt{Kadane + e-BH} ($K_{\mathrm{max}} = 15$)} & 0.364 (0.30-0.51) & 0.281 & 0.787 & 0.414 & 0.896 (0.86-0.92) & 0.864 (0.81-0.89) \\
 & \texttt{\texttt{Kadane + e-BH} ($K_{\mathrm{max}} = 20$)} & 0.409 (0.33-0.51) & 0.214 & 0.812 & 0.338 & 0.908 (0.89-0.92) & 0.888 (0.86-0.91) \\
\midrule
\multirow{6}{*}{
        \makecell[l]{google \\ gemma-3-270m \\ (262144)}
    } & \texttt{WISER} & \textbf{0.682 (0.00-0.98)} & 0.580 & 0.860 & 0.693 & 0.870 (0.11-0.99) & 0.859 (0.08-0.99) \\
 & \texttt{\texttt{Kadane + e-BH} ($K_{\mathrm{max}} = 6$)} & 0.597 (0.05-0.93) & \textbf{0.819} & 0.819 & \textbf{0.819} & 0.896 (0.82-0.99) & 0.881 (0.81-0.99) \\
 & \texttt{\texttt{Kadane + e-BH} ($K_{\mathrm{max}} = 8$)} & 0.604 (0.06-0.92) & 0.612 & 0.857 & 0.714 & 0.918 (0.86-0.98) & 0.904 (0.84-0.98) \\
 & \texttt{\texttt{Kadane + e-BH} ($K_{\mathrm{max}} = 10$)} & 0.610 (0.06-0.90) & 0.493 & 0.888 & 0.634 & 0.931 (0.87-0.98) & 0.918 (0.84-0.97) \\
 & \texttt{\texttt{Kadane + e-BH} ($K_{\mathrm{max}} = 15$)} & 0.613 (0.07-0.86) & 0.330 & \textbf{0.923} & 0.486 & \textbf{0.935 (0.88-0.96)} & \textbf{0.924 (0.85-0.96)} \\
 & \texttt{\texttt{Kadane + e-BH} ($K_{\mathrm{max}} = 20$)} & 0.620 (0.09-0.84) & 0.237 & 0.899 & 0.375 & 0.930 (0.91-0.95) & 0.922 (0.88-0.95) \\
\midrule
\multirow{6}{*}{
        \makecell[l]{meta-llama \\ Meta-Llama-3-8B \\ (128256)}
    } &  \texttt{WISER} & \textbf{0.916 (0.39-0.99)} & \textbf{0.911} & \textbf{0.970} & \textbf{0.940} & \textbf{0.975 (0.69-1.00)} & \textbf{0.973 (0.67-1.00)} \\
 & \texttt{\texttt{Kadane + e-BH} ($K_{\mathrm{max}} = 6$)} & 0.675 (0.22-0.99) & 0.817 & 0.817 & 0.817 & 0.901 (0.82-1.00) & 0.887 (0.80-1.00) \\
 & \texttt{\texttt{Kadane + e-BH} ($K_{\mathrm{max}} = 8$)} & 0.706 (0.34-0.97) & 0.607 & 0.850 & 0.708 & 0.938 (0.86-0.99) & 0.926 (0.84-0.99) \\
 & \texttt{\texttt{Kadane + e-BH} ($K_{\mathrm{max}} = 10$)} & 0.723 (0.34-0.96) & 0.486 & 0.874 & 0.624 & 0.947 (0.87-0.99) & 0.937 (0.85-0.99) \\
 & \texttt{\texttt{Kadane + e-BH} ($K_{\mathrm{max}} = 15$)} & 0.724 (0.33-0.93) & 0.317 & 0.889 & 0.468 & 0.946 (0.91-0.97) & 0.937 (0.89-0.97) \\
 & \texttt{\texttt{Kadane + e-BH} ($K_{\mathrm{max}} = 20$)} & 0.758 (0.38-0.91) & 0.236 & 0.895 & 0.373 & 0.940 (0.91-0.96) & 0.936 (0.89-0.96) \\
\midrule
\multirow{6}{*}{
        \makecell[l]{mistralai \\ Mistral-7B-v0.1 \\ (32000)}
    } & \texttt{WISER} & \textbf{0.908 (0.76-0.97)} & 0.820 & \textbf{0.976} & \textbf{0.891} & \textbf{0.980 (0.96-1.00)} & \textbf{0.979 (0.95-1.00)} \\
 & \texttt{\texttt{Kadane + e-BH} ($K_{\mathrm{max}} = 6$)} & 0.671 (0.47-0.92) & \textbf{0.859} & 0.859 & 0.859 & 0.921 (0.86-0.99) & 0.909 (0.84-0.99) \\
 & \texttt{\texttt{Kadane + e-BH} ($K_{\mathrm{max}} = 8$)} & 0.656 (0.48-0.88) & 0.625 & 0.875 & 0.729 & 0.935 (0.86-0.98) & 0.924 (0.84-0.98) \\
 & \texttt{\texttt{Kadane + e-BH} ($K_{\mathrm{max}} = 10$)} & 0.646 (0.46-0.86) & 0.491 & 0.883 & 0.631 & 0.936 (0.90-0.97) & 0.926 (0.88-0.97) \\
 & \texttt{\texttt{Kadane + e-BH} ($K_{\mathrm{max}} = 15$)} & 0.630 (0.45-0.81) & 0.321 & 0.899 & 0.473 & 0.932 (0.90-0.95) & 0.923 (0.89-0.95) \\
 & \texttt{\texttt{Kadane + e-BH} ($K_{\mathrm{max}} = 20$)} & 0.606 (0.47-0.76) & 0.221 & 0.840 & 0.350 & 0.925 (0.91-0.94) & 0.918 (0.90-0.94) \\
\midrule
\multirow{6}{*}{
        \makecell[l]{princeton-nlp \\ Sheared-LLaMA-1.3B \\ (32000)}
    } & \texttt{WISER} & \textbf{0.844 (0.54-0.96)} & 0.482 & \textbf{0.956} & 0.641 & \textbf{0.961 (0.87-0.99)} & \textbf{0.959 (0.86-0.99)} \\
 & \texttt{\texttt{Kadane + e-BH} ($K_{\mathrm{max}} = 6$)} & 0.560 (0.40-0.70) & \textbf{0.781} & 0.781 & \textbf{0.781} & 0.886 (0.84-0.94) & 0.868 (0.82-0.92) \\
 & \texttt{\texttt{Kadane + e-BH} ($K_{\mathrm{max}} = 8$)} & 0.538 (0.39-0.75) & 0.571 & 0.800 & 0.667 & 0.908 (0.86-0.96) & 0.891 (0.84-0.95) \\
 & \texttt{\texttt{Kadane + e-BH} ($K_{\mathrm{max}} = 10$)} & 0.531 (0.40-0.76) & 0.453 & 0.815 & 0.582 & 0.917 (0.87-0.96) & 0.900 (0.85-0.96) \\
 & \texttt{\texttt{Kadane + e-BH} ($K_{\mathrm{max}} = 15$)} & 0.513 (0.37-0.74) & 0.297 & 0.832 & 0.438 & 0.921 (0.90-0.94) & 0.905 (0.88-0.94) \\
 & \texttt{\texttt{Kadane + e-BH} ($K_{\mathrm{max}} = 20$)} & 0.570 (0.43-0.69) & 0.219 & 0.832 & 0.347 & 0.924 (0.91-0.94) & 0.916 (0.90-0.94) \\
\midrule
\bottomrule
\end{tabular}}
\caption{\upd{Comparison of performances between \texttt{WISER} and Kadane's algorithm followed by an e-BH procedure with different heuristic choices of maximum number of intervals $K_{\mathrm{max}}$, for the scenario~\ref{sim:setup2} under inverse watermarking scheme. Note that, in this table, true $K=5$. Performance of the oracle is provided in Table~\ref{tab:results-inverse-n2500}.}}
\label{tab:results-inverse-kadane}
\end{table}

\section{Extended numerical experiments}\label{se:simu}

In this section, we provide additional numerical experiments complementing those in \S\ref{se:shortened-simu}. In \S\ref{se:simu-1}, we compare the accuracy of \texttt{WISER}\ with other competitive methods in the literature, on various benchmark datasets on myriad standard watermarking schemes. Moving on to \S\ref{se:effect-wm-intensity}, we investigate the effect of watermark intensity as well as the watermarked length on the performance of the algorithms. Finally, in \S\ref{se:ablation-study}, we provide some ablation studies corresponding to the hyperparameters in \texttt{WISER}. 

\subsection{Comparative performance of \texttt{WISER}}\label{se:simu-1}

\subsubsection{Experimental Settings}\label{appendix:sim-settings}

\upd{To comprehensively understand the behavior of the \texttt{WISER} algorithm under different conditions, we consider three different experimental scenarios, as enumerated below.
\begin{enumerate}[label=(S\arabic*)]
    \item\label{sim:setup1} Short length output tokens ($n = 500$) with $K = 2$, i.e., two watermarked patches on $100$-$200$ and $325$-$400$.
    \item\label{sim:setup2} Long length outputs ($n = 2500$ tokens) with $K = 5$ watermarked patches, with successively increasing lengths and increasing gaps, i.e., on $100$-$200$, $350$-$500$, $700$-$900$, $1150$-$1400$, and $1700$-$2000$.
    \item\label{sim:setup3} Moderately long outputs ($n = 1000$) with $K = 5$ watermarked patches, but tightly clustered in the middle with each watermarked region spanning over $60$ tokens and a gap of $30$ tokens between consecutive intervals.
\end{enumerate}}

\upd{From \S\ref{se:shortened-simu-1}, we recall the SOTA benchmark algorithms for comparison. For each of the experiments, we implement \texttt{WISER}\ with block size equal to $b=[\sqrt{n}]$, $\rho = 0.5$, $\alpha = 0.05$ and $\gamma = 0.1$. Before we provide detailed comparison studies, we elaborate on the performance metrics used.}

% recall the experimental set-up, the SOTA benchmark algorithms as well as the considered watermarking schemes. For each of the experiments, we implement \texttt{WISER}\ with block size equal to $b=65$, $\rho = 0.5$, $\alpha = 0.05$ and $\gamma = 0.1$. Before we provide detailed comparison studies, we elaborate on the performance metrics used.

\subsubsection{Performance metric}\label{se:performance-metric}

To ensure consistency with the prior works, we primarily treat the intersection-over-union (IOU) as a performance measure. Let, $\bb{I} := (I_1, \dots, I_K)$ denotes the true watermarked intervals and $\bbhat{I} := (\widehat{I}_1, \dots, \widehat{I}_{\hat{K}})$ be the estimated watermarked segments. Then, the intersection-over-union metric is given by
\begin{equation*}
    \text{IOU}(\bb{I}, \bbhat{I}) = \dfrac{\vert (\cup_{i=1}^K I_i) \cap (\cup_{j=1}^{\hat{K}} \hat{I}_i)\vert }{\vert (\cup_{i=1}^K I_i) \cup (\cup_{j=1}^{\hat{K}} \hat{I}_i)\vert}.
\end{equation*}
\noindent Owing to Theorem~\ref{thm:single-watermark}, it is obvious that the IOU measure is expected to be close to $1$ for the \texttt{WISER}\ method. Following the definition of~\cite{pan2025waterseeker}, we also compute the precision, recall, and F1-score based on whether any of the estimated intervals have a nonempty intersection with any of the true intervals, i.e.,
\begin{align*}
    \text{Precision } = \frac{\vert \{ i: 1 \leq i \leq {\hat{K}}, \hat{I}_i \cap (\cup_{j=1}^K I_j) \neq \phi  \} \vert}{{\hat{K}}},
    \text{ Recall} = \frac{\vert \{ i: 1 \leq i \leq {\hat{K}}, \hat{I}_i \cap (\cup_{j=1}^K I_j) \neq \phi  \} \vert}{K}.
\end{align*}
\textbf{Rand Index and asymmetry of the watermark segmentation.}

In addition to these metrics, the Rand Index (RI) is also usually used to measure coherence between the estimated and true watermarked segments, using the algorithm illustrated in~\cite{prates2021more}. For the standard definition of Rand Index, see Equation (2) of~\cite{prates2021more}. However, the Rand Index may depict a wrong picture of the performance of a watermarked segment identification algorithm.  Although watermark segmentation closely resembles epidemic change-point detection, a crucial difference arises in algorithm evaluation. Before proceeding, we briefly deliberate on these issues.

Standard change-point problems are symmetric; under model (\ref{eq:usual-epidemic}), the edge cases ${p=1,q=n}$ and ${p=q}$ are equivalent. On the other hand, watermarking problems exhibit asymmetry; the edge cases (i) ``the entire sequence is unwatermarked'' and (ii) ``the entire sequence is watermarked'',  differ due to irregular means of the pivot statistics under watermarking. Rand Index (RI) - despite being used in watermark segmentation \citep{li2024segmenting, pan2025waterseeker} - fails to capture this distinction. 

As an illustration, consider the situation where most of the tokens (say $90\%$) are watermarked, while the watermark detection algorithm fails to detect any watermarked segment. While the performance of such an algorithm should reflect poorly, the standard Rand Index fails to capture this due to the exchangeability of the watermarked segment and the non-watermarked segment: any pair of indices $(i,j)$ that is truly watermarked trivially is also part of the estimated non-watermarked segment and considered as a concordant pair.

To circumvent these limitations described there, we consider a Modified Rand Index (MRI) given as 
\begin{multline*}
    \text{MRI}(\bb{I}, \bbhat{I}) := \text{RI}(\bb{I}, \bbhat{I}) \\
    - \dfrac{ \sum_{i\neq j} \left( \sum_{k=1}^K \mathbf{1}\{ \{ i, j \} \subseteq I_k \cap (\cup_{l=1}^{\hat{K}} \hat{I}_l)^c \} + \sum_{l=1}^{\hat{K}} \mathbf{1}\{ \{ i, j \} \subseteq \hat{I}_l \cap (\cup_{k=1}^{K} I_k)^c \} \right) }{\binom{n}{2}},
\end{multline*}
\noindent where $\mathbf{1}\{ \cdot \}$ is the indicator function, and $n$ is the number of tokens. The MRI simply adjusts the RI by restricting its exchangeability only within each of the watermarked or non-watermarked intervals, but not in between. Intuitively, the MRI removes the specific pairs of indices $(i, j)$ from the calculation of RI for which both the indices $i$ and $j$ lie either in a true watermarked interval but are estimated to be in the non-watermarked region, or are estimated to be in a watermarked interval but actually lie in a non-watermarked region.

\subsubsection{Experimental results and explanation}

\upd{The comparison results are summarized in Tables~\ref{tab:results-gumbel}-\ref{tab:results-pf-n1000}, corresponding to each of the watermarking schemes and experimental settings considered. We also include the performance of the Oracle algorithm~\ref{algo:kadane} by passing the knowledge of the true number of watermarked patches, $K$, to the estimator \eqref{eq:multiple-oracle-est}. The corresponding rows, designated as \texttt{Oracle} and highlighted in \ohl{pink}, serve as a baseline to all the other methods, including \fancyname, which do not require the knowledge of $K$.}
% \textcolor{brown}{Soham: write a sentence about the oracle row, with reference}. \sr{Done!}

Across all settings, \texttt{WISER}\ consistently delivers the strongest performance across every model and metric. In the Gumbel case, it achieves near-perfect results with IOU scores above 0.90, precision of 1.0, and recall above 0.98 across both small and large models. Competing methods like \texttt{Aligator} and \texttt{SeedBS-NOT} often fail to balance recall and precision, either collapsing to very low precision (\texttt{Aligator}) or producing weaker recall (\texttt{SeedBS-NOT}), while \texttt{Waterseeker} attains moderate balance but still lags well behind \texttt{WISER}.

The trend is even more pronounced in the cases of Inverse and Red-Green setups, where the pivot statistics remain uniformly bounded. In these cases, \texttt{Aligator} fail to detect any watermarked intervals, while both \texttt{SeedBS-NOT} and \texttt{Waterseeker} suffer a significant decline in performance. In contrast, \texttt{WISER}\ maintains F1-scores in the range of $0.95$ - $0.99$ with stable IOU values across model sizes, showing robustness to different architectures and vocabulary sizes. \texttt{Waterseeker} provides the next best alternative, but with noticeable drops in IOU and F1, especially for larger models. \upd{All methods deteriorate under the more stringent scenario~\ref{sim:setup3}, owing to the shorter and more concentrated watermarked patches. Nevertheless, \texttt{WISER} continues to achieve a reasonably strong IOU.} These findings clearly demonstrate that \texttt{WISER}\ not only generalizes across watermarking schemes but also offers substantial gains in both segmentation accuracy and reliability, marking a clear benefit over existing baselines.

% \nb{SB: We should not start sentences with equations. Just putting a comment that this part needs to be much more formally re-written and revised. The competitive methods names should be in \texttt{..} The dataset should be linked to its open-source. If a hypothesis test is unlikely to work we should demonstrate it, or remove this sentence altogether.}

% =========================
% Table: Gumbel, K = 2, n = 500
% ========================

\begin{table}[ht]
\centering
\resizebox{\textwidth}{!}{
    % [inline block 0: 11 envs, 60572 chars -> data_tex | \begin{tabular}{llrrrrrr}     \toprule...]
}
\caption{\upd{Performance metrics for Permute-and-Flip watermarking in experimental setting~\ref{sim:setup3}.}}
\label{tab:results-pf-n1000}
\end{table}

\subsubsection{Why \texttt{WISER}\ outperforms other methods} \label{se:wiser-best-why}

The enhanced performance of \texttt{WISER}\ does not come out-of-the-blue, rather, we argue that it is a byproduct of our unique, epidemic change-point perspective that marries theoretical validity with practical insights. While these methods—\texttt{SeedBS-NOT}, \texttt{Aligator}, and \texttt{Waterseeker}— each contribute useful perspectives, they also exhibit important limitations that the generality of our method usually overcomes.

\textbf{Limitations of \texttt{SeedBS-NOT}:}
The limitations of \texttt{SeedBS-NOT} primarily arise from its reliance on a permutation-based change-point detection framework, which is inherently computationally expensive. Moreover, nowhere they restrict their attention to the specific scenario of watermarked segments, which consigns the change-points to occur in pairs, corresponding to the start and end of a watermarked segment. This is automatically alleviated by \texttt{WISER}\ through its adoption of a natural epidemic change-point formulation. This structural assumption substantially reduces the search space, yielding both computational efficiency and improved statistical stability. Additionally, \texttt{SeedBS-NOT} works with the sequence of p-values that are computed from a single observation of the pivot statistic at that location. Due to the complicated nature of the dependence between these p-values, they are difficult to combine to increase the statistical power. Our approach circumvents this by aggregating the pivot statistic at the block level (Step 7 in Figure~\ref{fig:algo}), enhancing the effective sample size and increasing the power of the detection.

\textbf{Limitations of \texttt{Aligator}:}
The \texttt{Aligator} algorithm frames the task as a reinforcement learning problem, producing a smoothed estimate of the underlying generative process and subsequently applying token-level hypothesis tests with a p-value threshold. While this strategy can capture localized deviations, it often results in a large number of short and fragmented detections, many of which might be spurious due to possible multiple testing. Consequently, the method tends to produce many disjoint intervals, which severely diminishes its precision. By contrast, the discarding stage of \texttt{WISER}\ enforces structural coherence at the segment level, before returning fine-grained estimate by applying (\ref{eq:single-estimate}). This ensures that localized intervals correspond more closely to contiguous watermark insertions.

\textbf{Limitations of \texttt{Waterseeker}:}
The \texttt{Waterseeker} algorithm may seem structurally similar to the proposed \texttt{WISER}\ method, in that it also employs a two-stage detection framework. However, \texttt{Waterseeker} considers a sliding window-based testing mechanism in its first stage, which has a crucial limitation. Consider a very realistic scenario when one of the pivot statistics corresponding to an unwatermarked token is high simply due to random chance. In \texttt{Waterseeker}, this will push the score up for $W$ consecutive windows, usually resulting in a false positive in the first stage. On the other hand, for \texttt{WISER}\, such anomalous pivot statistics will affect only one block, which, being usually part of a connected interval with small length, can potentially be discarded with a very high probability in our \textit{discarding stage}. For larger models, this scenario is extremely likely, making this reduction in precision much more pronounced (see Models google/gemma-3-270m and meta-Ilama/Meta-Llama-3-8B in Tables \ref{tab:results-gumbel}, \ref{tab:results-inverse} - \ref{tab:results-pf}). Moreover, \cite{pan2025waterseeker} provide only limited theoretical validation of their approach, making the optimal tuning of hyperparameters difficult to justify. This lack of statistical guarantees limits its reliability across watermarking schemes and model sizes, in contrast to the rigorous and general guarantees underlying \texttt{WISER}.

\subsection{Effect of Watermark Intensity} \label{se:effect-wm-intensity}

\begin{table}[htbp]
    \centering
    \begin{tabular}{llrrrr}
    \toprule
    \textbf{Type} & \textbf{Method} & \textbf{IOU} & \textbf{F1} & \textbf{RI} & \textbf{MRI} \\
    \midrule
    \multirow{4}{*}{Strong but short} & \texttt{WISER} & 0.794 & 0.984 & 0.933 & 0.925 \\
    & \texttt{SeedBS-NOT} & 0.639 & 0.785 & 0.919 & 0.900 \\
    & \texttt{Waterseeker} & 0.878 & 0.997 & 0.969 & 0.967 \\
    \midrule
    \multirow{4}{*}{Weak but long} & \texttt{WISER} & 0.745 & 0.779 & 0.628 & 0.551 \\
    & \texttt{SeedBS-NOT} & 0.172 & 0.321 & 0.675 & 0.187 \\
    & \texttt{Waterseeker} & 0.268 & 0.847 & 0.519 & 0.172 \\
    \bottomrule
    \end{tabular}
    \caption{Effect on watermarking signal strength}
    \label{tab:wm-strength}
\end{table}

Following the experimental design of \cite{pan2025waterseeker}, we evaluate the comparative performance of the proposed \texttt{WISER}\ algorithm under varying levels of watermark intensity. As a demonstration, we choose Google's Gemma-3 series model (270 million) to generate a completion of $500$ tokens for each input prompt. The watermark strength is modulated through the bias parameter $\delta$ of the Red-Green watermarking scheme~\citep{kirchenbauer2023watermark}, while another parameter $m$ specifies the length of the watermarked region by applying the decoding strategy to the middle $m$ tokens within the $500$-token output.

In the ``strong but short'' configuration ($\delta = 2.0, m = 100$), as shown in Table~\ref{tab:wm-strength}, all methods perform well, achieving a Rand Index exceeding $0.9$. Although \texttt{WISER}\ is not the best-performing method in this particular case, it remains competitive with \texttt{Waterseeker}, which achieves the highest score. By contrast, in the ``weak but long'' configuration ($\delta = 1.0, m = 400$), only \texttt{WISER}\ maintains robust performance. While \texttt{SeedBS-NOT} appears to achieve a higher Rand Index, this outcome is primarily attributed to the issues described in \S\ref{se:performance-metric}. The Modified Rand Index (MRI) offers a more reliable assessment, highlighting the superiority of \texttt{WISER}\ in this setting.

\subsection{Ablation studies} \label{se:ablation-study}

We also perform an ablation study to understand the effectiveness of the hyperparameters (e.g., block size and $\rho$) of \texttt{WISER}. Our results are arguably quite stable across wide choices of the tuning parameters; nevertheless, we provide more informed choices along with additional insights.

For this study, we consider a single watermarked segment from token index $100$ to $200$, fix $\rho = 0.25$, and vary the tuning parameter $b$ of the \texttt{WISER}\ algorithm. As one would have hoped, increasing the block size too much decreases the performance, as the smaller watermarked segments get subsumed in the noise of unwatermarked segments when block sizes are too large. On the other hand, decreasing the block size would reduce the statistical power of the detection algorithm in the first stage itself. Therefore, one requires a judicious choice of the block size to optimally balance these two aspects, which is empirically observed through the upper plot of Figure~\ref{fig:ablation-study}. Based on empirical evidence, we recommend the choice $b \in (\lceil \sqrt{n} \rceil, 3\lceil \sqrt{n} \rceil)$, which works quite well in various settings that we have experimented with, while being also theoretically supported.

\begin{figure}[ht]
    \centering
    \includegraphics[width=0.9\linewidth]{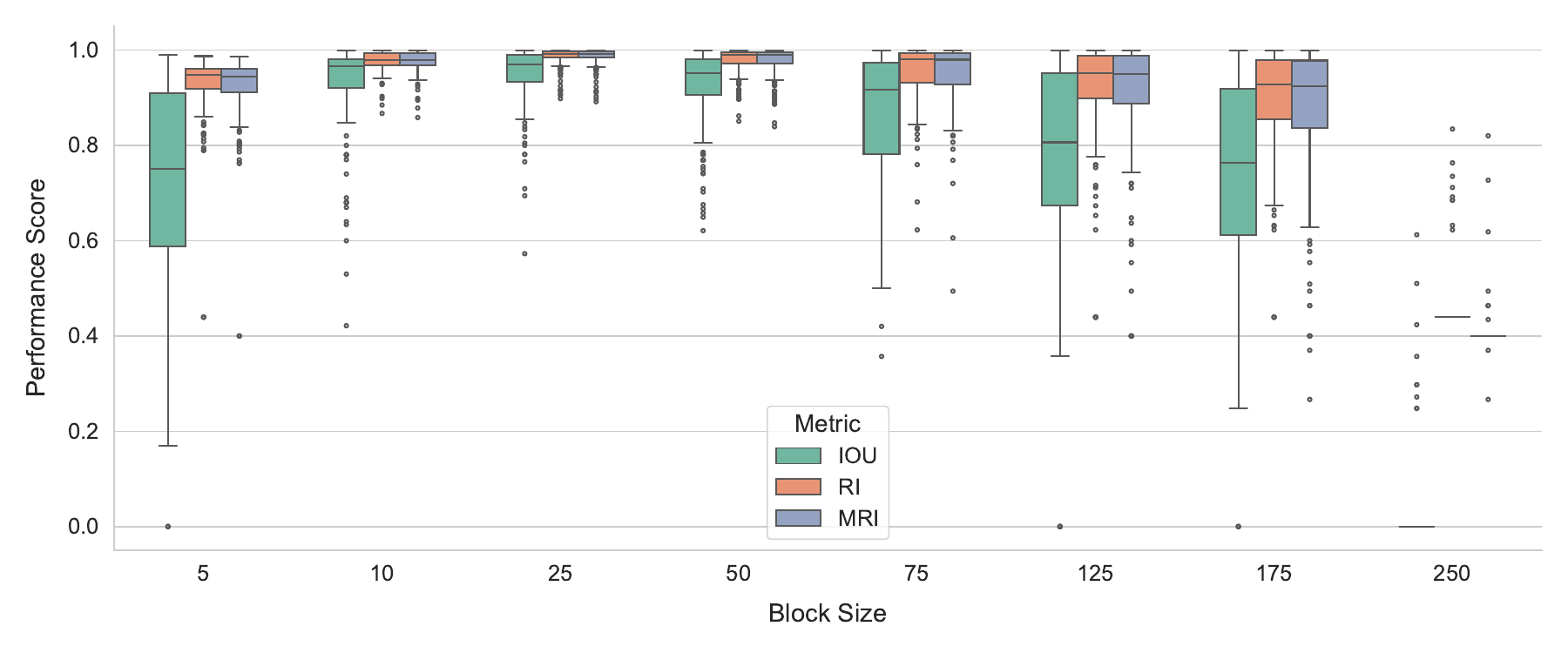}
    \includegraphics[width=0.9\linewidth]{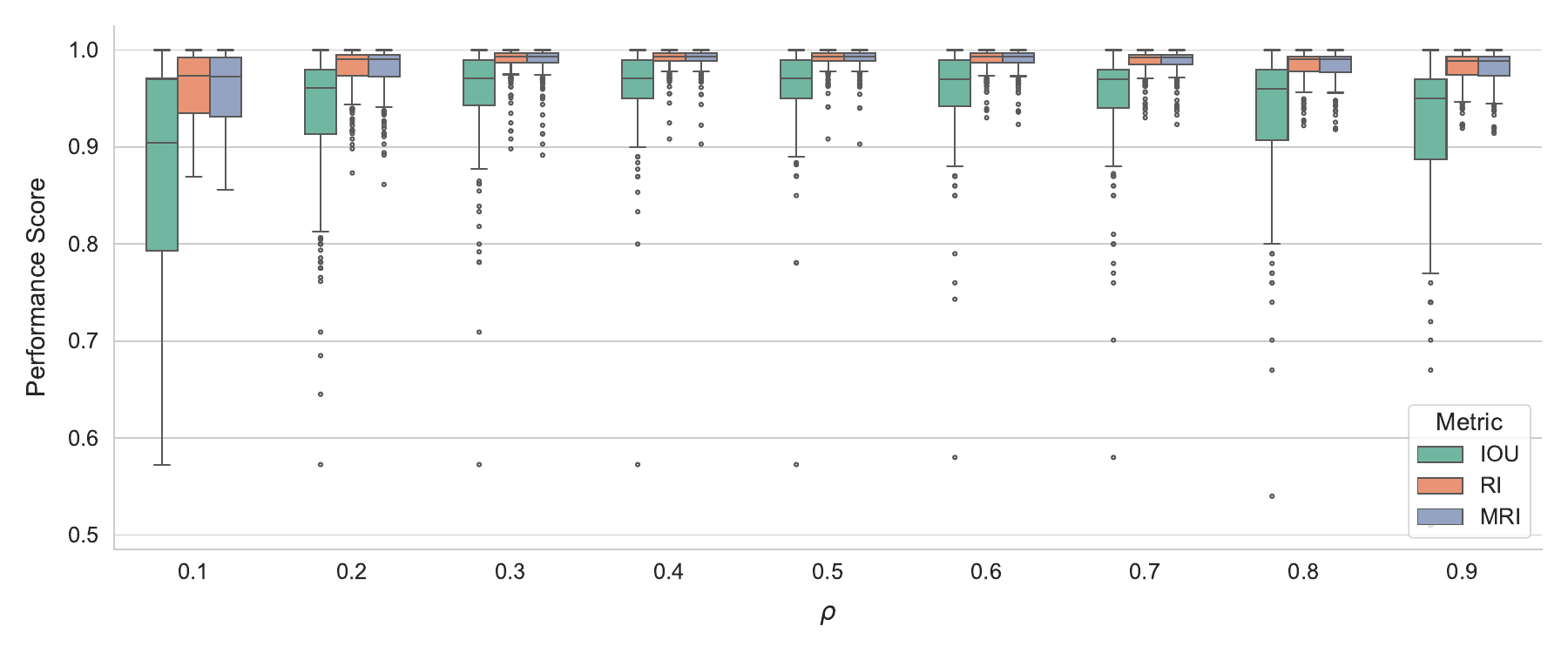}
    \caption{Effect on performance metrics (IOU and Rand Index) due to modification of the hyper-parameters of the \texttt{WISER}\ algorithm, namely block size (Top) and $\rho$ (Bottom).}
    \label{fig:ablation-study}
\end{figure}

A similar conclusion also holds for the choice of $\rho$, for which we fix the block size as $b=25$ and vary the tuning parameter $\rho$. As the choice of $d$ in Assumption~\ref{ass:alt-mean} is exogenously determined based on the language model and watermarking scheme, a large value of $\rho$ would imply a smaller $\tilde{d}$ and, by virtue of Theorem~\ref{thm:single-watermark}, would imply a larger error. The lower plot of Figure~\ref{fig:ablation-study} demonstrates this empirically. However, any value of $\rho$ between $0.1$ and $0.5$ provides reasonable and relatively stable estimates.

\section{Proof of Theoretical Results}\label{se:proof}
In this section, we collect the proofs of theoretical results in the \S\ref{se:theory}. Before we proceed further, we establish some notations. In the following, we write $a_n \lesssim b_n$ if $a_n \le C b_n$ for some constant $C > 0$, and $a_n \asymp b_n$ if $C_1 b_n \le a_n \le C_2 b_n$ for some constants $C_1, C_2 > 0$. Often we denote $a_n \lesssim b_n$ by $a_n=O(b_n)$. Additionally, if $a_n/b_n\to 0$, we write $a_n=o(b_n)$. For a function $f: \R^n \otimes \R^m \to \R$, let $f^{(1)} (\theta, w)= \frac{\partial}{\partial \theta} f(\theta, w)$, $\theta\in \R^n, w\in \R^m$, $n,m\geq 1$, be the partial derivative function with respect to $\theta$. 

\subsection{Proof of Theorem \ref{thm:single-watermark}}
In the following, we first state and prove a more generalized version of Theorem \ref{thm:single-watermark}. 

\begin{theorem}\label{thm:single-watermark-app}
    Let $\{X_t\}_{t=1}^n:= \{h(Y_t)\}_{t=1}^n$ be the pivot statistics based on the given input text, and assume that $I_0 \subset \{1, \ldots, n\}$ is the watermarked interval. Grant Assumption \ref{ass:alt-mean}. Let us also denote 
\[ \varepsilon_t=\begin{cases}
    &X_t-\mu_0, t\notin I_0, \\
    & X_t - \mu_t, \ \mu_t:=\IE_{1, \IP_t}[X_t], t\in I_0.
\end{cases}\]
    Suppose the class of distributions $\mathcal{P}$ is closed and compact, and there exists $\eta>0$ such that $\sup_{\IP\in \mathcal{P}}\IE_{1,\IP}[\exp(\eta|\varepsilon|)]<\infty$. Moreover, assume that $\min\{\Var_0(\varepsilon), \sup_{\IP}\Var_{1,P}(\varepsilon)\}>0$. Then it holds that 
    \[ |\hat{I} \Delta I_0|= O_{\IP}\big((\sup_{\theta\geq 0} \{\theta \rho \Tilde{d}- \Psi(\theta)\})^{-1}\big), \]
    where $\Delta$ denotes the symmetric difference operator, $O_{\IP}$ hides constants independent of $n$ and $ \Tilde{d}$, and
    \[\Psi(\theta)=\log \IE_0[\exp(\theta\varepsilon)] + 2^{-1} \log \sup_{\IP} \IE_{1,\IP}[\exp(2\theta \varepsilon)] + 2^{-1} \log \sup_{\IP} \IE_{1,\IP}[\exp(-2\theta \varepsilon)].\]
\end{theorem}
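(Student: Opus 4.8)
Writing $V_k := X_k-\mu_0-\rho\tilde d$ and using $\sum_{k\notin[s,t]}V_k=\sum_{k=1}^nV_k-\sum_{k\in[s,t]}V_k$, the estimator $\hat I$ in (\ref{eq:single-estimate}) is exactly the interval $[s,t]$ that \emph{maximizes} the partial sum $\sum_{k\in[s,t]}V_k$. By Assumption \ref{ass:alt-mean} and the constraint $d>2\rho\tilde d$, the population increments satisfy $\IE V_k=-\rho\tilde d<0$ for $k\notin I_0$ and $\IE V_k\ge d-\rho\tilde d\ge\rho\tilde d>0$ for $k\in I_0$, so the noiseless maximizer is precisely $I_0$; the proof adapts the localization argument of \citep{kley2024change} from the one-sided change-point to this epidemic setting. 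I would start from a \emph{basic inequality}: since $\hat I=[\hat s,\hat t]$ beats every interval obtained by sliding one of its endpoints onto $I_{0,L}$ or $I_{0,R}$, a short case analysis on the position of $\hat I$ relative to $I_0=[a,b]$ shows that, with $m:=|\hat I\Delta I_0|$, at least one boundary ``overshoot/undershoot'' has length $\ge m/4$ and forces an \emph{anchored} partial sum of the $\varepsilon_k$'s to be atypical: an overshoot into the null region ($\hat s<a$, length $\ell$) forces $\sum_{k=a-\ell}^{a-1}\varepsilon_k\ge\rho\tilde d\,\ell$, while an undershoot inside $I_0$ ($\hat s>a$, length $\ell$) forces $\sum_{k=a}^{a+\ell-1}\varepsilon_k\le-(d-\rho\tilde d)\,\ell$, with the symmetric statements at $b$; the degenerate ``(nearly) disjoint'' configuration is subsumed by the single comparison $\sum_{k\in\hat I}V_k\ge\sum_{k\in I_0}V_k$.

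The second ingredient is a tail bound for each such anchored event. For the \emph{null} boundary pieces the $\varepsilon_k$'s are i.i.d.\ and centred (Lemma \ref{lemma:pivot-iid}), so a Chernoff bound gives $\IP(\sum_{k=1}^\ell\varepsilon_k\ge\rho\tilde d\,\ell)\le\exp(-\ell\sup_{\theta\ge0}\{\theta\rho\tilde d-\log\IE_0[e^{\theta\varepsilon}]\})$, and since the two remaining summands of $\Psi$ are non-negative (by Cauchy--Schwarz, $\IE[e^{2\theta\varepsilon}]\,\IE[e^{-2\theta\varepsilon}]\ge1$) this is at most $\exp(-\ell\,r)$ with $r:=\sup_{\theta\ge0}\{\theta\rho\tilde d-\Psi(\theta)\}$. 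The \emph{watermarked} boundary pieces are the crux: inside a watermarked stretch the $\varepsilon_k$'s are only \emph{conditionally} centred and may be arbitrarily dependent across $k$, so no martingale structure is available. Here I would (i) exploit that the relevant deviation always occurs in the favourable one-sided direction, with the \emph{large} budget $d-\rho\tilde d$ rather than $\rho\tilde d$; (ii) bound the moment generating function of the dependent sum by a Cauchy--Schwarz split that decouples the left- and right-anchored watermarked segments (equivalently, the positive and negative deviation contributions), at the price of doubling the Chernoff parameter on each half --- this is exactly the origin of the two asymmetric terms $\tfrac12\log\sup_P\IE_{1,P}[e^{\pm2\theta\varepsilon}]$ in $\Psi$; and (iii) control the resulting suprema over $\mathcal P$ using its compactness together with the uniform exponential-moment hypothesis $\sup_P\IE_{1,P}[e^{\eta|\varepsilon|}]<\infty$, invoking Danskin's theorem \citep{danskin1967} to guarantee that $\theta\mapsto\log\sup_{P}\IE_{1,P}[e^{\pm2\theta\varepsilon}]$ is finite and convex near $0$ with well-defined one-sided derivatives and the sup attained at a worst-case $P^\star$. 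Since $d-\rho\tilde d\ge\rho\tilde d$, the rate obtained for the watermarked pieces dominates $r$, so every anchored event of length $\ell$ has probability at most $\exp(-\ell\,r)$ up to a harmless multiplicative constant.

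Finally, a union bound over the $O(1)$ boundary configurations and a geometric sum over $\ell>L$ yield $\IP(|\hat I\Delta I_0|>L)\le C\,e^{-Lr/4}$ for a universal $C$ and all $L\ge1$; the hypothesis $\tilde d\ge c$ serves only to keep $r$ bounded away from $0$ and to absorb the finitely many small-$\ell$ terms and the nearly-disjoint configuration into this bound, while the positivity of $\Var_0(\varepsilon)$ and $\sup_P\Var_{1,P}(\varepsilon)$ makes $\Psi$ genuinely convex so that $r>0$. Taking $L=M/r$ and letting $M\to\infty$ gives $|\hat I\Delta I_0|=O_{\IP}(1/r)=O_{\IP}((\sup_{\theta\ge0}\{\theta\rho\tilde d-\Psi(\theta)\})^{-1})$, which is the claim, and the clean form $O_{\IP}(\tilde d^{-1})$ of Theorem \ref{thm:single-watermark} follows from the elementary bound $r\gtrsim\tilde d$ valid once $\tilde d\ge c$ (the maximand is linear in $\theta$ with slope $\rho\tilde d$ while $\Psi$ has a bounded effective domain by the exponential-moment assumption). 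The step I expect to fight hardest with is (ii)--(iii): turning ``conditionally centred but arbitrarily dependent'' watermarked pivot statistics into a genuine exponential tail, which is what forces the Cauchy--Schwarz/Hölder manipulation of cumulant generating functions and the Danskin-type treatment of $\sup_{P\in\mathcal P}$, and is the reason $\Psi$ takes its unusual asymmetric form.
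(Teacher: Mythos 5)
Your overall architecture matches the paper's proof: compare $\hat I$ to $I_0$ via the objective $V_I$, reduce to large-deviation events for boundary partial sums, get exponential tails via Chernoff bounds on moment generating functions with a Cauchy--Schwarz split that doubles the parameter (the origin of the $\tfrac12\log\sup_P\IE_{1,P}[e^{\pm2\theta\varepsilon}]$ terms in $\Psi$), use Danskin's theorem plus compactness of $\mathcal P$ and convexity (H\"older) to show $\inf_\theta\{\Psi(\theta)-\theta\rho\tilde d\}<0$, and sum a geometric series over $|I\Delta I_0|>M$. The paper organizes the case analysis slightly differently (five positional classes $\mathcal P_1,\dots,\mathcal P_5$ and a joint maximum over both endpoints rather than your single-boundary anchored events), but that is a cosmetic difference.

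There is, however, one substantive misstatement that would derail your step (ii) if taken at face value: you assert that inside a watermarked stretch ``no martingale structure is available.'' In fact the conditional centering you yourself note --- $\IE[\varepsilon_t\mid\mathcal F_{t-1}]=0$ with $\mathcal F_j=\sigma(\{(\omega_{s-1},\zeta_s):s<j\})$ --- is exactly the martingale-difference property, and it is the load-bearing ingredient of the paper's argument. It is what lets one factorize the MGF of the arbitrarily dependent watermarked sum by sequential conditioning, $\IE[\exp(\pm2\theta S^\varepsilon_{[u,v]})]\le\big(\sup_P\IE_{1,P}[\exp(\pm2\theta\varepsilon)]\big)^{v-u+1}$, peeling off one token at a time via the tower property; and it is what makes $\exp(2\theta S^\varepsilon)$ a submartingale so that Doob's maximal inequality controls the running maximum over endpoints. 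A Cauchy--Schwarz split between the left- and right-boundary blocks decouples those two blocks from each other, but it does nothing to control the dependence \emph{within} a single watermarked block --- that is where the conditional-MGF/tower argument is indispensable, and your proposal offers no substitute for it. With that mechanism restored, the rest of your plan (one-sided budgets, Danskin at $\theta=0$, geometric summation, and the reduction to $O_{\IP}(\tilde d^{-1})$ via $\Psi'_+(0)=0$ and $\tilde d\ge c$) goes through as in the paper.
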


Theorem \ref{thm:single-watermark-app} is proved by showing that the probability $\IP(|\hat{I} \Delta I_0|> M)$ is small for all sufficiently large $M$. This probability is controlled by considering the objective function $V_I = S_{I^c} - (\mu_0 + \rho \Tilde{d}) |I^c|$, where $S_I = \sum_{k\in I} X_k$ and $S_{I^c}= \sum_{k=1}^n X_k - S_I$, and noting that, by construction of $\hat{I}$, $\IP(|\hat{I} \Delta I_0|> M) \leq \IP(\inf_{I: |I \Delta I_0|>M} V_I - V_{I_0} \leq 0)$. Usually, in change-point literature, one controls terms such as $\inf_{I: |I \Delta I_0|>M} V_I - V_{I_0}$ through Hàjek-Rényi type inequality \cite{hajekreyni1955}; see \cite{bai1994, bonnerjee2025}. Such inequalities are usually derived by dividing the domain, on which infimum is taken, into smaller intervals, and applying Doob's inequality or Rosenthal's inequality piece-meal. However, the main bottleneck in this particular setting is the potentially strong dependence between the pivot statistics in watermarked patches. We develop novel arguments that exploit $\sup_{\IP\in \mathcal{P}}\IE_{1,\IP}[\exp(\eta|\varepsilon|)]<\infty$ to provide an extended version of the Hajek-Renyi theory through the lens of the cumulant generating function. The proof is provided below.

\begin{proof}[Proof of Theorem \ref{thm:single-watermark-app}]
For a candidate watermarked interval $I$, let $A_1 (I) = I \cap I_0^c$, $A_2(I)= I \cap I_0$, $A_3(I) = I^c \cap I_0$, $A_4(I) =(I \cup I_0)^c$, and correspondingly $x_i(I)=|A_i(I)|$, $i=1(1)4$. Subsequently, we omit the argument $I$ when it is clear from the context. Note that $|I_0|= x_2+x_3$, $|I|=x_1+x_2$, and $|\hat{I} \Delta I_0|=x_1+x_3$. Note that, by definition of $\hat{I}$ it follows that $V_{\hat{I}} \leq V_{I_0}$. 
Finally, denote $S_i = \sum_{k \in A_i} X_k$, and $S_i^{\varepsilon}= \sum_{k \in A_i} \varepsilon_k$. With these notations established, we proceed through the following series of implications. 
\allowdisplaybreaks \begin{align}
    V_I  - V_{I_0} &= (S_{I^c}- S_{I_0^c}) -  (|I^c| - |I_0^c|)(\mu_0 + \rho \Tilde{d}) \nonumber\\
    &= S_3 - S_1 -  (x_3- x_1) (\mu_0 + \rho \Tilde{d}) \nonumber\\
    &= (S_3^{\varepsilon} + \sum_{t\in A_3} \mu_t) - (S_1^{\varepsilon} + x_1 \mu_0) -  (x_3- x_1) (\mu_0 + \rho \Tilde{d}) \nonumber\\
    &= S_3^\varepsilon - S_1^\varepsilon + \sum_{t\in A_3} (\mu_t - \mu_0 )+(x_1-x_3)\rho \Tilde{d}\nonumber\\
    &\geq S_3^\varepsilon - S_1^\varepsilon + x_3(d - \rho\Tilde{d})+ x_1 \rho \Tilde{d} \label{eq:d-bound-lb}\\
    & \geq S_3^\varepsilon - S_1^\varepsilon + (x_1+x_3)\rho \Tilde{d},
    \label{eq:lb}
\end{align}
where (\ref{eq:d-bound-lb}) follows from Assumption \ref{ass:alt-mean} and (\ref{eq:lb}) uses $d \geq 2\rho \Tilde{d}$. For some $M >0$, let $D_M:=\{I: |I\Delta I_0|>M\}$. Let $I_0 = [L,R]$. Note that, a candidate interval $I$ can belong to any of the following five sub-classes:
\begin{itemize}
        \item $\mathcal{P}_1:=\{I:I \subseteq I_0,  I \in D_M\}$. 
        \item  $\mathcal{P}_2:=\{I:I \supseteq I_0, I \in D_M\}$.
        \item  $\mathcal{P}_3:=\{I:I \cap I_0=\phi, I \in D_M\}$.
        \item  $\mathcal{P}_4:=\{ (a, b) :a< L < b < R, I = (a, b) \in D_M\}$.
        \item  $\mathcal{P}_5:=\{ (a, b) :L <a < R <b, I = (a, b) \in D_M\}$. 
    \end{itemize}
Subsequently, we detail the analysis for the relatively harder case $\mathcal{P}_4$. The arguments for the other cases are similar. Observe that:
\allowdisplaybreaks \begin{align}
    \IP(\hat{I} \in \mathcal{P}_4) &\leq \IP(\min_{I: I\in \mathcal{P}_4} V_I - V_{I_0}\leq 0) \nonumber\\
   &\leq \IP\big(\max_{I: I \in \mathcal{P}_4, x_1+x_3>M} \frac{S_1^\varepsilon - S_3^\varepsilon}{x_1+x_3} \geq  \rho \Tilde{d}) \nonumber\\
 & \leq \sum_{j=M+1}^{\infty} \inf_{\theta \geq 0}\IP\big(\max_{a,b: a < L <b< R: L-a+R-b=j} \exp(\theta (S_{[a,L]}^\varepsilon - S_{[b,R]}^\varepsilon)) \geq  \exp(\theta\rho \Tilde{d}j)) \nonumber\\
 &\leq \sum_{j=M+1}^{\infty} \inf_{\theta \geq 0} \exp(-\theta\rho \Tilde{d}j)\IE\big[\max_{a,b: a < L <b< R: L-a+R-b=j} \exp(\theta (S_{[a,L]}^\varepsilon - S_{[b,R]}^\varepsilon))\big]\nonumber\\
 &\leq \sum_{j=M+1}^{\infty} \inf_{\theta \geq 0} \exp(-\theta\rho \Tilde{d}j)\IE\big[\max_{a,b: a \in \{L-j+1, \cdots, L\}, b\in\{(R-j+1) \vee L, \cdots, R\}} \exp(\theta (S_{[a,L]}^\varepsilon - S_{[b,R]}^\varepsilon))\big] 
 \label{eq:exp-over-max}
\end{align}
For $j \in [n]$, let $\mathcal{F}_j:=\sigma( \{ (\omega_{s-1}, \zeta_s): s<j \})$.  Write 
\allowdisplaybreaks \begin{align}
  & \IE\big[\max_{a,b: a \in \{L-j+1, \cdots, L\}, b\in\{(R-j+1) \vee L, \cdots, R\}} \exp(\theta (S_{[a,L]}^\varepsilon - S_{[b,R]}^\varepsilon))\big] \nonumber\\
  ={}& \IE\Big[\max_{a: a \in \{L-j+1, \cdots, L\}} \exp(\theta S_{[a,L]}^\varepsilon) \IE\big[\max_{b\in\{(R-j+1) \vee L, \cdots, R\}} \exp(-\theta S_{[b,R]}^\varepsilon)\ | \ \mathcal{F}_{(R-j) \vee L} \big]\Big] \nonumber\\
  \leq{} & \IE\Big[\max_{a: a \in \{L-j+1, \cdots, L\}} \exp(\theta S_{[a,L]}^\varepsilon) \sqrt{\IE[\exp(-2\theta S^{\varepsilon}_{[(R-j+1)\vee L, R]}) \ | \ \mathcal{F}_{(R-j) \vee L}]} \nonumber\\ &\hspace*{2cm} \sqrt{\IE\big[\max_{b\in\{(R-j+1) \vee L, \cdots, R\}} \exp(2\theta S_{[(R-j+1)\vee L,b]}^\varepsilon)\ | \ \mathcal{F}_{(R-j) \vee L} \big]}\Big] 
  \label{eq: first-conditional},
\end{align}
where, (\ref{eq: first-conditional}) follows from Cauchy-Schwartz inequality. Now, note that, by construction of $\varepsilon_t$, conditional on $\mathcal{F}_{(R-j) \vee L}$, $\varepsilon_t$ is a martingale difference sequence adapted to $\sigma( \{ (\omega_{s-1}, \zeta_s): (R-j+1)\vee L \leq s \leq t \} )$. Since $x \mapsto \exp(2\theta x)$ is convex, hence $\exp(2\theta S_{[(R-j+1)\vee L,b]}^\varepsilon), b \in \{(R-j+1)\vee L, \ldots, R\}$ is a sub-martingale sequence. Consequently, Doob's maximal inequality~\citep{hallhydebook} applies. Further sequential conditioning yields the following series of inequalities.
\allowdisplaybreaks \begin{align}
    &\IE\big[\max_{b\in\{(R-j+1) \vee L, \cdots, R\}} \exp(2\theta S_{[(R-j+1)\vee L,b]}^\varepsilon)\ | \ \mathcal{F}_{(R-j) \vee L} \big] \nonumber\\ 
    \leq{} & 4\IE[\exp(2\theta S_{[(R-j+1)\vee L,R]}^\varepsilon) \ | \ \mathcal{F}_{(R-j) \vee L} ] \nonumber\\
    \leq{} & 4\IE[\exp(2\theta S_{[(R-j+1) \vee L, R-1]}^\varepsilon) \IE[\exp(2\theta \varepsilon_R)| \mathcal{F}_{R-1}] \ | \ \mathcal{F}_{(R-j) \vee L} ] \nonumber\\
    \leq{} & 4\sup_{\IP} \IE_{1,\IP}[\exp(2\theta \varepsilon)]\IE[\exp(2\theta S_{[(R-j+1) \vee L, R-1]}^\varepsilon) \ | \ \mathcal{F}_{(R-j) \vee L} ] \nonumber\\
    \leq{} &  4\big(\sup_{\IP} \IE_{1,\IP}[\exp(2\theta \varepsilon)] \big)^j. \label{eq:inner-condn-exp-1}
\end{align}
 Proceeding along similar lines, we obtain 
\allowdisplaybreaks \begin{align}
    \IE[\exp(-2\theta S^{\varepsilon}_{[(R-j+1)\vee L, R]}) \ | \ \mathcal{F}_{(R-j) \vee L}] \leq 4\big(\sup_{\IP} \IE_{1,\IP}[\exp(-2\theta \varepsilon)] \big)^j, \label{eq:inner-condn-exp-2}
\end{align}
and 
\allowdisplaybreaks \begin{align}
    \IE[\max_{a: a \in \{L-j+1, \cdots, L\}} \exp(\theta S_{[a,L]}^\varepsilon) ] \leq 4\big(\IE_{0}[\exp(\theta \varepsilon)] \big)^j. \label{eq:inner-condn-exp-3}
\end{align}
Combining (\ref{eq:inner-condn-exp-1})-(\ref{eq:inner-condn-exp-3})  and plugging them in (\ref{eq: first-conditional}) and (\ref{eq:exp-over-max}), one obtains
\allowdisplaybreaks \begin{align}
    \IP(I \in \mathcal{P}_4) \leq 16\sum_{j=M+1}^{\infty} \inf_{\theta\geq 0} \bigg(\exp(-\theta\rho \Tilde{d}) \IE_{0}[\exp(\theta \varepsilon)] \sqrt{\sup_{\IP} \IE_{1,\IP}[\exp(2\theta \varepsilon)] \sup_{\IP} \IE_{1,\IP}[\exp(-2\theta \varepsilon)] }\bigg)^j. \label{eq:big-ub}
\end{align}
To deliver the coup de grâce of our argument, we are required to bound (\ref{eq:big-ub}). To that end, define $\phi:\R_+ \to \R$ as
\[ \phi (\theta) = -\theta\rho\Tilde{d} + \log \IE_0[\exp(\theta\varepsilon)] + 2^{-1} \log \sup_{\IP} \IE_{1,\IP}[\exp(2\theta \varepsilon)] + 2^{-1} \log \sup_{\IP} \IE_{1,\IP}[\exp(-2\theta \varepsilon)].\]
By definition of $\phi$, $\IP(I\in \mathcal{P}_4) \leq \sum_{j=M+1}^{\infty} \inf_{\theta\geq 0} \exp(j\phi(\theta)).$ Moreover, for $\lambda\in(0,1),\  \theta_1, \theta_2\in \R_+$, Hölder's inequality produces 
\allowdisplaybreaks \begin{align}\label{eq:convexity}
    \log \sup_{\IP} \IE_{1,\IP}[\exp(2 (\lambda\theta_1+ (1-\lambda)\theta_2) \varepsilon)] \leq \sup_{\IP} \Big(\lambda\log \IE_{1,\IP}[\exp(2\theta_1 \varepsilon)] + (1-\lambda)\log \IE_{1,\IP}[\exp(2\theta_2 \varepsilon)] \Big). 
\end{align}
Similar arguments for $ \log \IE_0[\exp(\theta\varepsilon)]$ and $\log \sup_{\IP} \IE_{1,\IP}[\exp(-2\theta \varepsilon)]$ show that $\phi$, being a linear combination of convex functions with non-negative weights (note that $-\theta\rho\Tilde{d}$ is linear) , is itself convex.

Let $f:\R \otimes \R^{|W|} \mapsto \R$ be given by $f(\theta, P)=\log  \IE_{0}[\exp(2\theta \varepsilon)] + \log  \IE_{1,\IP}[\exp(2\theta \varepsilon)]$. Recalling that $f^{(1)} (\theta, w)= \frac{\partial}{\partial \theta} f(\theta, w)$, observe that
\allowdisplaybreaks \begin{align}
    f^{(1)}(0, P) =0 \ \text{for any $P\in \mathcal{P}$}, \label{eq:fixed-P}
\end{align} 
since $\IE_0[\varepsilon]=\IE_{1,\IP}[\varepsilon]=0$. Therefore, noting that $\mathcal{P}$ is a compact subset of the $|W|$-dimensional simplex, in light of $\sup_{\IP \in \mathcal{P}} \IE_{1, \IP}[\exp(-\eta|\varepsilon|)] \leq \sup_{\IP \in \mathcal{P}} \IE_{1,\IP}[\exp(\eta|\varepsilon|)]<\infty$, Danskin's Theorem~\citep{danskin1967} entails
\begin{align}
    \frac{\partial}{\partial \theta}\sup_{\IP\in \mathcal{P}} f(\theta, P) \Big|_{\theta \downarrow0}= \sup_{\IP \in \mathcal{P}} f^{(1)}(0, P)= 0, \label{eq:19}
\end{align}
where in the second equality we use that $f(0,P)=0$ for any $P\in \mathcal{P}$, and the third equality follows from (\ref{eq:fixed-P}). Similarly, $\frac{\partial}{\partial \theta}\sup_{\IP\in \mathcal{P}} f(\theta, P) \Big|_{\theta \uparrow0}= -\inf_{\IP \in \mathcal{P}} f^{(1)}(0, P)= 0$. Therefore, $\phi'(0)=-\rho\Tilde{d} <0$. On the other hand, since $\min\{\Var_0(\varepsilon), \sup_{\IP}\Var_{1,P}(\varepsilon)\}>0$, hence, in conjunction with $\phi$ being convex, there must exist $\kappa \in (0,1)$ such that $\log \kappa:= \inf_{\theta \geq 0}\phi(\theta)$. Consequently, from (\ref{eq:big-ub}), one obtains, 
\begin{align}
    \IP(I \in \mathcal{P}_4) \leq 16\sum_{j=M+1}^{\infty} \kappa^j = O(\kappa^M). \label{eq:choice-of-delta}
\end{align}
Suppose $\delta\in (0,1)$ be given. A choice of $M > \frac{\log 1/\delta}{\log 1/\kappa}$ ensures that $\IP(I \in \mathcal{P}_4) <\delta$. This completes the proof. 
\end{proof}

Finally, Theorem \ref{thm:single-watermark} is proved by invoking Theorem \ref{thm:single-watermark-app} and Proposition \ref{prop:corollary-to-Theorem-3}. 

We can further sharpen the $O((\rho \tilde{d})^{-1})$ rate in Theorem \ref{thm:single-watermark} to $O((\rho \tilde{d})^{-2})$ by assuming a mild condition: local sub-Gaussianity of the pivot statistics. The following result also trivially follows from Theorem \ref{thm:single-watermark-app} and Proposition \ref{prop:corollary-to-Theorem-3}, but is separately stated to highlight its importance.
\begin{lemma}\label{prop:subG}
   Grant the assumptions of Theorem \ref{thm:single-watermark}. If
        \begin{equation*}
            \max\{ \IE_0[\exp(r|\varepsilon|)] ,\sup_{\IP\in \mathcal{P}}\IE_{1,\IP}[\exp(r|\varepsilon|)] \} \leq \exp(r^2/2),
        \end{equation*}
        % $$
        for all $r\in [0, \eta]$, then choosing $\rho>0$ such that $\rho\tilde{d}< \frac{5}{2}\eta$, then $|\hat{I} \Delta I_0|= O_{\IP}\big((\rho \tilde{d})^{-2}\big).$
\end{lemma}

\subsection{Proof of Theorem \ref{thm:multiple-watermark}}\label{se:proof-thm-multiple}

For convenience, we first restate the theorem.

\begin{theorem}\label{thm:multiple-watermark-app}
Assume that the null distribution of the pivot statistics is absolutely continuous with respect to the Lebesgue measure. Let the number of watermarked intervals $K$ be bounded, and Assumption \ref{ass:min-sep} be granted for the watermarked intervals $I_k, k\in [K]$.
Fix $\alpha \in (0,1)$, and recall the quantities defined in \texttt{WISER}\ described in Figure \ref{fig:algo}. Suppose that $\IE_0[|X-\mu_0|^{p}]<\infty$ for some $p \geq 2$, and let the block length $b=b_n$ satisfy $b_n = O(n^\upsilon)$, and $b_n / n^{1/p} \to \infty$, where $\upsilon>1/p$ is same as in Assumption \ref{ass:min-sep}. Moreover, suppose the threshold $\mathcal{Q}=\mathcal{Q}_n$ is selected so that $\IP_0(\max_{1\leq k \leq \lceil n/b\rceil} S_k> \mathcal{Q})= \alpha.$ Finally, assume $d\geq c$ for some constant $c>0$, and 
\begin{align}\label{eq:alt-mean-ub}
    \sup_{\IP \in \mathcal{P}} \IE_{1,\IP}[X] < \infty.
\end{align}
 Then, given $\varepsilon>0$, under the assumptions of Theorem \ref{thm:single-watermark}, there exists $M_{\varepsilon} \in \R_+$, independent of $n, K,$ and $d$, and $\rho>0$, such that \fancyname\ applied with hyper-parameters $b$ and $\rho$ satisfies
\allowdisplaybreaks \begin{align}
    \liminf_{n\to \infty}\IP\big(\hat{K}=K, \ \max_{k\in [K]}|\hat{I}_k \Delta I_k|< M_{\varepsilon} {d}^{-1} \big) \geq 1-\varepsilon.
\end{align}

\end{theorem}

Let $\widetilde{\mathcal{B}}=\{1\leq k \leq \lceil n/b \rceil: B_k \subseteq I_j \text{ for some } j\in [K]\}$. Our proof proceeds through a series of arguments, each carefully orchestrated to establish the validity of the corresponding steps of our algorithm. We comment that subsequently, all statements involving $n$ but without a limit attached to it are meant to be considered for all sufficiently large values of $n$. 

\textbf{Step 1: Validity of first stage thresholding.}

In this step, we show that 
\allowdisplaybreaks \begin{align}
    \IP(\min_{k\in \widetilde{\mathcal{B}}} S_k > \mathcal{Q}) \to 1, \text{ as $n\to \infty$.} \label{eq:first-stage-thresholding}
\end{align}
To begin with, let $\tau$ be a constant as defined in Proposition \ref{lem:eqdv-condn}, and let
\begin{align}\label{eq:dv-condn-appendix}
   \kappa:= \inf_{\theta \geq 0} \theta (\mu_0+ \tau d) + \log \sup_{\IP} \IE_{1,\IP}[\exp(-\theta X)] <0,
\end{align}
where the inequality follows from Proposition \ref{lem:eqdv-condn}.
Note that
\allowdisplaybreaks \begin{align}
    \limsup_{n\to \infty}\max_{k\in \widetilde{\mathcal{B}}}\IP(S_k \leq \mathcal{Q}_n)^{1/b} &\leq \limsup_{n\to \infty}\inf_{\theta\geq 0} \exp(\theta \mathcal{Q}_nb_n^{-1} + \log \sup_{\IP \in \mathcal{P}} \IE_{1,\IP}[\exp(-\theta X)]) \nonumber \\
    &\leq \inf_{\theta\geq 0} \exp(\theta \mu_0 + \log \sup_{\IP \in \mathcal{P}} \IE_{1,\IP}[\exp(-\theta X)]) \label{eq:appl-of-prop-1}\\
    &\leq \exp(\kappa)<1, \label{eq:less-than-1}
\end{align}
where (\ref{eq:appl-of-prop-1}) is obtained through an application of Proposition \ref{prop:1}, and (\ref{eq:less-than-1}) follows from (\ref{eq:dv-condn-appendix}). Since $\kappa<0$, one has $\frac{n}{b}\exp(\kappa b)\to 0$ as $n\to \infty$, and consequently
\[\IP(\min_{k\in \widetilde{\mathcal{B}}} S_k \leq \mathcal{Q}) \leq \frac{n}{b}\max_{k\in \widetilde{\mathcal{B}}}\IP(S_k \leq \mathcal{Q}_n) \to 0, \text{ as $n\to \infty$},\]
thereby establishing (\ref{eq:first-stage-thresholding}).

\textbf{Step 2. Estimation of the number of watermarked regions through the set $M$.}

Recall $M$ from the Step 1 of Subroutine \texttt{Refined\_Local\_Search} in Algorithm \ref{alg:watermark-search}. In this step of our proof, we will prove $\IP(\hat{K}=K)\to 1,$ which will also imply that $|M|$ is even with probability approaching $1$. Therefore, we may be excused for assuming that $|M|$ is even. 

Let $C_1,\ldots, C_{\hat{K}}$ be the disjoint set of intervals in $M$, with $C_j= [(s_{2j-1}-1)b+1, s_{2j}b]$. Note that for each $k\in \mathcal{B}$ such that $S_k > \mathcal{Q}$, $B_k \subseteq C_j$ for some $j$. Let $\widetilde{\mathcal{B}}_j = \{k \in \widetilde{\mathcal{B}}: B_k\subseteq I_j\}$, $j\in [K]$. We remark that 

Clearly, $\widetilde{\mathcal{B}}=\cup_{j=1}^K \widetilde{\mathcal{B}}_j$, and $\widetilde{\mathcal{B}}_j$ are disjoint. Therefore, in light of the construction of $M$ from blocks surpassing the threshold $\mathcal{Q}$, it follows,
\begin{align*}
    \IP(\min_{k\in \widetilde{\mathcal{B}}} S_k > \mathcal{Q}) = \IP(\min_{j\in [K]} \min_{k \in \widetilde{\mathcal{B}}_j} S_k > \mathcal{Q})
    &\leq \IP( \text{for each $j\in [K]$, there exists $i_j \in [\hat{K}]$ such that } \widetilde{\mathcal{B}}_j \subseteq C_{i_j}),
\end{align*}
which implies, in light of (\ref{eq:first-stage-thresholding}),
\allowdisplaybreaks \begin{align}
    \IP(A_n) \to 1, \text{ as $n\to \infty$, where, } A_n:=\{\text{for each $j\in [K]$, there exists $i_j \in [\hat{K}]$ such that } \widetilde{\mathcal{B}}_j \subseteq C_{i_j}\}. \label{eq:prob-A_n}
\end{align}
It is crucial to note that since both $\widetilde{\mathcal{B}}_j$ and $C_j$'s are defined to occur from left-to-right and since $C_j$'s are connected intervals, under the event $A_n$ it also holds that $i_1\leq i_2\leq\ldots \leq i_K$.
At this stage, the relationship between $\hat{K}$ and $K$ is still not entirely clear. Subsequently, we will show that under the event $A_n$, the mapping $j\mapsto i_j$ is injective, establishing that $\hat{K}\geq K$ with high probability. To that end, suppose there exists $k_1 < k_2\in [K]$ such that $i_{k_1}=i_{k_2}$. Since $C_{i_{k_1}}$ is a connected interval, $i_{k_1}=i_{k_2}$ implies that that $i_{k_1} = i_{k_1+1}$. Let $\IP_{E,F}(\cdot)=\IP(\cdot \ \cap E \ \cap F )$ for any events $E$, $F$. Consider the following series of inequalities.
\allowdisplaybreaks \begin{align}
    & \IP_{A_n}(\text{There exists $k\in[K-1]$ such that } C_{i_k}=C_{i_{k+1}}) \nonumber\\
    \leq{} & \IP_{A_n}(\text{There exists $k\in [K-1]$ such that } (I_{k,R}, I_{k+1, L}) \subseteq C_{i_k} )\nonumber\\
    \leq{} & \IP_{A_n}(\text{There exists $k$ such that } \min_{l\in (\lceil I_{k,R}/b \rceil, \lfloor I_{k+1, L}/b \rfloor )} S_l > \mathcal{Q})\nonumber\\
    \leq{} & \IP_0( \sum_{k=1}^{n/b} I\{S_k > \mathcal{Q}\} \geq C_0 \sqrt{\log n} ), \label{eq:A_n-intersect-ub}
\end{align}
where the $\IP_0$ in final inequality appears since for $l\in (\lceil I_{k,R}/b \rceil, \lfloor I_{k+1, L}/b \rfloor )$, the region $B_l$ is unwatermarked; the $\sqrt{\log n}$ appears by invoking Assumption \ref{ass:min-sep} and noting that $b^{-1}(I_{k+1, L} - I_{k,R})\geq C_0 \sqrt{\log n}$. An application of Proposition \ref{prop:2} to (\ref{eq:A_n-intersect-ub}) entails, in view of (\ref{eq:prob-A_n}), that,
\begin{align*}
    \IP_{A_n}(\bar{B}_n) \to 1, \text{ as $n\to \infty$, where $\bar{B}_n=\{\text{$C_{i_k}$ and $C_{i_s}$ are disjoint if $i_k\neq i_s$}\}$.}
\end{align*}
Clearly, this implies that $\IP_{A_n}(\hat{K} \geq K)\to 1$ as $n\to \infty$, which also produces $\IP(\hat{K} \geq K)\to 1$ as $n\to \infty$. On the other hand, if $\hat{K}>K$, then under the event $A_n\cap \bar{B}_n$, there exists $j\in [\hat{K}]$ such that $C_j$ and $ \cup_{s \in \mathcal{B}} B_s$ are disjoint. Consequently, it must be true that $|C_j \cap (\cup_{k=1}^K I_j)| \leq b.$ Note that, by construction of $C_j$'s in \texttt{WISER}, $|C_j| \geq c b\sqrt{\log n}$. Therefore it must be true that there are at least $2^{-1}c \sqrt{\log n}$ many $s$'s such that $B_s \cap C_j \cap (\cup_{k=1}^K I_j)=\phi$, and $S_s > \mathcal{Q}$. Hence it follows from Proposition \ref{prop:2} that
\[ \IP_{A_n,\bar{B}_n}(\hat{K}>K) \to 0, \text{ as $n\to \infty$}, \]
which immediately implies that
\allowdisplaybreaks \begin{align}
    \IP(\hat{K}=K)\to 1\text{ as $n\to \infty$}. \label{eq:correct-est-K}
\end{align}

\textbf{Step 3. Choice of $\Tilde{d}$ and $\rho$.}

Recall $\Tilde{d}$ from Step 5 of Subroutine \texttt{Refined\_Local\_Search} in Algorithm \ref{alg:watermark-search}. In this step, we establish that there exists $\rho>0$, such that $d> 2\rho \Tilde{d}$ with high probability. In conjunction to $\Tilde{d}$, also define 
\[ d^\dagger = \frac{\sum_{j=1}^K\sum_{s\in I_j} (X_s- \mu_0)}{\sum_{j=1}^K |I_j|}.  \]
Let the event $\{\hat{K}=K\}$ be denoted as $E_n$. Under $E_n$, by construction of $D_j$, $\IP_{A_n, B_n, E_n}(I_j \subseteq D_j \text{ for all $j\in [K]$})\to 1$ as $n\to \infty$. Call the latter event as $F_n$. Observe that under $E_n \cap F_n$, it holds 
\allowdisplaybreaks \begin{align}\label{eq:bound-on-Dj}
    \sum_{j=1}^K |I_j| + 2C b \log n \geq \sum_{j=1}^{\hat{K}} |D_j| \geq \sum_{j=1}^K |I_j| + C b \log n
\end{align}for some $C>0$. Therefore, under the same event, it follows
\allowdisplaybreaks \begin{align}
    \Tilde{d} \leq  d^\dagger \frac{\sum_{j=1}^K |I_j| }{\sum_{j=1}^K |I_j| + C b \log n} + \frac{\sum_{s \in \cup_j ( I_j^c \cap D_j)} (X_s - \mu_0)}{\sum_{j=1}^K |I_j| + C b \log n}. \label{eq:ub-d-tilde}
\end{align}
We first tackle the second term in the upper-bound in (\ref{eq:ub-d-tilde}). Let $D^\dagger_j = [(I_{j,L}- \lfloor C b \log^{3/2} n \rfloor) \vee 1 , (I_{j,R}+ \lfloor C b \log^{3/2} n \rfloor) \wedge n ]$.  Again, by construction of $D_j$ as well as from Assumption \ref{ass:min-sep}, for all sufficiently large $n$ it follows 
\[ \IP_{A_n, \bar{B}_n, E_n}(D_j \subseteq D_j^{\dagger}, D_i^\dagger \cap D_j^\dagger =\phi \text{ for $i\neq j$})\to 1. \] 
Call the above event as $G_n$. Fix $\varepsilon>0$, and consider the following implications.
\allowdisplaybreaks \begin{align}
    &\IP_{A_n, \bar{B}_n, E_n}\bigg(\frac{\sum_{s \in \cup_j ( I_j^c \cap D_j)} (X_s - \mu_0)}{\sum_{j=1}^K |I_j| + C b \log n}>\varepsilon\bigg)\nonumber\\
    \leq{} & \IP_{A_n, \bar{B}_n, E_n, G_n}\bigg(\frac{\sum_{s \in \cup_j ( I_j^c \cap D_j^{\dagger})} |X_s - \mu_0|}{\sum_{j=1}^K |I_j| + C b \log n}>\varepsilon\bigg) + o(1) \nonumber\\
    \leq{} & \IP\bigg(\frac{\sum_{s \in \cup_j ( I_j^c \cap D_j^{\dagger})} |X_s - \mu_0|}{\sum_{j=1}^K |I_j| + C b \log n}>\varepsilon\bigg) + o(1) \nonumber\\
    \leq{} & \frac{O(b \log^{3/2}n)}{\varepsilon^2 n\log^2 n} + o(1)=o(1), \label{eq:d-second-term}
\end{align}
where the inequality in the final assertion follows from $|\cup_{j=1}^K (I_j^c \cap D_j^\dagger)|\lesssim b \log^{3/2}n$. Therefore, (\ref{eq:ub-d-tilde}) and (\ref{eq:d-second-term}) jointly yield
\allowdisplaybreaks \begin{align}\label{eq:first-ub-on-Tilde_d}
    \IP_{A_n, \bar{B}_n, E_n, F_n}(\Tilde{d} \leq 2d^\dagger) \to 1, \text{ as $n\to \infty$}.
\end{align}
Next, we focus on controlling $d^\dagger$ by $d$. To that end, we resort to an argument through moment generating functions. On one hand, (\ref{eq:dv-condn-appendix}) entails
\allowdisplaybreaks \begin{align}
    \IP(d^\dagger \leq \tau d)\leq \inf_{\theta\geq 0} \big(\exp(\theta(\mu_0+\tau d) + \log \sup_{\IP\in \mathcal{P}} \IE_{1,\IP}[\exp(-\theta X)])\big)^{\sum_{j=1}^K |I_j|}\leq \exp(\kappa\sum_{j=1}^K |I_j|)\to 0.\label{eq:d-dagger-vs-d}
\end{align}
On the other hand, in light of (\ref{eq:alt-mean-ub}) and $d\geq c$,  choose $$\nu>\frac{\sup_{\IP\in \mathcal{P}} \IE_{1,\IP} [X] - \mu_0 }{c} \vee \frac{\tau}{4},$$ and write:
\allowdisplaybreaks \begin{align}
    \IP(d^{\dagger} \geq 2\nu d)\leq & \inf_{\theta\geq 0} \Big(\exp(-2\theta \nu c + \log \sup_{\IP\in \mathcal{P}}\IE_{1,\IP}[\exp(\theta(X-\mu_0))]\Big)^{\sum_{j=1}^K |I_j|}.  \label{eq:bound on d-dagger}
\end{align}
Echoing the argument in the proof of Theorem \ref{thm:single-watermark}, define 
\[ g(\theta, P \ ; c) =-2\theta \nu c + \log \IE_{1,\IP}[\exp(\theta(X-\mu_0))], \ \widetilde{g}(\theta \ ; c)=\sup_{\IP\in \mathcal{P}} g(\theta, P).\]
Since $\mathcal{P}$ is compact and $\sup_{\IP\in \mathcal{P}} \IE_{1,\IP}[\exp(\eta|X-\mu_0|]< \infty$, Danskin's Theorem~\citep{danskin1967} applies and produces
\allowdisplaybreaks \begin{align}
    \widetilde{g}^{(1)}_+(0, P\ ; c)=\frac{\partial}{\partial \theta} \sup_{\IP\in \mathcal{P}} g(\theta, P\ ; c) \Big |_{\theta \downarrow 0} = \sup_{\IP \in \mathcal{P}} g^{(1)}(0,P\ ; c) = -2\nu d + \sup_{\IP\in \mathcal{P}}  \IE_{1,\IP}[X-\mu_0] \leq -\nu d<0, \label{eq:right-derivative-ub}
\end{align}
where the final inequality is derived via (\ref{eq:alt-mean-ub}). Moreover, similar to (\ref{eq:convexity}) it can be argued that $\widetilde{g}(\theta)$ is convex in $\theta$. Finally, since $\widetilde{g}(0\ ; c)=0$, (\ref{eq:right-derivative-ub}) coupled with its convexity implies that $ \varphi(c):= \inf_{\theta\geq 0} \widetilde{g}(\theta\ ; c)<0$. In view of this, (\ref{eq:bound on d-dagger}) results in 
\allowdisplaybreaks 
\begin{equation}
    \upd{\IP(d^\dagger \geq 2\nu d)\leq \exp(\varphi(c)\sum_{j=1}^K |I_j|)\to 0 \text{ as $n\to \infty$,}} \label{eq:d-dagger-more-than-d}
\end{equation}
where the limiting assertion is due to $\sum_{j=1}^K |I_j| \geq c\sqrt{n}.$ Finally, (\ref{eq:first-ub-on-Tilde_d}) and (\ref{eq:d-dagger-more-than-d}) jointly indicate that
\allowdisplaybreaks \begin{align}
    \IP_{A_n, B_n, E_n, F_n}(\Tilde{d}\leq 4\nu d)\to 1, \text{ as $n\to \infty$.}
\end{align}
Subsequently, we choose $\rho=(8\nu)^{-1}$. In conclusion to this step, (\ref{eq:d-second-term}) along with (\ref{eq:d-dagger-vs-d}) establishes 
\[ \IP_{A_n, B_n, E_n, F_n}(G_n) \to 1 \text{ as $n\to \infty$}, \  G_n:=\{\tau d\leq \Tilde{d}\leq 4\nu d\}. \]

\textbf{Step 4. Localization of watermarked intervals.}

In this step, we establish the validity of our localized estimates $\hat{I}_j$. In Step 3, we argued that 
\[ \IP_{A_n, B_n, E_n}(I_j \subseteq D_j \subseteq D_j^\dagger \text{ for each $j\in [K]$}) \to 1 \text{ as $n\to \infty$.} \]
Call the above event as $\Tilde{F}_n$. Under $\Tilde{F}_n$, it is immediate that 
\[ \hat{I}_j(\tilde{d})=\argmin_{s\in L_j, t\in R_j} \sum_{k\in {D}_j \setminus [s, t] } (X_k -\mu_0 - \rho \tilde{d})= \argmin_{s\in L_j, t\in R_j} \sum_{k\in {D}_j^\dagger \setminus [s, t] } (X_k -\mu_0 - \rho \tilde{d}), \]
since the operator $\sum_{k\in {D}_j^\dagger \setminus [s, t] }$ can be decomposed into $\sum_{k\in {D}_j \setminus [s, t] }+ \sum_{k\in {D}_j^\dagger \setminus D_j }$.

We proceed towards applying Theorem \ref{thm:single-watermark} to $\hat{I}_j(\tilde{d})$. However, note that $\tilde{d}$ is a random quantity, so special care must be accorded to its treatment. To that end, define
\[ \hat{I}_j(\sigma) = \argmin_{s\in L_j, t\in R_j} \sum_{k\in {D}_j^\dagger \setminus [s, t] } (X_k -\mu_0 - \rho \sigma), \ \sigma \in [\tau d, 4\nu d].  \]
Fix $j\in [K]$. For $M>0$, let $D_M:=\{I: |I \Delta I_j|>M\}$. For a candidate interval $I=[s,t]$, let 
\[ \widetilde{V}_I(\sigma) =  \sum_{k\in {D}_j^\dagger \setminus [s, t] } (X_k -\mu_0 - \rho \sigma). \]Clearly, by definition of $G_n$,
\allowdisplaybreaks \begin{align}
    &\IP_{A_n, B_n, E_n, \tilde{F}_n, G_n}(|\hat{I}_j(\tilde{d}) \ \Delta \ I_n|> M)\nonumber\\ 
    \leq{} & \IP_{A_n, B_n, E_n, \tilde{F}_n, G_n}\Big(\sup_{\sigma \in [\tau d, 4\nu d] }|\hat{I}_j(\sigma) \ \Delta \ I_n|> M\Big) \nonumber\\
    \leq{} &\IP_{A_n, B_n, E_n, \tilde{F}_n, G_n}\Big(\text{ There exists $\sigma\in [\tau d, 4\nu d]$ such that} \inf_{s \in L_j, t\in R_j, I\in D_M} \widetilde{V}_I(\sigma) < \widetilde{V}_{I_j}(\sigma) \Big) \nonumber\\
    \leq{} & \IP_{A_n, B_n, E_n, \tilde{F}_n, G_n}\Big(\text{ There exists $\sigma\in [\tau d, 4\nu d]$ such that} \inf_{I\in D_M} \widetilde{V}_I(\sigma) < \widetilde{V}_{I_j}(\sigma) \Big) \nonumber\\
    \leq{} & \IP_{A_n, B_n, E_n, \tilde{F}_n, G_n}\big(\max_{I: x_1+x_3>M} \frac{S_1^{\varepsilon} -S_3^\varepsilon}{x_1+x_3} > (\frac{1}{2} \wedge \frac{\tau}{8\nu}) d\big) \label{eq:appl-of-single-thm}\\
    \leq{} & \IP_{A_n, B_n, E_n, \tilde{F}_n, G_n}\big(\max_{I: x_1+x_3>M} \frac{S_1^{\varepsilon} -S_3^\varepsilon}{x_1+x_3} >  \frac{\tau}{8\nu} d\big) \label{eq:use-of-G_n} \\
    \leq{} & \IP(\max_{I: x_1+x_3>M} \frac{S_1^{\varepsilon} -S_3^\varepsilon}{x_1+x_3} > \frac{\tau}{8\nu} d).\label{eq:to-be-shown-small}
\end{align}
Here, (\ref{eq:appl-of-single-thm}) follows by recalling the notations in the proof of Theorem \ref{thm:single-watermark} and following the arguments (\ref{eq:d-bound-lb})-(\ref{eq:lb}) after observing $\sigma\in[\tau d, 4\nu d]$ implies $d- (8\nu)^{-1}\sigma \geq \frac{d}{2}$. Moreover, (\ref{eq:use-of-G_n}) also follows from $4\nu d\geq \sigma\geq \tau d$. Finally, (\ref{eq:to-be-shown-small}) is derived from $\IP(A\cap B)\leq \IP(A)$; in particular, arguments of Theorem \ref{thm:single-watermark} can be followed verbatim to obtain that 
\[\IP(\max_{I: x_1+x_3>M} \frac{S_1^{\varepsilon} -S_3^\varepsilon}{x_1+x_3} >  \frac{\tau}{8\nu} d) \leq \xi^M \text{ for some $\xi<1$.} \]
Note that in the above assertion we have used the fact that $d\geq c$ to decouple $\xi$ from $d$. Given arbitrary $\varepsilon>0$, $M_{\varepsilon}$ can be chosen to ensure $\xi^{M_{\varepsilon}}<\varepsilon$, and through $\kappa$, this choice of $M_{\varepsilon}$ solely depends on the constants $\nu$, $\tau$, $c$, and $\mu_0$, apart from the quantity $\varepsilon$. Therefore, in view of the number of watermarked intervals $K=O(1)$, we obtain that there exists $M_{\varepsilon}$  independent of $n, K$ and $d$ such that 
\allowdisplaybreaks \begin{align}
    &\IP_{A_n, B_n, E_n, \tilde{F}_n, G_n}(|\hat{I}_j(\tilde{d}) \ \Delta \ I_n|> M_{\varepsilon} \text{ for $j\in [K]$}) \leq \varepsilon \nonumber\\ 
    \implies &\liminf_{n\to \infty} \IP_{A_n, B_n, E_n, \tilde{F}_n, G_n}(|\hat{I}_j(\tilde{d}) \ \Delta \ I_n|\leq  M_{\varepsilon} \text{ for $j\in [K]$}) \geq 1-\varepsilon, \label{eq: final-lb}
\end{align}
where in (\ref{eq: final-lb}) we invoke $$\lim_{n\to\infty} \IP(A_n \cap B_n \cap E_n \cap \tilde{F}_n \cap G_n) =1.$$ Recalling that $E_n=\{\hat{K}=K\}$ completes the proof. 
% \end{proof}

\subsection{\upd{Proof of Theorem \ref{thm:multiple-oracle}}}

\upd{The proof of Theorem \ref{thm:multiple-oracle} follows along similar lines to that of Theorem \ref{thm:single-watermark}, but due to the presence of multiple watermarked patches, it requires some careful technical setup.}

\begin{figure}[htbp]
    \centering
    \includegraphics[width=\linewidth]{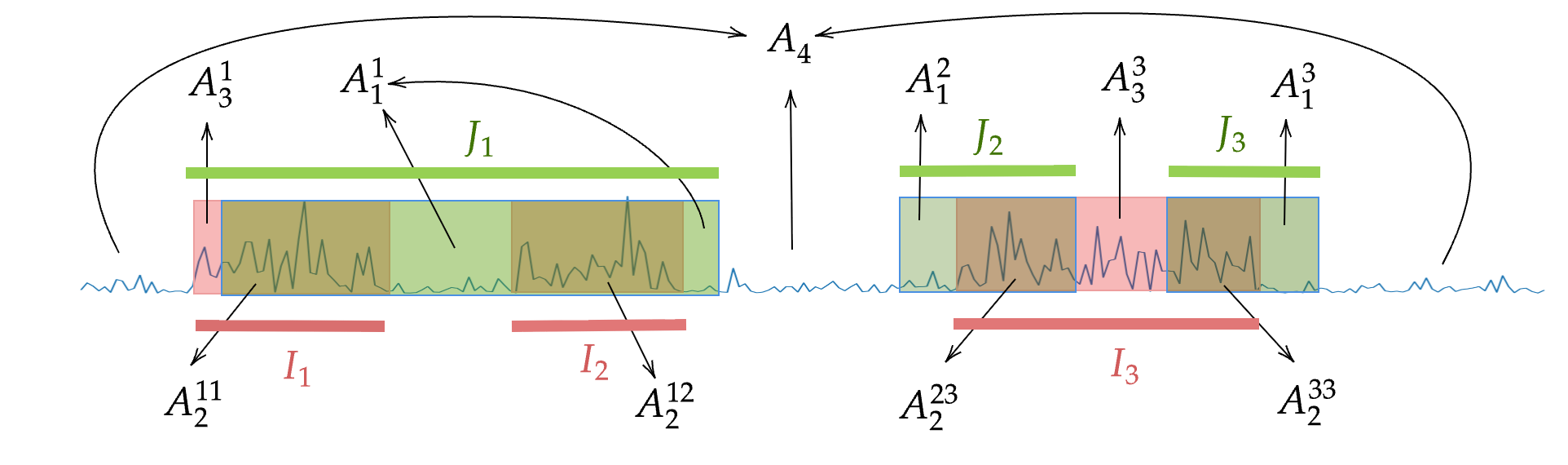}
    \caption{\upd{An instance of Case $\mathcal P^{\varnothing}_{(1,2,3,3,3,3)}$ from Table \ref{tab:case-k_3}.}}
    \label{fig:multiple-wm-oracle}
\end{figure}
    
\upd{Let $I_1, \ldots, I_K$ denote the true watermarked patches. Correspondingly, we denote by $J:=\{J_1, J_2, \ldots, J_K\} \in \mathcal{I}_K$ to be a set of candidate patches. Define the following:
    \begin{align*}
        A_1^k(J) &= J_k \cap (\cup_{i=1}^K I_i)^c, \ k \in [K],\\
        A_2^{ij}(J) &= J_i \cap I_j, \ i,j \in [K], \\
        A_3^k(J) &= I_k \cap (\cup_{i=1}^K J_i)^c, \ k \in [K], \\
        A_4(J) &= (\cup_{k=1}^K I_k)^c \cap (\cup_{k=1}^K J_k)^c,
    \end{align*}
    and correspondingly, denote $x_1^k = |A_1^k|$, $x_2^{ij}= |A_2^{ij}|$, $x_3^k = |A_3^k|$ and $x_4 = |A_4|$. Finally, denote $S_1^k =\sum_{i \in A_1^k} X_i$, $S_1^{k, \varepsilon}=\sum_{i \in A_1^k} \varepsilon_i$, and likewise define $S_2^{ij}$, $S_3^k$, $S_4$, and $S_2^{ij, \varepsilon}, S_3^{k, \varepsilon}$, and $S_4^\varepsilon$. Note that some of the sets $A$ be empty; for example, in the set-up of Figure \ref{fig:multiple-wm-oracle}, the set $A_3^2$ is empty and hence is not marked; in this case the entire watermarked patch $I_2$ happens to be covered by the candidate interval $J_1$. With these notations, it follows that the sets in $\{A_1^k\}$ are disjoint; a similar observation holds true for $\{A_3^k\}$. Moreover, it is straightforward to verify
    \begin{align*}
        |(\cup_{k=1}^K I_K)^c|&= x_4 + \sum_{k=1}^K x_1^k, \text{ and}, \\
        |(\cup_{k=1}^K J_K)^c|&= x_4 + \sum_{k=1}^K x_3^k.
    \end{align*}
    For a candidate set of intervals $J= \{J_1, \ldots, J_K\}$, let \[V_J := \sum_{k \notin \cup_{i=1}^K J_i}(X_k - \mu_0 -\rho \tilde{d}) = S_4 + \sum_{k=1}^K S_3^k- (\mu_0 + \rho \tilde{d})(x_4 + \sum_{k=1}^K x_3^k). \]
    Let $I_0 :=\{I_1, \ldots, I_K\}$ be the collection of true watermarked patches. The identifiability of $I_0$ and $J$ vis-à-vis $V_J$ and $V_{I_0}$ is guaranteed by the definition of $\mathcal{I}$. Then, a generalized form of the basic inequality \eqref{eq:lb} can be produced as:
    \begin{align}
        V_J - V_I &= \sum_{k=1}^K \big( S_3^k - S_1^k - (\mu_0 + \rho \tilde{d}) (x_3^k - x_1^k) \big) \nonumber\\
        &= \sum_{k=1}^K \big( S_3^{k, \varepsilon} - S_1^{k, \varepsilon} + \sum_{t \in A_3^k} (\mu_t - \mu_0) + \rho \tilde{d}) (x_1^k - x_3^k) \big) \nonumber\\
        & \overset{(a)}{\ge} \sum_{k=1}^K (S_3^{k,\varepsilon} - S_1^{k,\varepsilon} + x_3^k(d- \rho \tilde{d}) + x_1^k \rho\tilde{d})\nonumber\\
        & \overset{(b)}{\ge} \sum_{k=1}^K (S_3^{k,\varepsilon} - S_1^{k,\varepsilon} + (x_1^k + x_3^k)\rho \tilde{d}), \label{eq:new-basic-ineq}
    \end{align}
    where (a) follows from Assumption \ref{ass:alt-mean} and (b) uses $d \geq 2\rho \Tilde{d}$. Henceforth, we will leverage \eqref{eq:new-basic-ineq} to control the error metric $\sum_{j\in [K]} |J_j \Delta I_j|$ for any set of candidate patches $\{J_1, J_2, \ldots, J_K\} \in \mathcal{I}_K$.}

    \upd{To that end, recall $I^\star$ from Assumption \ref{ass:min-sep}, and let $M$ be such that $M < I^\star$ for all sufficiently large $n\in \N$. We will provide a principled choice of $M$ at an appropriate place in our subsequent arguments.}

    \upd{Consider the following sequence of inequalities:
    \begin{align}
        &\ \IP\Big(\min_{J: \sum_{k=1}^K |I_k \Delta J_k|>M } V_J - V_I \leq 0 \Big)\nonumber \\
        \overset{(a)}{=} & \IP\Big(\min_{J: \sum_{k=1}^K |I_k \Delta J_k|>M, \ |I_k \Delta J_k|=x_1^k + x_3^k \text{ for } k\in [K] } V_J - V_I \leq 0 \Big) \nonumber\\ & \hspace*{2cm}+ \IP\Big(\min_{J: \sum_{k=1}^K |I_k \Delta J_k|>M, \ x_2^{k \ell} >0 \text{ for some $k \ne \ell \in [K]$} } V_J - V_I \leq 0 \Big)\nonumber \\
        \overset{(b)}{\leq} & \IP\Big(\min_{J: |I_k \Delta J_k|=x_1^k + x_3^k \text{ for } k\in [K], \ M <\sum_{k=1}^K (x_1^k + x_3^k) < I^\star } V_J - V_I \leq 0 \Big) + \IP\Big(\min_{J: x_2^{k \ell} >0 \text{ for some $k \ne \ell \in [K]$} } V_J - V_I \leq 0 \Big) \label{eq:good-cases-bad-cases},
    \end{align}
    where (a) uses Lemma \ref{lemma:good-decompose}.  For the first term in \eqref{eq:good-cases-bad-cases}, we recall the five sub-classes used in the proof of Theorem \ref{thm:single-watermark-app}. Using $|I_k \Delta J_k|=x_1^k + x_3^k$ for every $k\in [K]$, this term can be controlled by looking at the $5^K$ cases, with each candidate interval $J_k$ satisfying one of the five classes $\mathcal{P}_1, \mathcal{P}_2, \ldots, \mathcal{P}_5$. In particular, the choice of $M$ is specified here as argued in Theorem \ref{thm:single-watermark-app}; since the arguments carry over verbatim and do not provide any additional insight, we omit the niceties here.}

    \upd{Instead, we focus on the second term in \eqref{eq:good-cases-bad-cases}. Observe that 
    \begin{align}
        & \ \IP\bigg( \min_{J: x_2^{k \ell} >0 \text{ for some $k \ne \ell \in [K]$} } V_J - V_I \leq 0 \bigg) \nonumber\\
    \leq & \  \IP\bigg( \min_{J: x_2^{k \ell} >0 \text{ for some $k \ne \ell \in [K]$} }  \sum_{k=1}^K (S_3^{k,\varepsilon} - S_1^{k,\varepsilon} + (x_1^k + x_3^k)\rho \tilde{d}) \leq 0 \bigg) \nonumber\\
    \leq & \ \IP\bigg( \max_{J: x_2^{k \ell} >0 \text{ for some $k \ne \ell \in [K]$}} \frac{\sum_{k=1}^K (S_3^{k,\varepsilon} - S_1^{k,\varepsilon}) }{\sum_{k=1}^K (x_1^k + x_3^k)} \geq \rho \tilde{d} \bigg). \label{eq:oracle-2nd-term-daddy}
    \end{align}
    This term is a particular artifact of the multiple watermarked-patch case, and, therefore, requires some special attention. To alleviate the number of cases, we restrict the rest of the mathematical arguments to the $K=3$ case. Nevertheless, to ease the exposition for the reader, we find it necessary to introduce the required notations in their full, conceivable generality, before reducing it to the $K=3$ case to further ground our understanding.} 

    \upd{For a candidate set of intervals $\{J_1, \ldots, J_K \}$, let 
    \[ \mathcal{N}(J):= \{i: J_i \cap (\cup_{j=1}^K I_j) = \phi \}, \]
    and for $i \in [K]\setminus \mathcal{N}$, let
    \allowdisplaybreaks
    \begin{align*}
        a_i(J) = \min\{j: x_2^{ij}>0 \}, \ b_i(J) = \max\{j: x_2^{ij}>0\}.
    \end{align*}
    Lemma \ref{lemma:x2-ordering} instructs that $a_{i+1} \geq b_i$. For a set $\mathcal{N}\subset [K]$, if $[K] \setminus \mathcal{N}=\{i_1, \ldots, i_{K-|\mathcal{N}|}\}$, then, for a given set of integers $a_{i_1} \leq b_{i_1} \leq a_{i_2} \leq b_{i_2}\leq \ldots \leq a_{i_{K- |\mathcal{N}|}} \leq b_{i_{K- |\mathcal{N}|}}$ such that $(a_{i_k} , b_{i_k})=(i_k, i_k)$ for at least one $k\in [K- |\mathcal{N}|]$, denote:
    \[ \mathcal{P}^{\mathcal{N}}_{a_{i_1}, b_{i_1}, \ldots, a_{i_{K- |\mathcal{N}|}} , b_{i_{K- |\mathcal{N}|}}}:= \{J: \mathcal{N}(J)= \mathcal{N}, \ a_{i_k}(J) = a_{i_k}, b_{i_k}(J) = b_{i_k} \ \text{for $k\in [K- |\mathcal{N}|]$}  \}. \]
    This class exhausts $\{J: x_2^{k \ell} >0 \text{ for some $k \ne \ell \in [K]$} \}$. To convey the multitude of distinct cases we are practically dealing with, in Table \ref{tab:case-k_3}, we display all possible $\mathcal{P}$-classes for $K=3$.}

\begin{table}[!htbp]
\centering
\small
\begin{tabular}{c c p{10.8cm}}
\hline
$\mathcal N$ & $[3]\setminus \mathcal N$ & Admissible classes \\ \hline
\\

$\varnothing$ & $\{1,2,3\}$ &
$\mathcal P^{\varnothing}_{(1,1,1,1,1,1)},\ 
\mathcal P^{\varnothing}_{(1,1,1,1,1,2)},\ 
\mathcal P^{\varnothing}_{(1,1,1,1,1,3)},\ 
\mathcal P^{\varnothing}_{(1,1,1,1,2,2)},\ 
\mathcal P^{\varnothing}_{(1,1,1,1,2,3)},\ 
\mathcal P^{\varnothing}_{(1,1,1,1,3,3)},$
\newline
$\mathcal P^{\varnothing}_{(1,1,1,2,2,2)},\ 
\mathcal P^{\varnothing}_{(1,1,1,2,2,3)},\ 
\mathcal P^{\varnothing}_{(1,1,1,2,3,3)},\ 
\mathcal P^{\varnothing}_{(1,1,1,3,3,3)},\ 
\mathcal P^{\varnothing}_{(1,1,2,2,2,2)},\ 
\mathcal P^{\varnothing}_{(1,1,2,2,2,3)},$
\newline
$\mathcal P^{\varnothing}_{(1,1,2,3,3,3)},\ 
\mathcal P^{\varnothing}_{(1,1,3,3,3,3)},\ 
\mathcal P^{\varnothing}_{(1,2,2,2,2,2)},\ 
\mathcal P^{\varnothing}_{(1,2,2,2,2,3)},\ 
\mathcal P^{\varnothing}_{(1,2,2,2,3,3)},$
\newline
$\mathcal P^{\varnothing}_{(1,2,2,3,3,3)},\ 
\mathcal P^{\varnothing}_{(1,2,3,3,3,3)},\ 
\mathcal P^{\varnothing}_{(1,3,3,3,3,3)},\ 
\mathcal P^{\varnothing}_{(2,2,2,2,2,2)},\ 
\mathcal P^{\varnothing}_{(2,2,2,2,2,3)},\ 
\mathcal P^{\varnothing}_{(2,2,2,2,3,3)},$
\newline
$\mathcal P^{\varnothing}_{(2,2,2,3,3,3)},\ 
\mathcal P^{\varnothing}_{(2,2,3,3,3,3)},\ 
\mathcal P^{\varnothing}_{(2,3,3,3,3,3)},\ 
\mathcal P^{\varnothing}_{(3,3,3,3,3,3)}$

\\ \hline
\\

$\{1\}$ & $\{2,3\}$ &
$\mathcal P^{\{1\}}_{(1,1,1,1)},\ 
\mathcal P^{\{1\}}_{(1,1,1,2)},\ 
\mathcal P^{\{1\}}_{(1,1,1,3)},\ 
\mathcal P^{\{1\}}_{(1,1,2,2)},\ 
\mathcal P^{\{1\}}_{(1,1,2,3)},\ 
\mathcal P^{\{1\}}_{(1,1,3,3)},$
\newline
$\mathcal P^{\{1\}}_{(1,2,2,2)},\ 
\mathcal P^{\{1\}}_{(1,2,2,3)},\ 
\mathcal P^{\{1\}}_{(1,2,3,3)},\ 
\mathcal P^{\{1\}}_{(1,3,3,3)},\ 
\mathcal P^{\{1\}}_{(2,2,2,2)},\ 
\mathcal P^{\{1\}}_{(2,2,2,3)},$
\newline
$\mathcal P^{\{1\}}_{(2,3,3,3)},\ 
\mathcal P^{\{1\}}_{(3,3,3,3)}$

\\ \hline
\\

$\{2\}$ & $\{1,3\}$ &
$\mathcal P^{\{2\}}_{(1,1,1,1)},\ 
\mathcal P^{\{2\}}_{(1,1,1,2)},\ 
\mathcal P^{\{2\}}_{(1,1,1,3)},\ 
\mathcal P^{\{2\}}_{(1,1,2,2)},\ 
\mathcal P^{\{2\}}_{(1,1,2,3)},\ 
\mathcal P^{\{2\}}_{(1,2,2,2)},$
\newline
$\mathcal P^{\{2\}}_{(1,2,2,3)},\ 
\mathcal P^{\{2\}}_{(1,2,3,3)},\ 
\mathcal P^{\{2\}}_{(1,3,3,3)},\ 
\mathcal P^{\{2\}}_{(2,2,2,2)},\ 
\mathcal P^{\{2\}}_{(2,2,2,3)},\ 
\mathcal P^{\{2\}}_{(2,2,3,3)},$
\newline
$\mathcal P^{\{2\}}_{(2,3,3,3)},\ 
\mathcal P^{\{2\}}_{(3,3,3,3)}$

\\ \hline
\\

$\{3\}$ & $\{1,2\}$ &
$\mathcal P^{\{3\}}_{(1,1,1,1)},\ 
\mathcal P^{\{3\}}_{(1,1,1,2)},\ 
\mathcal P^{\{3\}}_{(1,1,1,3)},\ 
\mathcal P^{\{3\}}_{(1,1,2,3)},\ 
\mathcal P^{\{3\}}_{(1,1,3,3)},\ 
\mathcal P^{\{3\}}_{(1,2,2,2)},$
\newline
$\mathcal P^{\{3\}}_{(1,2,2,3)},\ 
\mathcal P^{\{3\}}_{(1,2,3,3)},\ 
\mathcal P^{\{3\}}_{(1,3,3,3)},\ 
\mathcal P^{\{3\}}_{(2,2,2,2)},\ 
\mathcal P^{\{3\}}_{(2,2,2,3)},\ 
\mathcal P^{\{3\}}_{(2,2,3,3)},$
\newline
$\mathcal P^{\{3\}}_{(2,3,3,3)},\ 
\mathcal P^{\{3\}}_{(3,3,3,3)}$

\\ \hline
\\

$\{1,2\}$ & $\{3\}$ &
$\mathcal P^{\{1,2\}}_{(1,1)},\ 
\mathcal P^{\{1,2\}}_{(1,2)},\ 
\mathcal P^{\{1,2\}}_{(1,3)},\ 
\mathcal P^{\{1,2\}}_{(2,2)},\ 
\mathcal P^{\{1,2\}}_{(2,3)}$

\\ \hline
\\

$\{1,3\}$ & $\{2\}$ &
$\mathcal P^{\{1,3\}}_{(1,1)},\ 
\mathcal P^{\{1,3\}}_{(1,2)},\ 
\mathcal P^{\{1,3\}}_{(1,3)},\ 
\mathcal P^{\{1,3\}}_{(2,3)},\ 
\mathcal P^{\{1,3\}}_{(3,3)}$

\\ \hline
\\

$\{2,3\}$ & $\{1\}$ &
$\mathcal P^{\{2,3\}}_{(1,2)},\ 
\mathcal P^{\{2,3\}}_{(1,3)},\ 
\mathcal P^{\{2,3\}}_{(2,2)},\ 
\mathcal P^{\{2,3\}}_{(2,3)},\ 
\mathcal P^{\{2,3\}}_{(3,3)}$

\\ \hline

\end{tabular}
\caption{\upd{All classes for $K=3$ under the partition
$\{\mathcal P^{\mathcal N}_{a_{i_1},b_{i_1},\ldots,a_{i_m},b_{i_m}}\}$.}}
 \label{tab:case-k_3}
\end{table}

    \upd{Subsequently, we focus on the hardest case $\mathcal{N}=\phi$. We focus on the following few representative cases:
    \[P^{\varnothing}_{(1,1,1,1,1,1)}, \mathcal P^{\varnothing}_{(1,1,1,1,1,2)}, \mathcal P^{\varnothing}_{(1,1,1,1,2,3)}. \] 
    The techniques from these cases can be verbatim applied to deal with the other cases. For example, the case $\mathcal P^{\varnothing}_{(1,2,3,3,3,3)}$ in Figure \ref{fig:multiple-wm-oracle} can be seen to be symmetric to $\mathcal P^{\varnothing}_{(1,1,1,1,2,3)}$, and can be dealt with similarly. Subsequently, we deal with these $3$ cases one-by-one. At this stage, it is also important to recall the cases $\mathcal{P}_1, \mathcal{P}_2 \ldots, \mathcal{P}_5$ from the proof of Theorem \ref{thm:single-watermark-app}.}

\textbf{\upd{Case 1: $P^{\varnothing}_{(1,1,1,1,1,1)}$}}

\upd{Note that, for Case $P^{\varnothing}_{(1,1,1,1,1,1)}$, $A_3^2=I_2$, $A_3^3=I_3$ and $A_1^2$ is empty. Moreover, there are four sub-cases depending on whether the interactions between $I_1$ and $J_1$ and $I_3$ and $J_3$ are of the types $\mathcal{P}_1$ or $\mathcal{P}_4$, and $\mathcal{P}_1$ or $\mathcal{P}_5$, respectively. These sub-cases are depicted in Figure \ref{fig:case-1-wm-oracle}. Subsequently, we focus on sub-case (b) of Figure \ref{fig:case-1-wm-oracle}, whose treatment addresses all the necessary technical nuances required to deal with the other three cases.}

 \begin{figure}[htbp]
        \centering
        \includegraphics[width=0.9\linewidth]{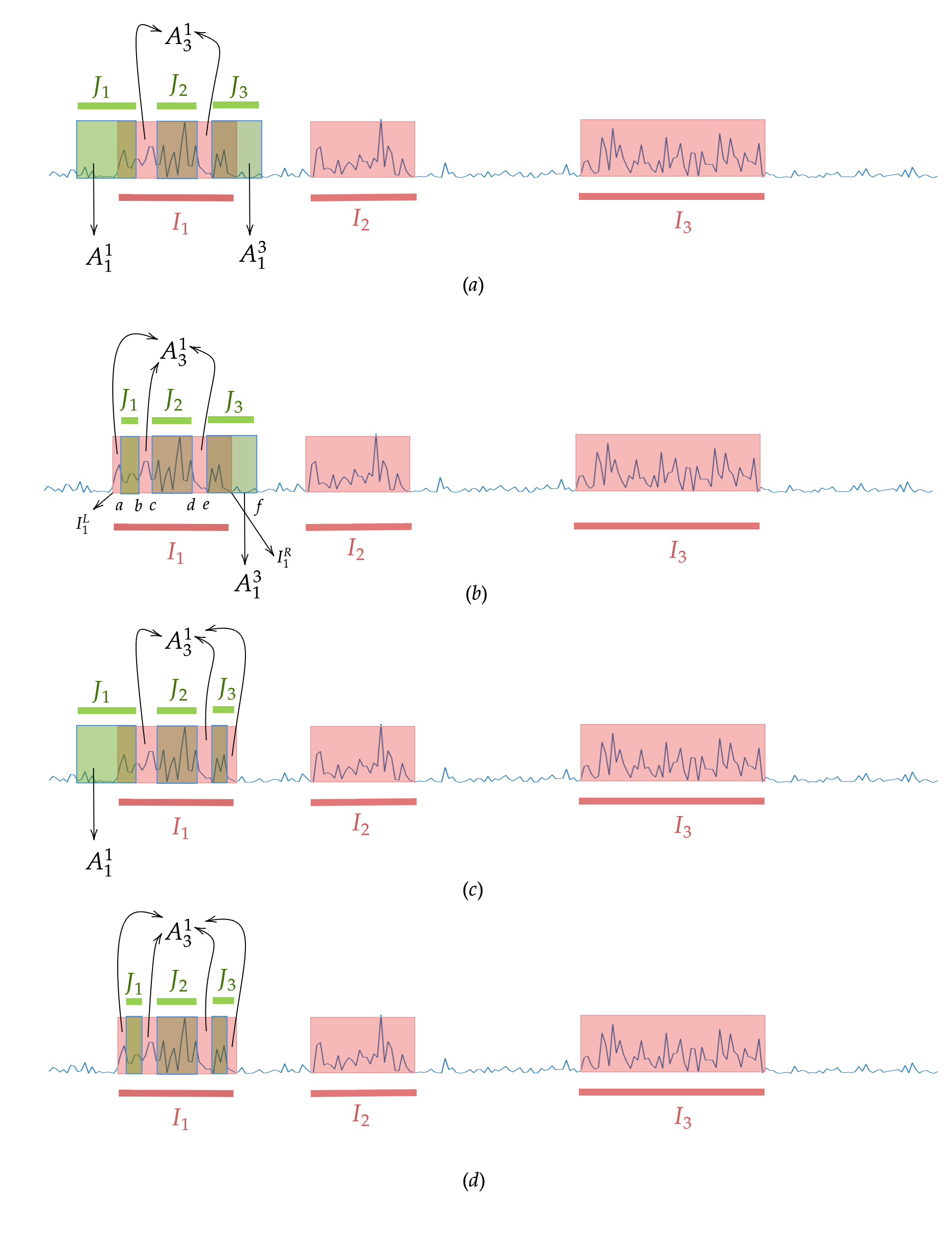}
        \caption{\upd{Representative cases for $P^{\varnothing}_{(1,1,1,1,1,1)}$ in the proof of Theorem \ref{thm:multiple-oracle}.}}
        \label{fig:case-1-wm-oracle}
\end{figure}

\upd{To that end, we begin by considering the following sequence of inequalities arising out of \eqref{eq:oracle-2nd-term-daddy}.
\begin{align}
    & \ \IP\bigg( \max_{J: \text{Case (b) of }P^{\varnothing}_{(1,1,1,1,1,1)}} \frac{\sum_{k=1}^K (S_3^{k,\varepsilon} - S_1^{k,\varepsilon}) }{\sum_{k=1}^K (x_1^k + x_3^k)} \geq \rho \tilde{d} \bigg) \nonumber\\
    \leq & \ \IP \bigg( \max_{(a,b,c,d,e,f): I_1^L < a < b < c < d< e < I_1^R <  f} \frac{S_{[I_1^L, a]}^\varepsilon+ S_{[b,c]}^\varepsilon+ S_{[d,e]}^\varepsilon + S_{I_2}^\varepsilon + S_{I_3}^\varepsilon - S_{[I_1^R, f]}^\varepsilon}{(a - I_1^L) + (c-b) + (e-d) + (f- I_1^R) +|I_2| + |I_3|} \geq \rho \tilde{d} \bigg) \nonumber\\
    \leq & \ \IP\big(\frac{S_{I_2}^\varepsilon}{|I_2|} \geq \frac{\rho \tilde{d}}{6} \big) + \IP\big(\frac{S_{I_3}^\varepsilon}{|I_3|} \geq \frac{\rho \tilde{d}}{6} \big) + \IP\bigg( \max_{a: I_1^L < a < I_1^R} \frac{S_{[I_1^L, a]}}{(a-I_1^L) + |I_2| + |I_3|} \geq  \frac{\rho \tilde{d}}{6}\bigg) \nonumber\\ & \hspace*{2cm}+ 2\IP \bigg( \max_{(b,c): I_1^L < b <c< I_1^R} \frac{S_{[b,c]}^\varepsilon}{(c-b) + |I_2| + |I_3|} \geq \frac{\rho \tilde{d}}{6} \bigg) \nonumber\\
    & \hspace*{2cm}+ \IP \bigg( \min_{f: I_1^R <f< I_2^L} \frac{S_{[I_1^R, f]}^\varepsilon}{(f- I_1^R) +|I_2| + |I_3|} \leq  -\frac{\rho \tilde{d}}{6} \bigg) \nonumber  \\
     := & \Gamma^1_1 + \Gamma^1_2 + \Gamma^1_3 +\Gamma^1_4 + \Gamma^1_5. \label{eq:oracle-case-1}
\end{align}
For $\Gamma^1_1$ and $\Gamma^1_2$, note that 
\begin{align}
    \IP\big(\frac{S_{I_k}^\varepsilon}{|I_k|} \geq \frac{\rho \tilde{d}}{6} \big) & \leq \inf_{\theta \geq 0}\exp(-6^{-1}\theta \rho \tilde{d} |I_k|) \IE[\exp(\theta S_{I_k}^\varepsilon)] \leq  \bigg(\inf_{\theta \geq 0} \exp(-6^{-1}\theta \rho \tilde{d}) \sup_{\IP}\IE_{1,\IP}[\exp(\theta \varepsilon)] \bigg)^{|I_k|}, \label{eq:oracle-case-1-gamma-1}
\end{align}
after which, in light of $|I_k| \geq I^\star \to \infty$, we can employ an argument similar to \eqref{eq:big-ub}-\eqref{eq:19} to conclude that \[ \Gamma^1_1 \wedge \Gamma^1_2 \to 0 \ \text{as } n \to \infty. \]
We briefly note that the final inequality in \eqref{eq:oracle-case-1-gamma-1} is derived via an argument verbatim to the successive projection-based analysis in \eqref{eq:inner-condn-exp-1}. In particular, let $\mathcal{F}_{j}=\sigma(\{(\omega_{s-1}, \zeta_s)\}: I_k^L+1 \leq s \leq j\} )$.
\begin{align}
    \IE[\exp(\theta S_{I_k}^\varepsilon)] &=  \IE[ \IE[\exp(\theta S_{I_k}^\varepsilon) \mid \mathcal{F}_{I_k^R-1}]] \nonumber\\
    &= \IE[ \exp(\theta S_{[I_k^L, I_k^R-1]}^\varepsilon) \IE[\exp(\theta \varepsilon_{I_k^R}) \mid \mathcal{F}_{I_k^R-1}] ] \nonumber.
\end{align}
On the other hand, for $\Gamma^1_3$, we proceed as follows. Following the arguments in \eqref{eq:big-ub}-\eqref{eq:19}, there exists $\theta_0 >0$, possibly depending on $\rho$, $\tilde{d}$, such that
\[ \kappa(\theta_0):= \inf_{\theta \ge 0}\exp(-6^{-1}\theta \rho \tilde{d}) \sup_{\IP} \IE_{1,\IP}[\exp(\theta \varepsilon)] \in (0,1). \]
Note that $\kappa(\theta_0)$ is independent of $n$, and consequently, independent of $I^\star$. Therefore,  
 \begin{align}
       \IP\bigg( \max_{a: I_1^L < a < I_1^R} \frac{S_{[I_1^L, a]}}{(a-I_1^L) + |I_2| + |I_3|} \geq  \frac{\rho \tilde{d}}{6}\bigg) & \leq \sum_{a=0}^{|I_1|} \IP\big(S_{[I_1^L, I_1^L +a]} \geq (a + |I_2| + |I_3|) \frac{\rho \tilde{d}}{6}\big) \nonumber\\
      & \leq \exp\big(-6^{-1}\theta_0 \rho \tilde{d} (|I_2| + |I_3|) \big)  \sum_{a=0}^{|I_1|} \bigg(\kappa(\theta_0) \bigg)^a \nonumber\\
      & \leq  \bigg(\exp\big(-6^{-1}\theta_0 \rho \tilde{d} \big)\bigg)^{|I_2| + |I_3|} \frac{1}{1- \kappa(\theta_0)} \nonumber\\
       & \to 0 , \ \text{as $n\to \infty$}, \label{eq:oracle-case-1-gamma-3}
 \end{align}
 where the final limiting assertion follows from $|I_2| + |I_3| \geq |I^\star|\to 0$ as $n\to \infty$. Observe that $\Gamma^1_5$ can also be dealt with similarly to $\Gamma^1_3$. Finally, for $\Gamma^1_4$, the same $\theta_0$ can be employed to deduce:
 \begin{align}
     & \ \ \ \ \ \IP \bigg( \max_{(b,c): I_1^L < b <c< I_1^R} \frac{S_{[b,c]}^\varepsilon}{(c-b) + |I_2| + |I_3|} \geq \frac{\rho \tilde{d}}{6} \bigg) \nonumber\\
     & \leq \sum_{j=1}^{|I_1|} |I_1|\IP \bigg(S_{[b, b+j]} \geq  \frac{\rho \tilde{d}}{6} (j + |I_2| + |I_3|) \bigg) \nonumber \\
     & \leq |I_1| \big(\exp(-6^{-1}\theta_0 \rho \tilde{d})\big)^{|I_2|+ |I_3|} \sum_{j=1}^{|I_1|}\kappa(\theta_0)^{j} \nonumber\\
     &\leq \frac{1}{1- \kappa(\theta_0)} \exp\big(\log n - 6^{-1}\theta_0 \rho \tilde{d} (|I_2| + |I_3|) \big) \to 0, \text{ as $n\to \infty$}, \label{eq:oracle-b-c}
 \end{align}
 where the last assertion is in light of $|I_2| + |I_3| \geq |I^\star| \gg \log n$.
 This completes the proof for the case $\mathcal P^{\varnothing}_{(1,1,1,1,1,1)}$.}

\clearpage
\textbf{\upd{Case 2: $\mathcal P^{\varnothing}_{(1,1,1,1,1,2)}$}}

\upd{In this case, $x_1^3 = d(I_1, I_2) = I_2^L - I_1^R$, $A_3^3 = I_3$ and $A_1^2$ is empty. The resulting two cases are shown in Figure \ref{fig:case-2-wm-oracle}; again, we show the analysis for Case (a) of Figure \ref{fig:case-2-wm-oracle}, and Case (b) will follow along verbatim. We proceed along similar lines to the previous case.}

\begin{figure}[htbp]
        \centering
        \includegraphics[width=0.9\linewidth]{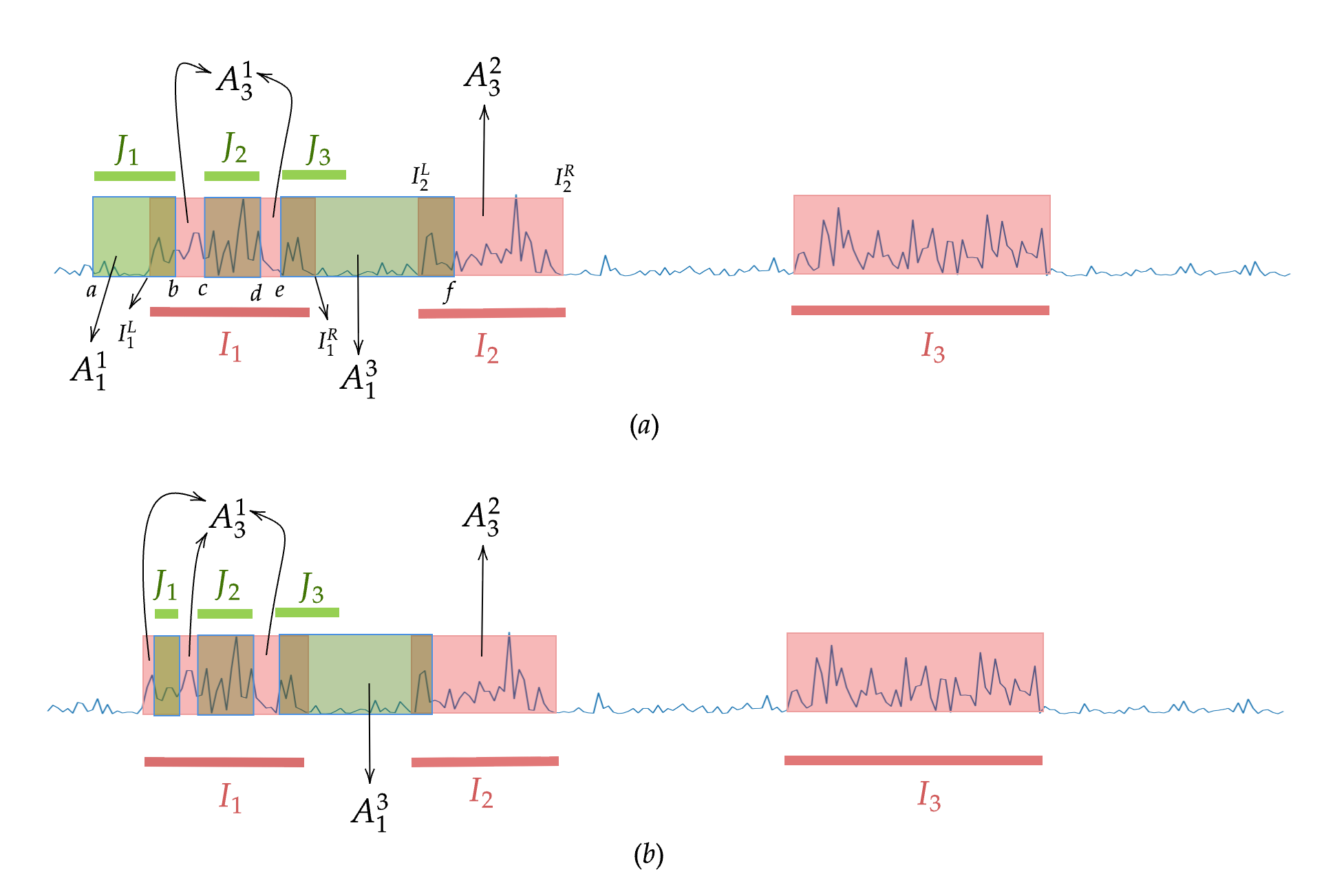}
        \caption{\upd{Representative cases for $P^{\varnothing}_{(1,1,1,1,1,2)}$ in the proof of Theorem \ref{thm:multiple-oracle}.}}
        \label{fig:case-2-wm-oracle}
\end{figure}
\allowdisplaybreaks
\upd{\begin{align}
    & \ \IP\bigg( \max_{J: \text{Case (a) of }P^{\varnothing}_{(1,1,1,1,1,2)}} \frac{\sum_{k=1}^K (S_3^{k,\varepsilon} - S_1^{k,\varepsilon}) }{\sum_{k=1}^K (x_1^k + x_3^k)} \geq \rho \tilde{d} \bigg) \nonumber\\
    \leq & \ \IP \bigg( \max_{(a,b,c,d,e,f): a < I_1^L < b < c < d< e < I_1^R < I_2^L <  f} \frac{ S_{[b,c]}^\varepsilon+ S_{[d,e]}^\varepsilon + S_{[f, I_2^R]}^\varepsilon + S_{I_3}^\varepsilon - S_{[I_1^R, I_2^L]}^\varepsilon- S_{[a, I_1^L]}^\varepsilon+ }{(I_1^L-a) + (c-b) + (e-d) + (I_2^L- I_1^R) +(I_2^R-f) + |I_3|} \geq \rho \tilde{d} \bigg) \nonumber\\
    \leq & \ \IP\big(-\frac{S_{[I_1^R, I_2^L]}^\varepsilon}{(I_2^L- I_1^R)} \geq \frac{\rho \tilde{d}}{6} \big) + \IP\big(\frac{S_{I_3}^\varepsilon}{|I_3|} \geq \frac{\rho \tilde{d}}{6} \big) + \IP\bigg( \max_{f: I_2^L < f < I_2^R} \frac{S_{[f, I_2^R]}}{(I_2^L- I_1^R) +(I_2^R-f) + |I_3|} \geq  \frac{\rho \tilde{d}}{6}\bigg) \nonumber\\ & \hspace*{2cm}+ 2\IP \bigg( \max_{(b,c): I_1^L < b <c< I_1^R} \frac{S_{[b,c]}^\varepsilon}{(c-b) + |I_2| + |I_3|} \geq \frac{\rho \tilde{d}}{6} \bigg) \nonumber\\
    & \hspace*{2cm}+ \IP \bigg( \min_{a: a < I_2^L} \frac{S_{[a, I_1^L]}^\varepsilon}{(I_1^L-a) +|I_2^L - I_1^R| + |I_3|} \leq  -\frac{\rho \tilde{d}}{6} \bigg) \nonumber  \\
    := & \Gamma^2_1 + \Gamma^2_2 + \Gamma^2_3 +\Gamma^2_4 + \Gamma^2_5 \nonumber. 
\end{align}
Clearly, $\Gamma^2_1 , \Gamma^2_2, \Gamma^2_3, \Gamma^2_5$ can be dealt with as in \eqref{eq:oracle-case-1-gamma-1} and \eqref{eq:oracle-case-1-gamma-3} respectively. In particular, for the term $\Gamma^2_1$ we use the fact that $I_2^L - I_1^R \geq I^\star \to \infty$ as $n \to \infty$. Finally, noting that $\Gamma^2_4= \Gamma^1_4$ completes the proof of this case.}

\textbf{\upd{Case 3: $\mathcal P^{\varnothing}_{(1,1,1,1,2,3)}$}}

\upd{In this case, there are total $16$ sub-cases: there are four interactions, between $(I_1, J_1)$, $(I_1, J_2)$, $(I_2, J_3)$ and $(I_3, J_3)$, and the interactions can be one among the types $(\mathcal{P}_1, \mathcal{P}_4)$, $(\mathcal{P}_1, \mathcal{P}_5)$, $(\mathcal{P}_2, \mathcal{P}_5)$ and $(\mathcal{P}_2, \mathcal{P}_4)$ respectively. Some illustrative examples of these cases can be found in Figure \ref{fig:case-3-wm-oracle}. Among these cases, we work with the case when the four interactions are of the type $(\mathcal{P}_1, \mathcal{P}_5, \mathcal{P}_2, \mathcal{P}_4)$ respectively (Case (c) in Figure \ref{fig:case-3-wm-oracle}).} 

\begin{figure}[h]
        \centering
        \includegraphics[width=0.9\linewidth]{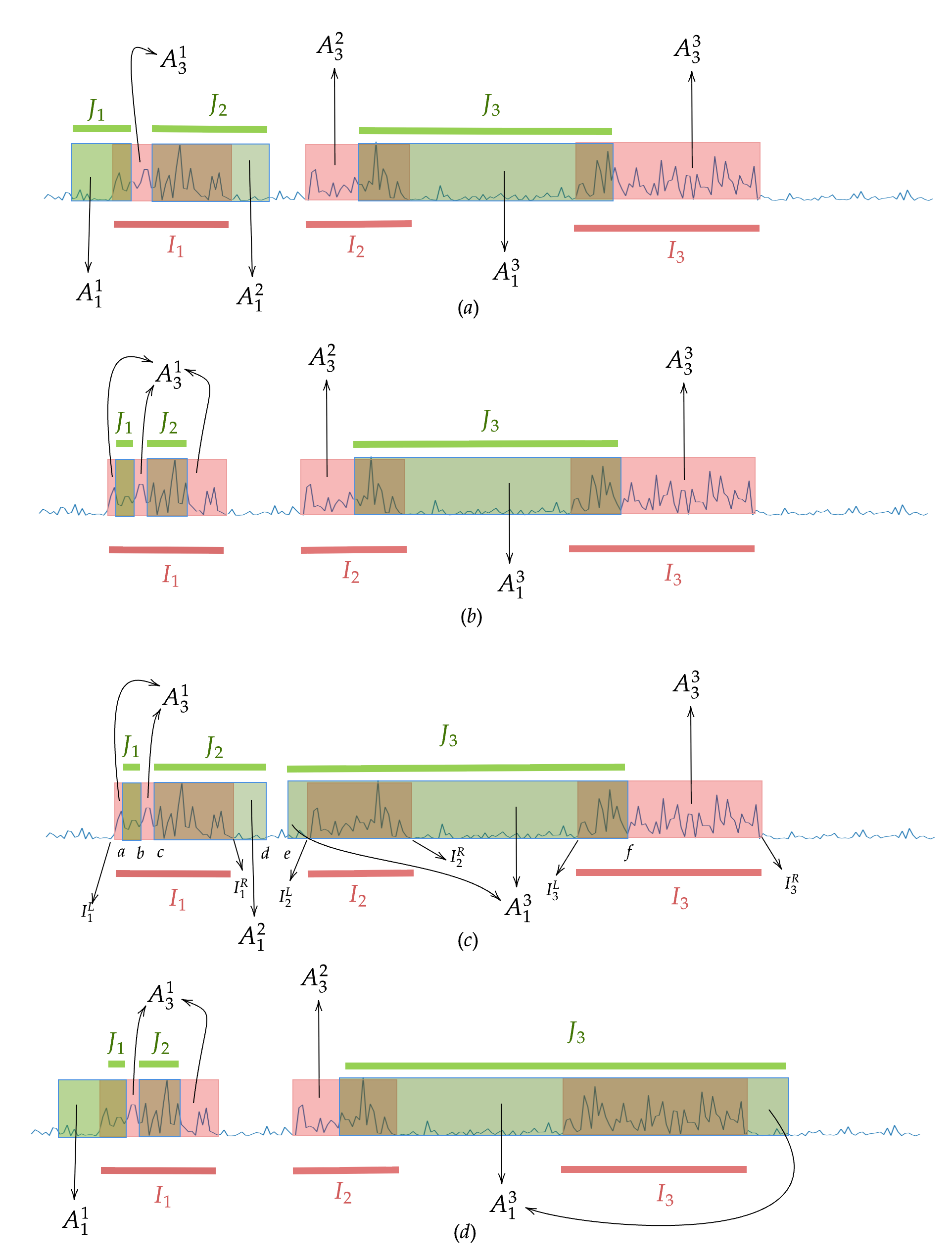}
        \caption{\upd{Some representative cases for $P^{\varnothing}_{(1,1,1,1,2,3)}$, used in the proof of Theorem \ref{thm:multiple-oracle}. For example, the interactions between $(I_1, J_1)$, $(I_1, J_2)$, $(I_2, J_3)$ and $(I_3, J_3)$, are of the types (a) $(\mathcal{P}_4, \mathcal{P}_5, \mathcal{P}_5, \mathcal{P}_4)$, (b) $(\mathcal{P}_1, \mathcal{P}_1, \mathcal{P}_5, \mathcal{P}_4)$, (c) $(\mathcal{P}_1, \mathcal{P}_5, \mathcal{P}_2, \mathcal{P}_4)$, and (d) $(\mathcal{P}_4, \mathcal{P}_1, \mathcal{P}_5, \mathcal{P}_2)$ respectively.}}
        \label{fig:case-3-wm-oracle}
\end{figure}

\upd{Consequently, we write:
\allowdisplaybreaks
\begin{align}
    & \ \IP\bigg( \max_{J: \text{Case (c) of Figure \ref{fig:case-3-wm-oracle}}} \frac{\sum_{k=1}^K (S_3^{k,\varepsilon} - S_1^{k,\varepsilon}) }{\sum_{k=1}^K (x_1^k + x_3^k)} \geq \rho \tilde{d} \bigg) \nonumber\\
    \leq & \ \IP \bigg( \max_{(a,b,c,d,e,f):  I_1^L< a < b < c< I_1^R < d< e < I_2^L < I_2^R < I_3^L< f < I_3^R} \nonumber\\ &\hspace*{3cm} \frac{ S_{[I_1^L,a]}^\varepsilon+ S_{[b,c]}^\varepsilon - S_{[I_1^R, d]}^\varepsilon - S_{[e, I_2^L]}^\varepsilon - S_{[I_2^R, I_3^L]}^\varepsilon +  S_{[f, I_3^R]}^\varepsilon}{(a- I_1^L) + (c-b) + (d-I_1^R) + (I_2^L -e)+ (I_3^L- I_2^R) +(I_3^R-f) } \geq \rho \tilde{d} \bigg) \nonumber \nonumber\\
    \leq & \ \IP\bigg( - \frac{S_{[I_2^R, I_3^L]}^\varepsilon}{I_3^L- I_2^R} \ge \frac{\rho \tilde{d}}{6}  \bigg) +  \IP \bigg( \max_{a: I_1^L <a< I_1^R}\frac{ S_{[I_1^L,a]}^\varepsilon}{(a- I_1^L) + (I_3^L- I_2^R)}  \ge \frac{\rho \tilde{d}}{6}  \bigg) \nonumber\\ & \hspace{1cm} +  \IP \bigg( \max_{f: I_3^L <a< I_3^R}\frac{ S_{[f, I_3^R]}^\varepsilon}{(I_3^R- f) + (I_3^L- I_2^R)}  \ge \frac{\rho \tilde{d}}{6}  \bigg) +\IP \bigg( \min_{d: I_1^R <a< I_2^L}\frac{ S_{[I_1^R, d]}^\varepsilon}{(d- I_1^R) + (I_3^L- I_2^R)}  \le -\frac{\rho \tilde{d}}{6}  \bigg) \nonumber \\ 
    & \hspace{1cm} + \IP \bigg( \min_{e: I_1^R <a< I_2^L}\frac{ S_{[e, I_2^L]}^\varepsilon}{(I_2^L- e) + (I_3^L- I_2^R)}  \le -\frac{\rho \tilde{d}}{6}  \bigg) + \IP \bigg( \min_{b,c: I_1^L <b< c< I_1^R}\frac{ S_{[b,c]}^\varepsilon}{(c-b) + (I_3^L- I_2^R)}  \le -\frac{\rho \tilde{d}}{6}  \bigg) \nonumber,
\end{align}
and all the terms of this decomposition can be dealt similar to the terms $\Gamma_1^1, \ldots, \Gamma_5^1$. In particular, for the term 
\[ \IP \bigg( \min_{b,c: I_1^L <b< c< I_1^R}\frac{ S_{[b,c]}^\varepsilon}{(c-b) + (I_3^L- I_2^R)}  \le -\frac{\rho \tilde{d}}{6}  \bigg), \]
a treatment similar to the $\Gamma_4^1$ term (analyzed in  \eqref{eq:oracle-b-c}) can be applied in light of  $(I_3^L- I_2^R) \ge I^\star \gg \log n$. Therefore, this term can also be controlled. }

\upd{Employing the same techniques, all the cases can be controlled, leading to 
\[ \IP\Big(\min_{J: x_2^{k \ell} >0 \text{ for some $k \ne \ell \in [K]$} } V_J - V_I \leq 0 \Big) \to 0, \text{ as $n\to \infty$}, \]
for the second term in \eqref{eq:good-cases-bad-cases}. As discussed before and similar to the treatment following \eqref{eq:choice-of-delta} , given $\delta>0$, the choice of $M = M_{\delta} / (\rho \tilde{d})$ for some $M_{\delta}$ ensures the first term remains $\delta>0$. This completes the proof.}

\subsection{Additional propositions}\label{se:proof-add}

Firstly, we provide a formal proof that the pivot statistics corresponding to unwatermarked tokens are i.i.d., a fundamental fact behind the construction and validity of our algorithm.

\begin{proof}[Proof of Lemma \ref{lemma:pivot-iid}]
    Let $t \in S$. Since $\omega_t$ and $\zeta_t$ are independent conditional on $\omega_{1:(t-1)}$, hence $\mathcal{L}(Y_t)| \omega_{1:t} \overset{d}{=} \mathcal{L}(Y)$. Hence, $\{Y_t\}_{t \in S}$ are identically distributed since given $\omega_{1:t}$, the distribution of $Y_t$ is solely a function of the key $\zeta_t$ that are i.i.d.. Moreover, for $s<t\in S$, even if there is a watermarked region $I_k \subset (s,t)$, $\zeta_s$ and $\omega_l$ are independent for all $l \in (s, t]$. In view of the fact that conditional on $\omega_{1:s}$, $Y_s$ and $Y_t$ are completely determined by $\zeta_s$ and $(w_{s+1:t}, \zeta_{s+1:t})$ respectively, we deduce that $Y_s$ and $Y_t$ are independent conditional on $\omega_{1:s}$. Hence, for two Borel sets $A$ and $B$,
    \allowdisplaybreaks \begin{align}
        \IP(Y_s\in A, Y_t \in B) =& \IE[ \IP(Y_s \in A | \omega_{1:s}) \IE[\IP(Y_t \in B | \omega_{1:t}) | \omega_{1:s} ]] \nonumber\\ 
        =& \IP(Y_s\in A)\IP( Y_t \in B) \ \text{(from Definition \ref{def:pivotal})} \nonumber.
    \end{align}
    This completes the proof.
\end{proof}
Next, we collect the additional results that we have used in our theoretical arguments. The proofs are provided subsequently.

\upd{\begin{proposition}\label{lem:eqdv-condn}
    Under the assumptions of Theorems \ref{thm:single-watermark} and \ref{thm:multiple-watermark}, there exists $\tau>0$ such that 
    \[ \kappa:= \inf_{\theta \geq 0} \theta (\mu_0+ \tau d) + \log \sup_{\IP} \IE_{1,\IP}[\exp(-\theta X)] <0 .\]
\end{proposition}}

\upd{\begin{remark}[Discussion on Proposition \ref{lem:eqdv-condn}] \label{rem:rqdv-condn}
We also briefly discuss the necessity of the rather technical  Proposition \ref{lem:eqdv-condn} in Theorem \ref{thm:multiple-watermark}. This result can be construed as a Donsker-Varadhan strengthened version of Assumption \ref{ass:alt-mean}. For an appropriate choice of the score function $h$ and some NTP distribution $P^\star$ depending on $\mathcal{P}$, the Donsker Varadhan representation \citep{donsker} entails
\[ \inf_{\theta \geq 0} \theta \mu_0 + \log \sup_{\IP\in \mathcal{P}} \IE_{1,\IP}[\exp(-\theta X)] = - D_{\operatorname{KL}}(\mathcal{L}_0(X), \mathcal{L}_{1, P^\star}(X) ), \]
where $D_{\operatorname{KL}}$ denotes the Kullback-Leibler divergence, $\mathcal{L}_0$ denotes the law of unwatermarked pivot statistic, and $\mathcal{L}_{1, P^\star}$ denotes the law of watermarked pivot statistic when the NTP is $P^\star$. In light of this, $\kappa$ lifts the minimum separation between the unwatermarked and watermarked distributions into a gap between the cumulant functions, and can therefore be understood to be mild. Proposition \ref{lem:eqdv-condn} establishes a weak uniform control over the behavior of pivot statistics under watermarked segments. This allows us to rigorously bypass the possibly arbitrary and strong dependence across the pivot statistics corresponding to watermarked tokens while deriving Theorem \ref{thm:multiple-watermark}.
\end{remark}}

\begin{proposition}\label{lem:gumbel-example}
    Let $h(x)=-\log(1-x)$, and suppose $\mathcal{P}_{\Delta}:=\{\max_{w \in \mathcal{W}} P_w \leq 1-\Delta\}$ for some fixed $\Delta>0$. Then it follows that 
    \allowdisplaybreaks \begin{align}
 \inf_{\IP \in \mathcal{P}_{\Delta}}\IE_{1,\IP}[h(Y)] \geq \sum_{n=1}^{\infty}\Big(\frac{1}{n} - \lfloor \frac{1}{1-\Delta} \rfloor \frac{(1-\Delta)^2}{1+n(1-\Delta)} - \frac{1-(1-\Delta)\lfloor \frac{1}{1-\Delta}\rfloor}{1+n(1-(1-\Delta)\lfloor \frac{1}{1-\Delta}\rfloor)}\Big). \label{eq:gumbel-lb}        
    \end{align}
\end{proposition}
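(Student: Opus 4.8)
\emph{Proof plan.} The plan is to obtain an explicit formula for $\IE_{1,P}[h(Y)]$, recognize it as a permutation-symmetric concave function of the entries of $P$, and minimize it over the polytope $\mathcal{P}_{\Delta}$ by evaluating at its vertices.

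\emph{Step 1: a closed form for the watermarked mean.} Put $E_w=-\log U_w$, so the $E_w$ are i.i.d.\ $\operatorname{Exp}(1)$ and $S^{\mathrm{gum}}(\zeta,P)=\argmax_w (\log U_w)/P_w=\argmin_w E_w/P_w$. By the standard exponential-race (Gumbel--max) identities, the selected token $\omega$ is $\mathrm{Categorical}(P)$, the winning value $M:=\min_w E_w/P_w$ is $\operatorname{Exp}(1)$ and independent of $\omega$, and $E_\omega=P_\omega M$. Hence $Y=U_\omega=e^{-P_\omega M}$ and, writing $\phi(p):=\IE[-\log(1-e^{-pM})]$ for $M\sim\operatorname{Exp}(1)$,
\[
\IE_{1,P}[h(Y)]=\sum_{w\in\mathcal{W}} P_w\,\phi(P_w).
\]
Expanding $-\log(1-x)=\sum_{n\ge1}x^n/n$ and interchanging sum and integral by Tonelli (all terms are $\ge0$) gives $\phi(p)=\sum_{n\ge1}\frac1n\int_0^\infty e^{-(np+1)m}\,dm=\sum_{n\ge1}\frac1{n(np+1)}$, so $p\,\phi(p)=\sum_{n\ge1}\frac{p}{n(np+1)}$.

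\emph{Step 2: concavity and reduction to a vertex.} Each $p\mapsto\frac{p}{np+1}=\frac1n\bigl(1-\frac1{np+1}\bigr)$ is nonnegative, bounded by $1/n$, and concave on $[0,\infty)$, and $\sup_{p\ge0}\frac{p}{n(np+1)}\le \frac1{n^2}$; hence by the Weierstrass $M$-test $f(p):=\sum_{n\ge1}\frac{p}{n(np+1)}$ is a finite, continuous, concave function on $[0,\infty)$ with $f(0)=0$. Consequently $F(P):=\IE_{1,P}[h(Y)]=\sum_w f(P_w)$ is continuous, concave and permutation-symmetric on the probability simplex. Since $\mathcal{P}_{\Delta}=\{P:\ P\ge0,\ \sum_w P_w=1,\ \max_w P_w\le1-\Delta\}$ is a compact polytope, writing any $P\in\mathcal{P}_{\Delta}$ as a convex combination of vertices and using concavity of $F$ shows that $\inf_{\mathcal{P}_{\Delta}}F$ is attained at a vertex of $\mathcal{P}_{\Delta}$.

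\emph{Step 3: the extremal distribution and the final bound.} A vertex of $\mathcal{P}_{\Delta}$ has at most one coordinate strictly inside $(0,1-\Delta)$, and the mass constraint then forces exactly $k=\lfloor\frac1{1-\Delta}\rfloor$ coordinates equal to $1-\Delta$, one coordinate equal to $v:=1-k(1-\Delta)\in[0,1-\Delta)$, and the rest $0$ (this needs $|\mathcal{W}|\ge k+1$, which holds for vocabularies of practical size). By symmetry of $F$ all vertices give the same value, so
\[
\inf_{P\in\mathcal{P}_{\Delta}}\IE_{1,P}[h(Y)]=k\,f(1-\Delta)+f(v)=\sum_{n\ge1}\Bigl(\frac{k(1-\Delta)}{n(n(1-\Delta)+1)}+\frac{v}{n(nv+1)}\Bigr).
\]
Applying the elementary identity $\frac{p}{n(np+1)}=\frac pn-\frac{p^2}{np+1}$ term by term and using $k(1-\Delta)+v=1$ collapses this to $\sum_{n\ge1}\bigl(\frac1n-\frac{k(1-\Delta)^2}{1+n(1-\Delta)}-\frac{v^2}{1+nv}\bigr)$; since $0\le v<1$ forces $v^2\le v$, this is $\ge\sum_{n\ge1}\bigl(\frac1n-\frac{k(1-\Delta)^2}{1+n(1-\Delta)}-\frac{v}{1+nv}\bigr)$, which is the claimed lower bound with $v=1-(1-\Delta)\lfloor\frac1{1-\Delta}\rfloor$ (and in particular, for $\Delta=\tfrac12$ one has $k=2$, $v=0$, and the bound telescopes to $\sum_{n\ge1}(\frac1n-\frac1{n+2})=\tfrac32$, recovering $d\ge\tfrac12$).

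\emph{Main obstacle.} The exponential-race representation of Step 1 is classical and the manipulations of Step 3 are routine once the vertex is identified; the genuine care is in Step 2: justifying the uniform convergence that makes $F$ finite, continuous and concave, dealing with the boundary case $v=0$ where $\phi$ itself diverges but $p\mapsto p\phi(p)$ extends continuously by $0$, and correctly enumerating the extreme points of $\mathcal{P}_{\Delta}$ (including the degenerate case in which $1/(1-\Delta)$ is an integer).
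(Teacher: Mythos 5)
Your proposal is correct and follows essentially the same route as the paper: both reduce $\IE_{1,P}[h(Y)]$ to the series $\sum_{w}\sum_{n\ge 1}\frac{1}{n(n+1/P_w)}$ and then use a curvature argument (your concavity of $p\mapsto\sum_n\frac{p}{n(np+1)}$ is, after splitting off the linear term, exactly the paper's convexity of $\sum_w\frac{P_w^2}{nP_w+1}$) to push the optimum to the vertex $P^\star_\Delta$ of $\mathcal{P}_\Delta$. The only cosmetic differences are that you derive the closed form from the exponential-race representation rather than citing Lemma~3.1 of \cite{li2025statistical}, and you make explicit the final step $v^2\le v$ relating the exact vertex value to the stated bound, which the paper leaves implicit.
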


\begin{proposition}\label{prop:corollary-to-Theorem-3}
    Consider $\tilde{d}$ and $\Psi(\cdot)$ from Theorem \ref{thm:single-watermark-app}. If there exists a constant $c>0$ such that ${d}\geq c$, then 
    \allowdisplaybreaks \begin{align}
        (\sup_{\theta\geq 0} \{\theta \rho \Tilde{d}- \Psi(\theta)\})^{-1} = O((\rho \tilde{d})^{-1}) \label{eq: O(1) bound}.
    \end{align}
    Recall $\varepsilon$ from Theorem \ref{thm:single-watermark}. Suppose we additionally have that $$\max\{ \IE_0[\exp(r|\varepsilon|)] ,\sup_{\IP\in \mathcal{P}}\IE_{1,\IP}[\exp(r|\varepsilon|)] \} \leq \exp(r^2/2) \ \text{for all $r\in [0, \eta]$,}$$  $\eta$ being the same as in Theorem \ref{thm:single-watermark}. Then, choosing $\rho>0$ such that $\rho\tilde{d}< \frac{5}{2}\eta$, it holds that 
    \allowdisplaybreaks \begin{align}
        (\sup_{\theta\geq 0} \{\theta \rho \Tilde{d}- \Psi(\theta)\})^{-1} = O((\rho \tilde{d})^{-2}) \label{eq: dsq bound}.
    \end{align}
\end{proposition}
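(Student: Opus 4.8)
The plan is to bypass solving the optimization $\mathcal{J}(\tilde d):=\sup_{\theta\ge 0}\{\theta\rho\tilde d-\Psi(\theta)\}$ and instead to lower-bound it by evaluating the bracket at one explicitly chosen $\theta$. Two facts about $\Psi$, both recorded inside the proof of Theorem~\ref{thm:single-watermark-app}, drive everything: $\Psi(0)=0$, and $\Psi$ is finite on $[0,\eta/2]$ because each term $\log\sup_{P}\IE_{1,P}[\exp(\pm 2\theta\varepsilon)]$ is finite once $2\theta\le\eta$, using $\exp(\pm 2\theta\varepsilon)\le\exp(\eta|\varepsilon|)$ together with the moment-generating-function hypothesis $\sup_{P}\IE_{1,P}[\exp(\eta|\varepsilon|)]<\infty$ of Theorem~\ref{thm:single-watermark} (and the analogous null bound). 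Note also that $\Psi\ge 0$ on $[0,\eta/2]$ by Jensen, although this is not strictly needed.

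For the first assertion I would simply take $\theta_0=\eta/2$ and put $C:=\Psi(\eta/2)<\infty$, so that $\mathcal{J}(\tilde d)\ge \theta_0\rho\tilde d-C=\tfrac{\eta\rho}{2}\tilde d-C$. Since by hypothesis $\tilde d\ge c$, it suffices to take the constant $c$ large enough --- e.g.\ $c\ge 4C/(\eta\rho)$ --- to force $\mathcal{J}(\tilde d)\ge\tfrac{\eta\rho}{4}\tilde d$, whence $\mathcal{J}(\tilde d)^{-1}\le \tfrac{4}{\eta\rho}\,\tilde d^{-1}=O(\tilde d^{-1})$, with the hidden constant depending only on $\eta$, $\rho$ and the law of $\varepsilon$, hence not on $n$ or $\tilde d$.

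For the second assertion, the sub-Gaussian hypothesis turns the crude finiteness of $\Psi(\eta/2)$ into a quadratic control: applying $\IE[\exp(s\varepsilon)]\le\IE[\exp(|s|\,|\varepsilon|)]\le\exp(s^2/2)$ at $s=\theta$ and $s=\pm 2\theta$ to the three defining terms of $\Psi$ gives $\Psi(\theta)\le\tfrac{\theta^2}{2}+\tfrac12(2\theta^2)+\tfrac12(2\theta^2)=\tfrac52\theta^2$ for all $\theta\in[0,\eta/2]$. Hence $\theta\rho\tilde d-\Psi(\theta)\ge\theta\rho\tilde d-\tfrac52\theta^2$ on $[0,\eta/2]$; this downward parabola is maximized at $\theta^\star=\rho\tilde d/5$ with value $(\rho\tilde d)^2/10$. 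The hypothesis $\rho\tilde d<\tfrac52\eta$ is there precisely so that $\theta^\star<\eta/2$, i.e.\ so that the parabola's maximizer lies in the range where the quadratic bound on $\Psi$ is valid --- this admissibility check is the one genuinely delicate point, and where I expect any trouble to be; the rest is elementary calculus. It follows that $\mathcal{J}(\tilde d)\ge(\rho\tilde d)^2/10$, so $\mathcal{J}(\tilde d)^{-1}\le 10\rho^{-2}\tilde d^{-2}=O(\tilde d^{-2})$. Feeding the two bounds on $\mathcal{J}(\tilde d)$ back into Theorem~\ref{thm:single-watermark-app} converts them into the stated rates $|\hat I\,\Delta\,I_0|=O_{\IP}(\tilde d^{-1})$ and $O_{\IP}(\tilde d^{-2})$.
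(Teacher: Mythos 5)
Your proof of the second bound (\ref{eq: dsq bound}) is correct and is essentially the paper's own argument: the sub-Gaussian hypothesis gives $\Psi(\theta)\le \tfrac52\theta^2$ on $[0,\eta/2]$, the condition $\rho\tilde d<\tfrac52\eta$ places the unconstrained maximizer $\theta^\star=\rho\tilde d/5$ of the resulting parabola inside $[0,\eta/2]$, and one reads off $\mathcal{J}(\tilde d)\ge(\rho\tilde d)^2/10$. No issues there.

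The first bound (\ref{eq: O(1) bound}) is where you have a genuine gap. Your lower bound $\mathcal{J}(\tilde d)\ge\tfrac{\eta\rho}{2}\tilde d-\Psi(\eta/2)$ is vacuous (indeed negative) whenever $\tilde d<2\Psi(\eta/2)/(\eta\rho)$, and you cannot repair this by ``taking the constant $c$ large enough'': $c$ is supplied by the hypothesis, not chosen by you, and enlarging it strengthens the hypothesis. As written, your argument proves the claim only for sequences with $\tilde d\ge 4\Psi(\eta/2)/(\eta\rho)$, not for all $\tilde d\ge c$ with an arbitrary $c>0$. The ingredient you are missing is the one the paper leans on: $\Psi'_+(0)=0$. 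Since $\IE_0[\varepsilon]=\IE_{1,P}[\varepsilon]=0$ for every $P$, each cumulant term in $\Psi$ has vanishing right derivative at $\theta=0$; the only subtlety is differentiating through the $\sup_P$, which the paper handles with Danskin's theorem exactly as in display (\ref{eq:fixed-P}) of the proof of Theorem \ref{thm:single-watermark-app}, using compactness of $\mathcal{P}$ and the uniform exponential moment bound. Once $\Psi(\theta)=o(\theta)$ as $\theta\downarrow 0$ is in hand, one picks $\theta_0>0$ depending only on $\rho c$ so that $|\Psi(\theta)|\le\theta\rho c/2\le\theta\rho\tilde d/2$ on $(0,\theta_0)$, whence $\mathcal{J}(\tilde d)\ge\theta_0\rho\tilde d/2$ for \emph{every} $\tilde d\ge c$, which is the claimed $O(\tilde d^{-1})$. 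Note that mere convexity of $\Psi$ together with $\Psi(0)=0$ and $\Psi(\eta/2)<\infty$ only yields the linear bound $\Psi(\theta)\le(2\Psi(\eta/2)/\eta)\,\theta$, whose slope again need not be beaten by $\rho\tilde d$ when $\tilde d$ is small; so the derivative computation at $0$ cannot be avoided.
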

% VERIFIED by Subh

\begin{proposition} \label{prop:1}
    Let $\IE_0[|X-\mu_0|^{p}]<\infty$ for some $p\geq 2$. Let
    $\mathcal{Q}=\mathcal{Q}_n$ and $b=b_n$ be selected as in
    Theorem~\ref{thm:multiple-watermark}. Then it follows that
    $\mathcal{Q}/b \to \mu_0$ as $n\to \infty$.
\end{proposition}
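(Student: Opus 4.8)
The plan is to sandwich the quantile $\mathcal{Q}$ between $b(\mu_0-\varepsilon)$ and $b(\mu_0+\varepsilon)$ for an arbitrary fixed $\varepsilon>0$ and all large $n$, which at once yields $\mathcal{Q}/b\to\mu_0$. The starting point is Lemma~\ref{lemma:pivot-iid}: under $H_0$ the $X_1,\dots,X_n$ are i.i.d.\ with mean $\mu_0$ and, because $\IE_0[|X-\mu_0|^{3+\delta}]<\infty$, finite variance $\sigma_0^2:=\Var_0(X)$; hence the $m:=\lceil n/b\rceil\asymp\sqrt n$ block sums $S_1,\dots,S_m$ are i.i.d., each a sum of $b\asymp\sqrt n$ i.i.d.\ terms. (I take $b\mid n$, as in the blocking stage of \fancyname, so that every block has size exactly $b$; the general case only adds bookkeeping for the last, shorter block.)

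The one quantitative ingredient needed is a tail bound for a single block sum. Applying Rosenthal's inequality (or Marcinkiewicz--Zygmund) to the i.i.d.\ mean-zero summands $X_i-\mu_0$ with exponent $p=3+\delta\ge3$ gives
\[
\IE_0\big|S_1-b\mu_0\big|^{3+\delta}\ \le\ C_p\Big(b\,\IE_0\big[|X-\mu_0|^{3+\delta}\big]+\big(b\sigma_0^2\big)^{(3+\delta)/2}\Big)\ =\ O\big(b^{(3+\delta)/2}\big),
\]
the second term dominating; Markov's inequality then yields, for the fixed $\varepsilon>0$,
\[
q_n\ :=\ \IP_0\big(|S_1-b\mu_0|\ge b\varepsilon\big)\ =\ O\big(b^{-(3+\delta)/2}\big).
\]
This is the only place where more than two moments matter: the variance alone would give only $q_n=O(b^{-1})$, which is too weak after the union bound below, whereas $m q_n=(n/b)\,O(b^{-(3+\delta)/2})=O\big(n^{-(1+\delta)/4}\big)\to0$ once $b\asymp\sqrt n$.

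With $q_n$ in hand the two bounds on $\mathcal{Q}$ are immediate. For the \emph{upper} bound, a union bound gives $\IP_0\big(\max_{k\le m}S_k>b(\mu_0+\varepsilon)\big)\le m q_n\to0$, so this probability is eventually below $\alpha$; since $x\mapsto\IP_0(\max_{k\le m}S_k>x)$ is non-increasing and equals $\alpha$ at $x=\mathcal{Q}$, it follows that $\mathcal{Q}\le b(\mu_0+\varepsilon)$ for all large $n$. For the \emph{lower} bound, independence across blocks gives
\[
\IP_0\big(\max_{k\le m}S_k\le b(\mu_0-\varepsilon)\big)=\big[\IP_0\big(S_1\le b(\mu_0-\varepsilon)\big)\big]^m\le q_n^m\le q_n\to0,
\]
so $\IP_0\big(\max_{k\le m}S_k>b(\mu_0-\varepsilon)\big)\to1>\alpha$, and the same monotonicity argument forces $\mathcal{Q}>b(\mu_0-\varepsilon)$ eventually. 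Combining, $\mu_0-\varepsilon<\mathcal{Q}/b\le\mu_0+\varepsilon$ for all large $n$; since $\varepsilon>0$ was arbitrary, $\mathcal{Q}/b\to\mu_0$. I do not expect a real obstacle here beyond choosing the moment exponent so that the per-block tail outpaces the number $m\asymp\sqrt n$ of competing blocks in the union bound — which is exactly what the $(3+\delta)$-th moment assumption delivers given $b\asymp\sqrt n$.
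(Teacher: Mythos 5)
Your proof is correct, and its first half is essentially the paper's: you establish that each block sum concentrates at $b\mu_0$ with a per-block tail of order $b^{-(3+\delta)/2}$, small enough to survive the union bound over $m\asymp\sqrt n$ blocks. You get this from Rosenthal plus Markov, while the paper invokes the Fuk--Nagaev inequality; the two deliver the same rate and your remark that two moments alone would only give $q_n=O(b^{-1})$ (hence $mq_n=O(1)$) correctly identifies why the $(3+\delta)$-th moment is the real hypothesis. Where you genuinely diverge is the second half, the passage from concentration of $\max_k S_k$ to convergence of the quantile $\mathcal{Q}$. The paper argues by contradiction: it extracts a subsequence along which $\mathcal{Q}_{n_k}/b_{n_k}>\mu_0+\gamma$, upgrades the in-probability convergence to almost-sure convergence along a further subsequence, and applies dominated convergence to force $\alpha=0$. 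You instead sandwich $\mathcal{Q}$ directly, using only that $x\mapsto\IP_0(\max_k S_k>x)$ is non-increasing and equals $\alpha$ at $x=\mathcal{Q}$: once $\IP_0(\max_k S_k>b(\mu_0+\varepsilon))<\alpha$ and $\IP_0(\max_k S_k>b(\mu_0-\varepsilon))>\alpha$ for large $n$, monotonicity pins $\mathcal{Q}/b$ into $(\mu_0-\varepsilon,\mu_0+\varepsilon]$. This is cleaner and avoids the subsequence/dominated-convergence machinery entirely; it also makes transparent that only the exact-level property $\IP_0(\max_k S_k>\mathcal{Q})=\alpha$ (guaranteed by the absolute-continuity assumption in Theorem \ref{thm:multiple-watermark}) is needed. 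Your side remarks — taking $b\mid n$ so all blocks have equal length, and using independence of blocks for the lower bound — are harmless and consistent with the blocking stage of \fancyname.
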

\color{black}

\begin{proposition}\label{prop:2}
    Let $X_i$ be i.i.d. with mean $\mu_0$, and let $B_k$ and $S_k$ be defined as in Steps $2$ and $3$ of \texttt{WISER}\ in Figure \ref{fig:algo}. Then it follows that 
    \[\IP_0\left( \sum_{k=1}^{\lceil n/b \rceil} I\{S_k > \mathcal{Q}\} \geq C_0 \sqrt{\log n} \right) \to 0, \ \text{as $n\to \infty$}, \]
    where $\mathcal{Q}$ is defined as in Theorem \ref{thm:multiple-watermark}.
\end{proposition}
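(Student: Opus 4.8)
The plan is to exploit that, under $H_0$, the $X_i$ are i.i.d., so the block sums over the disjoint blocks are themselves i.i.d.; from the defining level-$\alpha$ property of $\mathcal{Q}$ one gets that the \emph{expected} number of exceedances of $\mathcal{Q}$ is bounded, and then a one-line Markov bound against the diverging threshold $C_0\sqrt{\log n}$ finishes the proof.

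First I would fix notation: let $N:=\lceil n/b\rceil \asymp \sqrt{n}$ be the number of blocks and let $N'\in\{N-1,N\}$ be the number of \emph{full} blocks (of size exactly $b$), so $N'\asymp\sqrt n$. Since the $X_i$ are i.i.d. and the blocks $B_k$ are pairwise disjoint intervals of indices, the full-block sums $\{S_k:|B_k|=b\}$ are i.i.d.; write $p_n:=\IP_0(S_1>\mathcal{Q})$ for their common exceedance probability. Second, I extract a bound on $N'p_n$ from the construction of $\mathcal{Q}$: the event $\{\max_{1\le k\le N}S_k>\mathcal{Q}\}$ contains the event that some full block exceeds $\mathcal{Q}$, so
\[
\alpha \;=\; \IP_0\!\Big(\max_{1\le k\le N}S_k>\mathcal{Q}\Big)\;\ge\;1-(1-p_n)^{N'},
\]
hence $(1-p_n)^{N'}\ge 1-\alpha>0$. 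Taking logarithms and using $-\log(1-x)\ge x$ on $[0,1)$ gives
\[
N'p_n\;\le\;-N'\log(1-p_n)\;=\;-\log\!\big((1-p_n)^{N'}\big)\;\le\;\log\tfrac{1}{1-\alpha},
\]
so $N'p_n=O(1)$ (and, incidentally, $p_n\to 0$ since $N'\to\infty$).

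Third, set $T_n:=\sum_{k=1}^{N}I\{S_k>\mathcal{Q}\}$. Splitting off the possibly shorter last block, $T_n\le 1+\sum_{k:\,|B_k|=b}I\{S_k>\mathcal{Q}\}$, so by linearity of expectation $\IE_0[T_n]\le 1+N'p_n\le 1+\log\tfrac{1}{1-\alpha}$, a constant independent of $n$. Markov's inequality then yields
\[
\IP_0\big(T_n\ge C_0\sqrt{\log n}\big)\;\le\;\frac{\IE_0[T_n]}{C_0\sqrt{\log n}}\;\le\;\frac{1+\log\tfrac{1}{1-\alpha}}{C_0\sqrt{\log n}}\;\longrightarrow\;0
\]
as $n\to\infty$, which is exactly the assertion.

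There is no real obstacle in this argument; the only point needing care is the bookkeeping around the last (short) block, which contributes at most one exceedance and is therefore harmless. I note that the proof uses only the i.i.d. structure of the full-block sums and the defining property $\IP_0(\max_k S_k>\mathcal{Q})=\alpha$ (an inequality $\le\alpha$ would suffice), so it requires neither Proposition \ref{prop:1} nor any moment assumption beyond what is implicit in $X_i$ being a pivot statistic.
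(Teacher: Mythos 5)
Your proof is correct. The first half is identical to the paper's: you extract $N'p_n=O(1)$ from the calibration $\IP_0(\max_k S_k>\mathcal{Q})=\alpha$ via $(1-p_n)^{N'}\ge 1-\alpha$, exactly as the paper does (the paper writes this as $\alpha\ge 1-\exp(-c\sqrt{n}\,p_n)$, hence $\sqrt{n}\,p_n=O(1)$). Where you diverge is the concluding step: the paper observes that the exceedance count is $\operatorname{Bin}(\lceil n/b\rceil,p_n)$ and applies a Chernoff bound, obtaining a tail of order $\exp\bigl(-c\sqrt{\log n}\,\log\log n\bigr)$, i.e.\ super-polynomially small; you instead bound $\IE_0[T_n]$ by a constant and apply Markov's inequality, which gives only an $O(1/\sqrt{\log n})$ bound. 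Since the proposition asserts nothing more than convergence to zero, and it is invoked in Theorem \ref{thm:multiple-watermark} only over a bounded number of events, your weaker rate is entirely sufficient; the Chernoff route would matter only if one needed to union-bound over polynomially many such events. Two minor points in your favor: you handle the possibly shorter last block explicitly (it contributes at most one exceedance and is independent of the rest), whereas the paper implicitly treats all $\lceil n/b\rceil$ block sums as identically distributed; and you correctly note that only the inequality $\IP_0(\max_k S_k>\mathcal{Q})\le\alpha$ is needed, not equality.
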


\upd{\begin{proof}[Proof of Proposition \ref{lem:eqdv-condn}]
For a generic pivot statistics $X$ with the corresponding next token distribution $P$, let $\mu_P= \E_{1,P}[X]$ and $\varepsilon =  X - \mu_P$. Note that, 
\begin{align}
\IE_{1,\IP}[\exp(-\theta X)] &= \exp(-\theta \mu_P) \IE_{1,\IP}[\exp(-\theta \varepsilon)] \nonumber\\
&\overset{(a)}{\leq} \big(1+ \E_{1,P}[\theta^2 \varepsilon^2 \exp(\theta |\varepsilon|) /2 ] \big) \exp(-\theta \mu_P) \nonumber\\ 
&\overset{(b)}{\leq}\exp\big(-\theta \mu_P + \theta^2 \E_{1,P}[\varepsilon^2 \exp(\theta |\varepsilon|) /2 ]\big) \label{eq:dv-1}
\end{align}
where in (a) we use the elementary bound $\exp(x) \leq 1 + x + x^2 \exp(|x|)/2$ for all $x \in \R$ along with $\E_{1,P}[\varepsilon]=0$ for all $\IP \in \mathcal{P}$; (b) uses $1+x \leq \exp(x)$.  Recall $\eta>0$ from the statement of Theorem \ref{thm:single-watermark}. For $0 \leq x < \eta/2$, routine algebraic manipulations yield that for any $w>0$,  $$w^2 \exp(xw) \leq \exp(w \eta ) \sup_{w\geq 0} w^2 \exp(-\eta w/2) \leq \frac{16}{e^2 \eta^2} \exp(w \eta ) . $$
Consequently, \eqref{eq:dv-1} along with Assumption \ref{ass:alt-mean} entails for all $\theta \in [0, \eta/2)$ that,
\begin{align}
    \sup_{\IP\in \mathcal{P}}\IE_{1,\IP}[\exp(-\theta X)] \leq \exp\bigg(-\theta (\mu_0 +d) + 
\frac{8\theta^2}{e^2 \eta^2} \sup_{\IP\in \mathcal{P}} \E_{1,P}[\exp(\eta|\varepsilon|)]\bigg). \label{eq:dv-2}
\end{align} 
Clearly, for all $\theta \in [0, \eta/2)$, \eqref{eq:dv-2} implies that
\begin{align}
    \theta (\mu_0+ \tau d) + \log \sup_{\IP} \IE_{1,\IP}[\exp(-\theta X)] \leq - \theta (1-\tau)d + \frac{8\theta^2}{e^2 \eta^2} \sup_{\IP\in \mathcal{P}} \E_{1,P}[\exp(\eta|\varepsilon|)], \label{eq:dv-3}
\end{align}
which, in turn, produces that with $M = \min\{ \eta/2, (1-\tau)d e^2 \eta^2 \sup_{\IP\in \mathcal{P}} \E_{1,P}[\exp(\eta|\varepsilon|)] /8 \},$ it holds for all $\theta \in (0, M)$ that 
\begin{align}
     - \theta (1-\tau)d + \frac{8\theta^2}{e^2 \eta^2} \sup_{\IP\in \mathcal{P}} \E_{1,P}[\exp(\eta|\varepsilon|)] < 0. \nonumber
\end{align}
This completes the proof in light of \eqref{eq:dv-3} and 
\[ \inf_{\theta \geq 0} \theta (\mu_0+ \tau d) + \log \sup_{\IP} \IE_{1,\IP}[\exp(-\theta X)] \leq \inf_{0 < \theta <M} - \theta (1-\tau)d + \frac{8\theta^2}{e^2 \eta^2} \sup_{\IP\in \mathcal{P}} \E_{1,P}[\exp(\eta|\varepsilon|)].\]
\end{proof}}

\begin{proof}[Proof of Proposition \ref{lem:gumbel-example}]
     From Lemma 3.1 of \cite{li2025statistical}, it follows
\allowdisplaybreaks \begin{align}
    \IE_{1,\IP}[h(X)] &=\sum_{w=1}^{|\mathcal{W}|} \int_0^1 x^{1/P_w-1}\bigl(-\log(1-x)\bigr)\,\text{d}x \nonumber\\
    &= \sum_{w=1}^{|\mathcal{W}|} \sum_{n=1}^{\infty}\int_0^1 \frac{x^{1/P_w-1+n}}{n} \,\text{d}x  \nonumber\\
    &= \sum_{w=1}^{|\mathcal{W}|} \sum_{n=0}^{\infty} \frac{1}{n(n+1/P_w)}\nonumber\\
    &= \sum_{n=1}^{\infty}(\frac{1}{n} - \sum_{w=1}^{|\mathcal{W}|} \frac{P_w}{n+1/P_w})\nonumber\\
    &\geq \sum_{n=1}^{\infty}\left(\frac{1}{n} - \lfloor \frac{1}{1-\Delta} \rfloor \frac{(1-\Delta)^2}{1+n(1-\Delta)} - \frac{1-(1-\Delta)\lfloor \frac{1}{1-\Delta}\rfloor}{1+n(1-(1-\Delta)\lfloor \frac{1}{1-\Delta}\rfloor)}\right),
\end{align}
where the final inequality follows from noting the convexity of $g: x \mapsto \sum_{i=1}^d\frac{x_i}{n+1/x_i}$, $\sum_{i=1}^d x_i=1$, and noting that the optimum value of $g$ on the set $\mathcal{P}_{\Delta}$ occurs at the extrema defined by 
\[
P^\star_\Delta = 
\Big(
\underbrace{\,1-\Delta, \ldots, 1-\Delta\,}_{\lfloor \tfrac{1}{1-\Delta}\rfloor \text{ times}},
\,1-(1-\Delta)\cdot \lfloor \tfrac{1}{1-\Delta}\rfloor,\,
0,\ldots
\Big).
\]
\end{proof}

\begin{proof}[Proof of Proposition \ref{prop:corollary-to-Theorem-3}]
    Denote $\Lambda(x):=\sup_{\theta\geq 0} \{\theta \rho x- \Psi(\theta)\}$. Note that, an argument same as (\ref{eq:fixed-P}) shows that $\Psi^{'}_+(0)=0$, where $\Psi^{'}_+(\cdot)$ denote the right derivative. Therefore, in light of ${d}\geq c$ for some constant $c>0$, there exists $\theta_0>0$ such that 
    $\frac{|\Psi(\theta)|}{\theta} \leq \frac{\rho \tilde{d}}{2}$ for all $\theta\in(0,\theta_0)$. Therefore, 
    \[\Lambda(\tilde{d}) \geq 2^{-1}\theta_0\rho \tilde{d} - 4^{-1}\theta_0 \rho c \geq 4^{-1} \theta_0 \rho \tilde{d} ,\]
    which immediately implies (\ref{eq: O(1) bound}). Moving on, we work with the additional assumption that $\sup_{\IP\in \mathcal{P}}\IE_{1,\IP}[\exp(r|\varepsilon|)] \leq \exp(r^2/2)$. This immediately implies that for all $\theta \in [0, 
    \frac{\eta}{2}]$, \[\max\{ \log \sup_{\IP} \IE_{1,\IP}[\exp(2\theta \varepsilon)], \log \sup_{\IP} \IE_{1,\IP}[\exp(-2\theta \varepsilon)] \} \leq 2 \theta^2. \] Therefore, for all $\theta \in [0,  \frac{\eta}{2}]$ it must hold that
    \[ \Psi(\theta) \leq \frac{5}{2}\theta^2. \]
    Consequently, in light of $\rho\tilde{d}< \frac{5}{2}\eta$, one obtains,
    \begin{align*}
        \Lambda(x) \geq \sup_{\theta\in[0, \frac{\eta}{2}]} \{\theta\rho x - \frac{5}{2}\theta^2 \}= \frac{\rho^2x^2}{10},  
    \end{align*}
    which establishes (\ref{eq: dsq bound}).
\end{proof}
\begin{proof}[Proof of Proposition \ref{prop:1}]
   Our proof has two key steps: firstly, we will prove that if there is no watermarking in the entire sequence, then \allowdisplaybreaks \begin{align}
       \max_{1\leq k \leq \lceil n/b\rceil} \frac{S_k}{b_n} \overset{\IP}{\to} \mu_0 \label{eq:in-prob-conv}.
   \end{align}
   Subsequently, we follow an argument similar to the proof of equation (29) in \cite{li2025statistical}, with crucial tweaks to accommodate the maximum over the block means. Let us first work towards (\ref{eq:in-prob-conv}). We note that a similar result (for the $p$-th moments) appears in Proposition E.2 in~\cite{deb2020measuring} but without proof. For the sake of completion, we provide an independent proof of (\ref{eq:in-prob-conv}) without invoking the aforementioned result. Whenever convenient, we will denote $b_n$ by $b$, suppressing the subscript $n$ which indicates the sequential nature of the choice of block length $b$ corresponding to the total number of tokens, $n$. Fix $\varepsilon>0$.
   Note that 
   \allowdisplaybreaks \begin{align}\label{eq:unif-bound}
   \IP_0(\max_{1\leq k \leq \lceil n/b\rceil} b^{-1}(S_k - \mu_0)> \varepsilon) \leq \frac{n}{b} \IP(b^{-1}(S_1- \mu_0) > \varepsilon),    
   \end{align}
   where for the last inequality we use that $S_k$'s are i.i.d. under $H_0$, i.e. no watermarking. Moving on, we apply the Fuk-Nagaev inequality (Corollary 4,~\cite{fuk1971probability}),
   \allowdisplaybreaks \begin{align}\label{eq:f-n}
       \IP_0(b^{-1}(S_1- \mu_0) > \varepsilon) \leq c_{1}\frac{b}{(b\varepsilon)^{p}}\IE_0[|X-\mu|^{p}] + \exp(-c_{2}\frac{b \varepsilon^2}{\sigma^2}), \ \sigma^2:=\IE_0[X^2],
   \end{align}
   where $c_{1}, c_{2}>0$ are constants depending solely on $p$. Note that $b_n / n^{1/p} \to \infty$, and hence $\frac{n}{(b\varepsilon)^{p}} \to 0$ as $n\to \infty$. On the other hand, $nb^{-1}\exp(-c_{2}\frac{b \varepsilon^2}{\sigma^2})\to 0$ as $n\to \infty$. Therefore, from (\ref{eq:unif-bound}) and (\ref{eq:f-n}), one obtains (\ref{eq:in-prob-conv}). 
   
   Now suppose that $\limsup_{n\to \infty}\mathcal{Q}/b > \mu_0$. Then there exists $\gamma>0$ and a strictly increasing sequence $\{n_k\}\subseteq\N$ such that $\mathcal{Q}_{n_k}/b_{n_k} > \mu_0+\gamma$ for all sufficiently large $k\in \N$. Let $S_{\ell}^{(n)}= \sum_{i=(\ell-1)b_n +1}^{\ell b_n \wedge n} X_{i}$ for $n \in \N$. Since (\ref{eq:in-prob-conv}) implies that \[\max_{1\leq \ell \leq \lceil n_k/b_{n_k}\rceil} \frac{S_{\ell}^{(n_k)}}{b_{n_k}} \overset{{\IP}}{\to} \mu_0, \text{ as $k\to\infty$},\]
   therefore, there exists a strictly increasing sub-sequence $\{n_{k_r}\}\subseteq \{n_k\}$ such that 
   \[ \max_{1\leq l \leq \lceil n_{k_r}/b_{n_{k_r}}\rceil} \frac{S_{\ell}^{(n_{k_r})}}{b_{n_{k_r}}} \overset{\text{a.s.}}{\to} \mu_0, \text{ as $r\to\infty$, and } \mathcal{Q}_{n_{k_r}}/b_{n_{k_r}} > \mu_0+\gamma \text{ for all sufficiently large $r$. } \]
   Therefore, by the dominated convergence theorem,
   \allowdisplaybreaks \begin{align}
       \alpha= \lim_{r\to \infty}\IP\left(\max_{1\leq l \leq \lceil n_{k_r}/b_{n_{k_r}}\rceil} \frac{S_{\ell}^{(n_{k_r})}}{b_{n_{k_r}}} > \frac{\mathcal{Q}_{n_{k_r}}}{b_{n_{k_r}}}\right) &\leq \lim_{r\to \infty}\IP\left(\max_{1\leq l \leq \lceil n_{k_r}/b_{n_{k_r}}\rceil} \frac{S_{\ell}^{(n_{k_r})}}{b_{n_{k_r}}} > \mu_0+\gamma\right) \nonumber \\
       &= \IP(\mu_0 > \mu_0+\gamma)=0,
   \end{align}
   which is a contradiction. Hence, $\limsup_{n\to \infty}\mathcal{Q}/b \leq \mu_0$. Very similarly, one can show $\liminf_{n\to \infty}\mathcal{Q}/b \geq \mu_0$, which completes the proof.
\end{proof} 

\begin{proof}[Proof of Proposition \ref{prop:2}]
  Under null, the events \(\{S_k>\mathcal{Q}\}\), \(1\le k\le \lceil n/b\rceil\), are independent, since the blocks are disjoint and the \(X_i\)'s are i.i.d. By the definition of \(\mathcal{Q}\),
\[
\prod_{k=1}^{\lceil n/b\rceil}\IP_0(S_k\le \mathcal{Q})
=
\IP_0\left(\max_{1\le k\le \lceil n/b\rceil}S_k\le \mathcal{Q}\right)
\ge 1-\alpha .
\]
Therefore, using \(x\le -\log(1-x)\),
\[
\sum_{k=1}^{\lceil n/b\rceil}\IP_0(S_k>\mathcal{Q})
\le
-\sum_{k=1}^{\lceil n/b\rceil}\log \IP_0(S_k\le \mathcal{Q})
\le
-\log(1-\alpha).
\]
Hence, by Markov's inequality,
\[
\IP_0\left(
\sum_{k=1}^{\lceil n/b\rceil}\mathbf 1\{S_k>\mathcal{Q}\}
\ge C_0\sqrt{\log n}
\right)
\le
\frac{-\log(1-\alpha)}{C_0\sqrt{\log n}}
\to 0.
\]
This completes the proof.
\end{proof}

\upd{Here, we collect some preparatory results helpful in proving Theorem \ref{thm:multiple-oracle}.}

\upd{\begin{lemma}\label{lemma:x2-ordering}
Let $\{I_1,\ldots,I_K\}$ and $\{J_1,\ldots,J_K\}$ be disjoint intervals, each ordered from
left to right. If
\[
x_2^{ij}=|J_i\cap I_j|>0,
\]
then
\[
x_2^{rk}=0 \qquad \text{for all } r>i,\ k<j.
\]
Equivalently, if $i_1<i_2$ and $x_2^{i_1j_1}>0$, $x_2^{i_2j_2}>0$, then $j_1\le j_2$.
\end{lemma}
\begin{proof}[Proof of Lemma \ref{lemma:x2-ordering}]
If $x_{ij}>0$ with $x_2^{rk}>0$, then $J_i$ intersects $I_j$ while the later interval
$J_r$ intersects the earlier interval $I_k$. Since $i < r$ and $k < j$, this is impossible
for disjoint ordered intervals, concluding the proof.
\end{proof}}

\upd{\begin{lemma}\label{lemma:good-cases}
   Consider the setting of Theorem \ref{thm:multiple-oracle}. For a candidate set of patches $\{J_1, J_2, \ldots, J_K\} \in \mathcal{I}_K$, let $x_1^k$, $x_2^{ij}$, $x_3^k$ and $x_4$ be defined as in the proof of Theorem \ref{thm:multiple-oracle}. Recall $I^\star$ from Assumption \ref{ass:min-sep}. If $x_2^{k\ell}>0$ for some $k\neq \ell$, then
\[
\sum_{r=1}^K (x_1^r+x_3^r)\ge I^\star.
\]
\end{lemma}}

\begin{proof}[\upd{Proof of Lemma \ref{lemma:good-cases}}]
\upd{We consider two cases.}

\upd{\textit{Case 1:} \textit{There exists $r\in[K]$ such that $J_r$ intersects at least two true intervals.}}

\upd{Since the intervals $\{I_j\}_{j=1}^K$ are disjoint and ordered, $J_r$ must then intersect two consecutive true intervals, say $I_s$ and $I_{s+1}$. As $J_r$ is itself an interval, it
must contain the whole gap between $I_s$ and $I_{s+1}$. Therefore,
\[
x_1^r
=
\Bigl|J_r\cap \Bigl(\bigcup_{j=1}^K I_j\Bigr)^c\Bigr|
\ge d(I_s,I_{s+1})
\ge I^\star.
\]
Hence
\[
\sum_{r=1}^K (x_1^r+x_3^r)\ge x_1^r\ge I^\star.
\]}

\upd{\textit{Case 2:} \textit{Every $J_r$ intersects at most one true interval.}}

\upd{Since $x_2^{k\ell}>0$ with $k\neq \ell$, the overlap matching is not diagonal. We claim that in
this case, some true interval is not intersected by any candidate interval. Indeed, if every
$I_j$ intersected some $J_r$, then, because there are exactly $K$ disjoint $I$-intervals and
exactly $K$ disjoint $J$-intervals, and each $J_r$ intersects at most one $I_j$, the overlap
relation would define a bijection between $\{J_1,\ldots,J_K\}$ and $\{I_1,\ldots,I_K\}$. Since
both families are ordered left-to-right, this bijection must preserve order, hence must be the
identity map. This contradicts the existence of an off-diagonal overlap $x_2^{k\ell}>0$ with
$k\neq \ell$.}

\upd{Therefore there exists $s\in[K]$ such that $I_s\cap J_r=\varnothing \text{ for all } r\in[K].$ Equivalently, $I_s\subset \Bigl(\bigcup_{r=1}^K J_r\Bigr)^c,$ so that
\[
x_3^s
=
\Bigl|I_s\cap \Bigl(\bigcup_{r=1}^K J_r\Bigr)^c\Bigr|
=
|I_s|
\ge I^\star.
\]
Thus, in this case too it holds
\[
\sum_{r=1}^K (x_1^r+x_3^r)\ge x_3^s\ge I^\star.
\]}
\end{proof}

\upd{\begin{lemma} \label{lemma:good-decompose}
Let $\{I_1,\ldots,I_K\}$ and $\{J_1,\ldots,J_K\}$ be two collections of disjoint intervals, and define $x_1^k, x_2^{ij}, x_3^k$ and $x_4$ as in the proof of Theorem \ref{thm:multiple-oracle}. Then, for every $k\in[K]$,
\[
|I_k \Delta J_k|
=
x_1^k + x_3^k + \sum_{\ell\neq k} x_2^{k\ell} + \sum_{\ell\neq k} x_2^{\ell k}.
\]
\end{lemma}}

\upd{\begin{proof}[Proof of Lemma \ref{lemma:good-decompose}]
Fix $k\in[K]$. Since the intervals $\{I_\ell\}_{\ell=1}^K$ are disjoint, we may decompose
\[
J_k
=
\Bigl(J_k \cap \Bigl(\bigcup_{\ell=1}^K I_\ell\Bigr)^c\Bigr)
\;\cup\;
\bigcup_{\ell=1}^K (J_k\cap I_\ell),
\]
disjointly. Hence
\[
J_k\setminus I_k
=
\Bigl(J_k \cap \Bigl(\bigcup_{\ell=1}^K I_\ell\Bigr)^c\Bigr)
\;\cup\;
\bigcup_{\ell\neq k}(J_k\cap I_\ell),
\]
again disjointly, so that $|J_k\setminus I_k|
=
x_1^k+\sum_{\ell\neq k}x_2^{k\ell}.$ Similarly, since the intervals $\{J_\ell\}_{\ell=1}^K$ are disjoint,
\[
I_k\setminus J_k
=
\Bigl(I_k \cap \Bigl(\bigcup_{\ell=1}^K J_\ell\Bigr)^c\Bigr)
\;\cup\;
\bigcup_{\ell\neq k}(I_k\cap J_\ell),
\]
disjointly. By definition of $x_2^{\ell, k}$, we obtain $|I_k\setminus J_k|
=
x_3^k+\sum_{\ell\neq k}x_2^{\ell k}.$ Finally, in light of $|I_k\Delta J_k|
=
|I_k\setminus J_k|+|J_k\setminus I_k|,$
combining above completes the proof.
\end{proof}}
\end{document}